\documentclass[11pt,letterpaper]{article}
\usepackage[in]{fullpage}  
\usepackage{float}
\usepackage{mathpazo}


\usepackage[utf8]{inputenc}
\usepackage[T1]{fontenc}
\usepackage[english]{babel}
\usepackage{hyperref}       
\usepackage{url}            
\usepackage{booktabs}       
\usepackage{amsfonts}       
\usepackage{nicefrac}       
\usepackage{microtype}      
\usepackage{amsmath, amsthm, amssymb}
\usepackage{fancybox}
\usepackage{graphicx}
\usepackage{float}
\usepackage{algorithm}
\usepackage{color}
\usepackage{tikz}
\usepackage{pgfplots}
\usepackage{array}
\usepackage{subcaption}
\usepackage[textsize=scriptsize]{todonotes}
\usepackage{enumitem}
\usepackage{wrapfig}
\usepackage{enumitem}
\usepackage{etoolbox}
\usepackage{diagbox}
\usepackage{comment}

\interfootnotelinepenalty=10000

\newtheoremstyle{slplain}
  {.4\baselineskip\@plus.1\baselineskip\@minus.1\baselineskip}
  {.3\baselineskip\@plus.1\baselineskip\@minus.1\baselineskip}
  {\itshape}
  {}
  {\bfseries}
  {.}
  { }
  {}
\theoremstyle{slplain} 

\newtheorem*{definition*}{Definition}
\newtheorem*{theorem*}{Theorem}
\newtheorem{theorem}{Theorem}[section]

\newtheorem{corollary}[theorem]{Corollary}

\makeatletter
\newtheorem*{rep@theorem}{\rep@title}
\newcommand{\newreptheorem}[2]{%
\newenvironment{rep#1}[1]{%
 \def\rep@title{#2 \ref{##1}}%
 \begin{rep@theorem}}%
 {\end{rep@theorem}}}
\makeatother

\newreptheorem{theorem}{Theorem}
\newreptheorem{lemma}{Lemma}
\newreptheorem{claim}{Claim}
\newreptheorem{corollary}{Corollary}
\newreptheorem{proposition}{Proposition}

\theoremstyle{definition}

\theoremstyle{plain} 

\numberwithin{equation}{section}

\newtheoremstyle{etplain}
  {.0\baselineskip\@plus.1\baselineskip\@minus.1\baselineskip}
  {.0\baselineskip\@plus.1\baselineskip\@minus.1\baselineskip}
  {\itshape}
  {}
  {\bfseries}
  {.}
  { }
  {}

\newcommand\eps{\epsilon}

\newcommand\loss{L}
\newcommand\hood{\mathcal{S}}
\newcommand{\adv}{\ensuremath{\text{adv}}}
\newcommand{\D}{\mathcal{D}}
\newcommand{\E}{\mathbb{E}}
\newcommand{\R}{\mathbb{R}}

\newcommand\AND{
    \end{tabular}\hfil\linebreak[4]\hfil%
    \begin{tabular}[t]{c}\ignorespaces%
}

\renewcommand\bar\overline
\renewcommand\epsilon\varepsilon

\title{Towards Deep Learning Models Resistant to Adversarial Attacks}

%

\author{
  Aleksander M\k{a}dry\thanks{Authors ordered alphabetically.}\\
  MIT\\
  \texttt{madry@mit.edu} 
  \and
  Aleksandar Makelov\footnotemark[1]\\
  MIT\\
  \texttt{amakelov@mit.edu}
  \and
  Ludwig Schmidt\footnotemark[1]\\
  MIT\\
  \texttt{ludwigs@mit.edu}
  \AND
  Dimitris Tsipras\footnotemark[1]\\
  MIT\\
  \texttt{tsipras@mit.edu}
  \and
  Adrian Vladu\footnotemark[1]\\
  MIT\\
  \texttt{avladu@mit.edu} 
}

\date{}

\begin{document}
\maketitle

\begin{abstract}
    Recent work has demonstrated that deep neural networks are vulnerable to
adversarial examples---inputs that are almost indistinguishable from natural data and yet classified incorrectly by the network.
In fact, some of the latest findings suggest that the existence of adversarial attacks may be an inherent weakness of deep learning models.
To address this problem, we study the adversarial robustness of neural networks through the lens of robust optimization.
This approach provides us with a broad and unifying view on much of the prior work on this topic.
Its principled nature also enables us to identify methods for both training and attacking neural networks that are reliable and, in a certain sense, universal.
In particular, they specify a concrete security guarantee that would protect against \emph{any} adversary.
These methods let us train networks with significantly improved resistance to a wide range of adversarial attacks.
They also suggest the notion of security against a \emph{first-order adversary} as a natural and broad security guarantee.
We believe that robustness against such well-defined classes of adversaries is an important stepping stone towards fully resistant deep learning models.
\footnote{Code and pre-trained models are available at
\url{https://github.com/MadryLab/mnist_challenge}
and \url{https://github.com/MadryLab/cifar10_challenge}.}

\end{abstract}

\section{Introduction}
Recent breakthroughs in computer
vision~\cite{krizhevsky2012imagenet,he2015delving} and natural language
processing~\cite{collobert2008unified} are bringing
trained classifiers into the center of security-critical systems. Important
examples include vision for autonomous cars, face recognition, and malware
detection. These developments make security aspects of machine learning
increasingly important. In particular, resistance to \emph{adversarially chosen
inputs} is becoming a crucial design goal. While trained models tend to be very
effective in classifying benign inputs, recent
work~\cite{biggio2013evasion,szegedy2014intriguing, nguyen2015deep} shows that
an adversary is often able to manipulate the input so that the model produces an
incorrect output.

This phenomenon has received particular attention in the context of deep neural
networks, and there is now a quickly growing body of work on this
topic~\cite{goodfellow2015explaining,fawzi2018analysis,sokolic2017robust,kurakin2017adversarial,papernot2016transferability,tramer2017space}.
Computer vision presents a particularly striking challenge: very small changes
to the input image can fool state-of-the-art neural networks with high
confidence~\cite{szegedy2014intriguing,dezfooli2016deepfool}.
This holds even when the benign example was classified
correctly, and the change is imperceptible to a human. Apart from the security
implications, this phenomenon also demonstrates that our current models are not
learning the underlying concepts in a robust manner. All these findings raise a
fundamental question:

\begin{center}
  \emph{How can we train deep neural networks that are robust to adversarial inputs?}
\end{center}

There is now a sizable body of work proposing various attack and defense
mechanisms for the adversarial setting. Examples include defensive
distillation~\cite{papernot2016distillation,carlini2017towards},
feature squeezing~\cite{xu2018feature,he2017adversarial}, and several other adversarial example detection approaches \cite{carlini2017adversarial}. These works constitute important first steps in exploring the realm of possibilities here.  They, however, 
do not offer a good understanding of the \emph{guarantees} they
provide. We can never be certain that a given attack finds the ``most
adversarial'' example in the context, or that a particular defense mechanism prevents the
existence of some well-defined \emph{class} of adversarial attacks. This makes it difficult to
navigate the landscape of adversarial robustness or to fully evaluate the possible
security implications.

In this paper, we study the adversarial robustness of neural networks through
the lens of robust optimization. We use a natural saddle point (min-max)
formulation to capture the notion of security against adversarial attacks in a
principled manner. This formulation allows us to be precise about the type of
security \emph{guarantee} we would like to achieve, i.e., the broad \emph{class}
of attacks we want to be resistant to (in contrast to defending only against
specific known attacks). The formulation also enables us to cast both
\emph{attacks} and \emph{defenses} into a common theoretical framework,
naturally encapsulating most prior work on adversarial examples. In
particular, adversarial training directly corresponds to optimizing this saddle
point problem. Similarly, prior methods for attacking neural networks correspond
to specific algorithms for solving the underlying constrained optimization problem.

Equipped with this perspective, we make the following contributions.

\begin{enumerate} 
\item We conduct a careful experimental study of the optimization landscape
corresponding to this saddle point formulation. Despite the non-convexity and
non-concavity of its constituent parts, we find that the underlying optimization
problem \emph{is} tractable after all. In particular, we provide strong evidence
that first-order methods can reliably solve this problem. We supplement these
insights with ideas from real analysis to further motivate projected gradient
descent (PGD)   as a universal ``first-order adversary'', i.e., the strongest
attack utilizing the local first order information about the network.

\item We explore the impact of network architecture on adversarial robustness
and find that model capacity plays an important role here. To reliably withstand
strong adversarial attacks, networks require a larger capacity
than for correctly classifying benign examples only. This shows that a robust
decision boundary of the saddle point problem can be significantly more
complicated than a decision boundary that simply separates the benign data
points.

\item Building on the above insights, we train networks on
MNIST~\cite{lecun1998mnist} and CIFAR10~\cite{krizhevsky2009learning}
that are robust to a wide range of adversarial attacks. Our approach is based on
optimizing the aforementioned saddle point formulation and uses PGD as a
reliable first-order adversary. Our best MNIST model achieves an accuracy of more
than 89\% against the {strongest} adversaries in our test suite. In particular, our MNIST network is even robust against
\emph{white box} attacks of an \emph{iterative} adversary. Our CIFAR10 model achieves an accuracy of 46\% against the same adversary. Furthermore, in case of the weaker {\em black box/transfer} attacks, our MNIST and CIFAR10 networks achieve the accuracy of more than 95\% and 64\%, respectively. (More detailed overview can be found in Tables \ref{fig:super_mnist} and\ref{fig:super-cifar}.) To the best of our knowledge, we are the first to achieve these levels
of robustness on MNIST and CIFAR10 against such a broad set of attacks.
\end{enumerate}

Overall, these findings suggest that secure neural networks are within reach. In
order to further support this claim, we invite the community to attempt attacks
against our MNIST and CIFAR10 networks in the form of a challenge. This will let
us evaluate its robustness more accurately, and potentially lead to novel attack
methods in the process. The complete code, along with the description of the
challenge, is available at \url{https://github.com/MadryLab/mnist_challenge}
and \url{https://github.com/MadryLab/cifar10_challenge}.

\section{An Optimization View on Adversarial Robustness}
\label{sec:minmax}
Much of our discussion will revolve around an optimization view of adversarial
robustness. This perspective not only captures the phenomena we want to study in
a precise manner, but will also inform our investigations. To this end, let us
consider a standard classification task with an underlying data distribution $\D$ 
over pairs of examples $x \in \R^d$ and corresponding labels $y \in [k]$. We also
assume that we are given a suitable loss function $\loss(\theta, x, y)$, for instance the
cross-entropy loss for a neural network. As usual, $\theta \in \R^p$ is the set of
model parameters. Our goal then is to find model parameters $\theta$ that minimize
the risk $\E_{(x, y) \sim \D}[\loss(x, y, \theta)]$.

Empirical risk minimization (ERM) has been tremendously successful as a recipe
for finding classifiers with small population risk. Unfortunately, ERM often
does not yield models that are robust to adversarially crafted examples
\cite{biggio2013evasion,szegedy2014intriguing}. Formally, there
are efficient algorithms (``adversaries'') that take an example $x$ belonging
to class $c_1$ as input and find examples $x^\adv$ such that $x^\adv$ is very
close to $x$ but the model incorrectly classifies $x^\adv$ as belonging to
class $c_2 \neq c_1$.

In order to \emph{reliably} train models that are robust to adversarial attacks,
it is necessary to augment the ERM paradigm appropriately.  Instead of resorting
to methods that directly focus on improving the robustness to specific attacks,
our approach is to first propose a concrete \emph{guarantee} that an
adversarially robust model should satisfy. We then adapt our training methods
towards achieving this guarantee.

The first step towards such a guarantee is to specify an \emph{attack model},
i.e., a precise definition of the attacks our models should be resistant to. For
each data point $x$, we introduce a set of allowed perturbations $\hood \subseteq \R^d$
that formalizes the manipulative power of the adversary. In image classification, we choose
$\hood$ so that it captures perceptual similarity between images. For instance,
the $\ell_\infty$-ball around $x$ has recently been studied as a natural notion for
adversarial perturbations~\cite{goodfellow2015explaining}. While we focus on robustness against
$\ell_\infty$-bounded attacks in this paper, we remark that more comprehensive notions
of perceptual similarity are an important direction for future research.

Next, we modify the definition of population risk $\E_\D[\loss]$ by incorporating the
above adversary. Instead of feeding samples from the distribution $\D$
directly into the loss $\loss$, we allow the adversary to perturb the input first.
This gives rise to the following saddle point problem, which is our central
object of study:

\begin{equation}
\label{eq:minmax}
	\min_\theta \rho(\theta),\quad \text{ where }\quad \rho(\theta) =
    \mathbb{E}_{(x,y)\sim\mathcal{D}}\left[\max_{\delta\in 
    \hood}
    \loss(\theta,x+\delta,y)\right] \; .
\end{equation}
Formulations of this type (and their finite-sample counterparts) have a long
history in robust optimization, going back to Wald~\cite{wald1945statistical}.
It turns out that this formulation is also particularly useful in our context.

First, this formulation gives us a unifying perspective that encompasses much
prior work on adversarial robustness. Our perspective stems from viewing the
saddle point problem as the composition of an \emph{inner maximization} problem
and an \emph{outer minimization} problem. Both of these problems have a natural
interpretation in our context. The inner maximization problem aims to find an
adversarial version of a given data point $x$ that achieves a high loss. This is
precisely the problem of attacking a given neural network. On the other hand,
the goal of the outer minimization problem is to find model parameters so that
the ``adversarial loss'' given by the inner attack problem is minimized.
This is precisely the problem of training a robust classifier using adversarial
training techniques.

Second, the saddle point problem specifies a clear goal that an ideal robust
classifier should achieve, as well as a quantitative measure of its robustness.
In particular, when the parameters $\theta$ yield a (nearly) vanishing risk, the
corresponding model is perfectly robust to attacks specified by our attack model.

Our paper investigates the structure of this saddle point problem in the context
of deep neural networks. These investigations then lead us to training
techniques that produce models with high resistance to a wide range of
adversarial attacks. Before turning to our contributions, we briefly review
prior work on adversarial examples and describe in more detail how it fits into
the above formulation.

\subsection{A Unified View on Attacks and Defenses}
Prior work on adversarial examples has focused on two main questions:
\begin{enumerate}
\item How can we produce strong adversarial examples, i.e., adversarial examples that fool
a model with high confidence while requiring only a small perturbation? 
\item How can we train a model so that there are no adversarial examples, or at least so that an adversary cannot find them easily?
\end{enumerate}

Our perspective on the saddle point problem~\eqref{eq:minmax} gives answers to both these
questions. On the attack side, prior work has proposed methods such as the Fast
Gradient Sign Method (FGSM)~\cite{goodfellow2015explaining} and multiple
variations of it~\cite{kurakin2017adversarial}. FGSM is an
attack for an $\ell_\infty$-bounded adversary and computes an adversarial example as
\begin{align*}
x + \varepsilon \operatorname{sgn}(\nabla_x \loss(\theta,x,y)).
\end{align*}

One can interpret this attack as a simple one-step scheme for maximizing the
inner part of the saddle point formulation. A more powerful adversary is the
multi-step variant, which is essentially projected gradient descent (PGD)
on the negative loss function
\begin{align*}
x^{t+1} = \Pi_{x+\hood} \left( x^t +
\alpha\operatorname{sgn}(\nabla_x \loss(\theta,x,y))\right).
\end{align*}

Other methods like FGSM with random perturbation have also been proposed
\cite{tramer2017space}. Clearly, all of these approaches can be viewed as specific attempts to
solve the inner maximization problem in~\eqref{eq:minmax}.

On the defense side, the training dataset is often augmented with adversarial
examples produced by FGSM. This approach also directly follows from
(\ref{eq:minmax})
when linearizing the inner maximization problem.
To solve the simplified robust optimization problem, we replace every
training example with its FGSM-perturbed counterpart. More sophisticated defense
mechanisms such as training against multiple adversaries can be seen as better, more exhaustive approximations of the inner maximization problem.

\section{Towards Universally Robust Networks}
Current work on adversarial examples usually focuses on specific defensive mechanisms, or on attacks against such defenses. An important feature of formulation \eqref{eq:minmax} is that attaining small adversarial loss gives a \emph{guarantee} that no allowed attack will fool the network. By definition, no adversarial perturbations are possible because the loss is small for \emph{all} perturbations allowed by our attack model. Hence, we now focus our attention on obtaining a good solution to \eqref{eq:minmax}.
 
Unfortunately, while the overall guarantee provided by the saddle point problem is evidently useful, it is not clear whether we can actually find a good solution in reasonable time. Solving the saddle point problem \eqref{eq:minmax} involves tackling both a non-convex outer minimization problem \emph{and} a non-concave inner maximization problem. One of our key contributions is demonstrating that, in practice, one \emph{can} solve the saddle point problem after all. In particular, we now discuss an experimental exploration of the structure given by the non-concave inner problem. We argue that the loss landscape corresponding to this problem has a surprisingly tractable structure of local maxima. This structure also points towards projected gradient descent as the “ultimate” first-order adversary. Sections \ref{sec:capacity} and \ref{sec:experiments} then show that the resulting trained networks are indeed robust against a wide range of attacks, provided the networks are sufficiently large.

\subsection{The Landscape of Adversarial Examples}
Recall that the inner problem corresponds to finding an adversarial example for
a given network and data point (subject to our attack model). As this problem
requires us to maximize a highly non-concave function, one would expect it to be
intractable. Indeed, this is the conclusion reached by prior work which then
resorted to linearizing the inner maximization problem
\cite{huang2015learning,shaham2018understanding}. As pointed out above, this
linearization approach yields well-known methods such as FGSM. While training
against FGSM adversaries has shown some successes, recent work also highlights
important shortcomings of this one-step
approach~\cite{tramer2017space}---slightly more sophisticated adversaries can
still find points of high loss.
 
To understand the inner problem in more detail, we investigate the landscape of local maxima for multiple models on MNIST and CIFAR10. The main tool in our experiments is projected gradient descent (PGD), since it is the standard method for large-scale constrained optimization. In order to explore a large part of the loss landscape, we re-start PGD from many points in the $\ell_\infty$ balls around data points from the respective evaluation sets.
 
Surprisingly, our experiments show that the inner problem \emph{is} tractable after all, at least from the perspective of first-order methods. While there are many local maxima spread widely apart within $x_i + \hood$, they tend to have very {\em well-concentrated} loss {\em values}. This echoes the folklore belief that training neural networks is possible because the loss (as a function of model parameters) typically has many local minima with very similar values.
 
Specifically, in our experiments we found the following phenomena:
 
\begin{itemize}[leftmargin=*]
  \item We observe that the loss achieved by the adversary increases in a fairly consistent way and plateaus rapidly when performing projected $\ell_\infty$ gradient descent for randomly chosen starting points inside $x+\hood$ (see Figure~\ref{fig:mnist_progress}).
\begin{figure}[htb]
\begin{center}
{\setlength\tabcolsep{-.0cm}
\begin{tabular}{c c c c c}
\includegraphics[height=.19\textwidth]{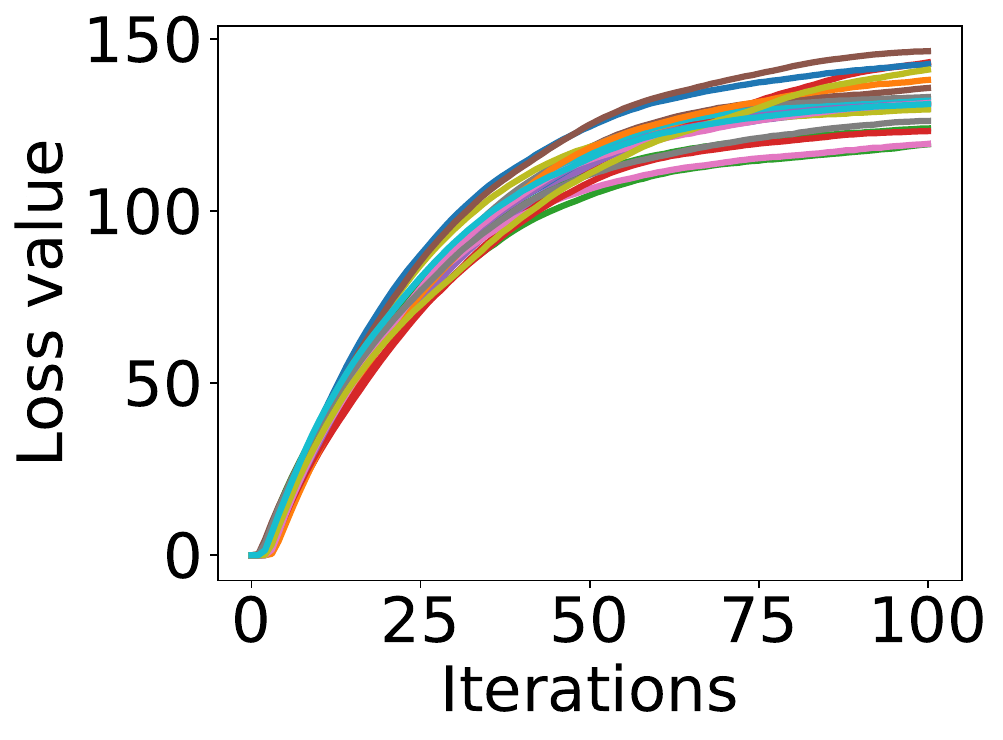} &
\includegraphics[height=.19\textwidth]{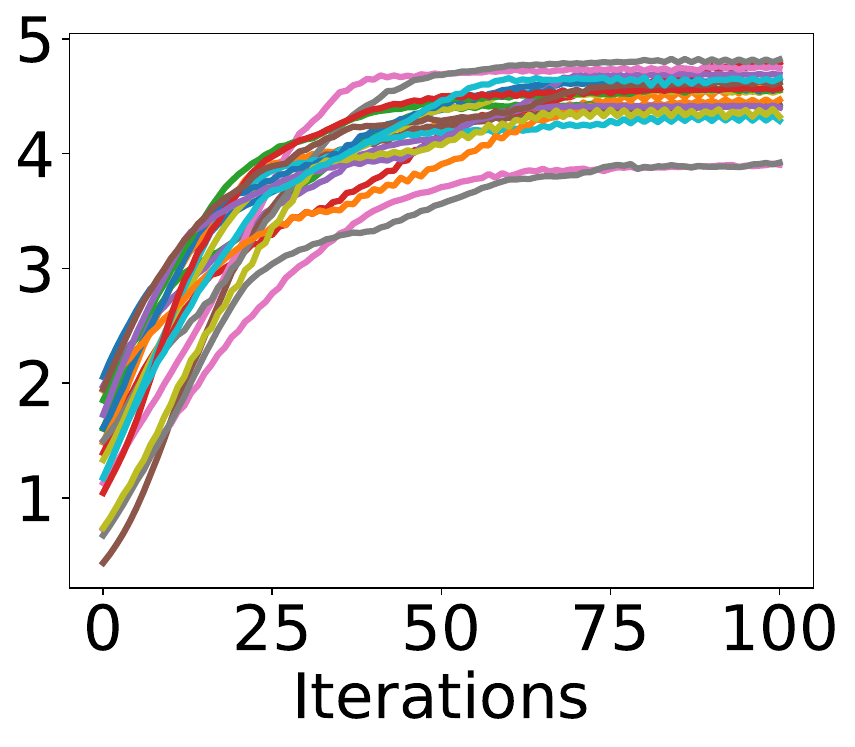} &
\includegraphics[height=.19\textwidth]{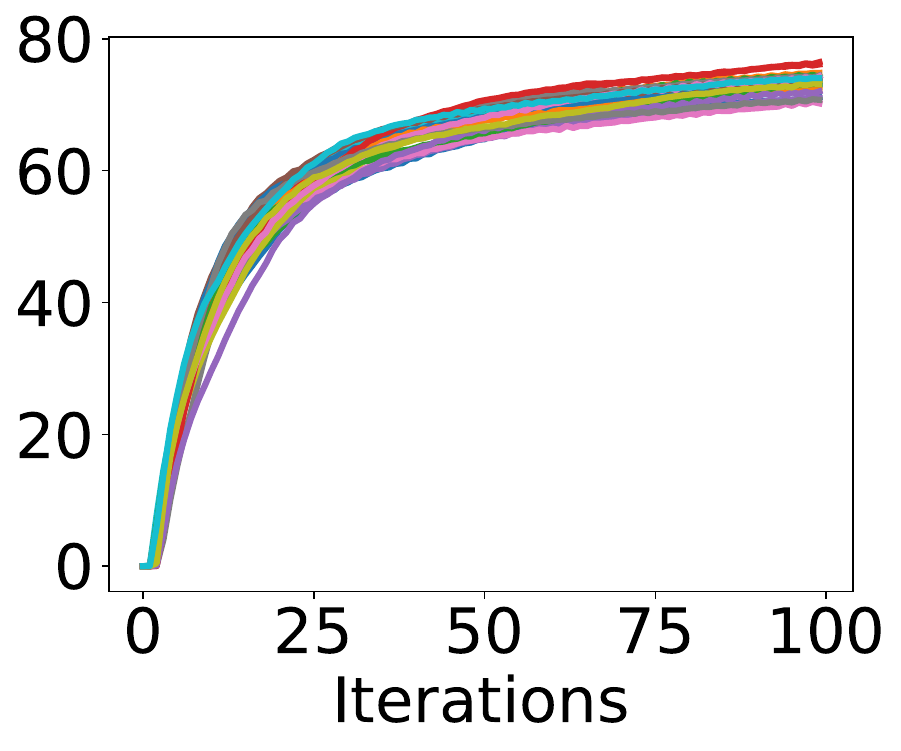} &
\includegraphics[height=.19\textwidth]{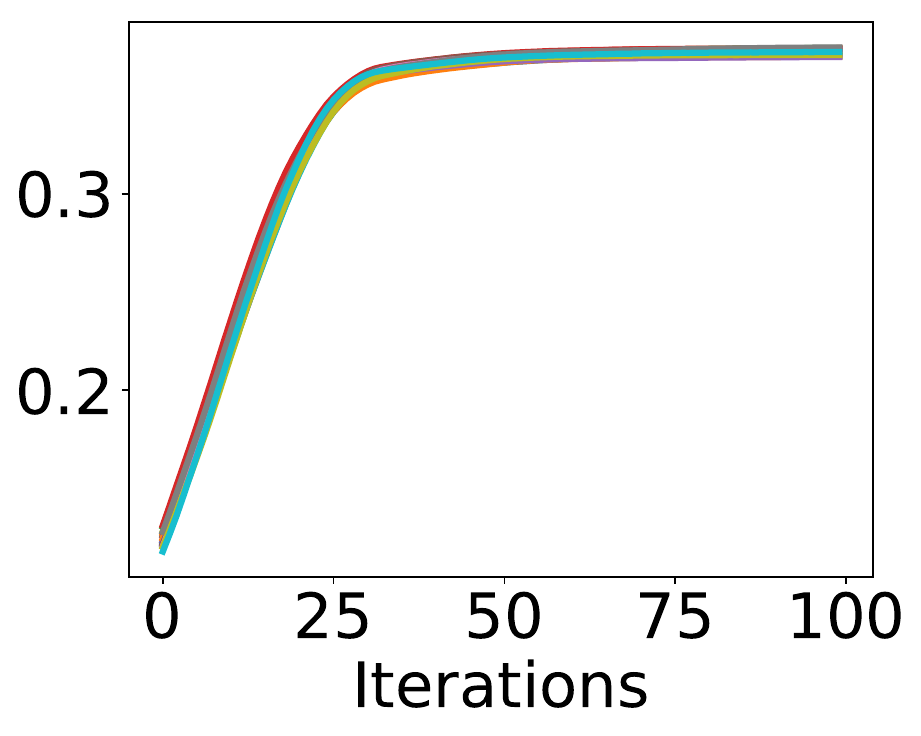}
\\
\hspace{.3cm} (a) MNIST  & (b) MNIST & (c) CIFAR10  & (d) CIFAR10 \\
\hspace{.3cm} Standard training  & Adversarial training & Natural training  &  Adversarial training \\
\end{tabular}}
\end{center}
\caption{Cross-entropy loss values while creating an adversarial example from the MNIST and CIFAR10 evaluation datasets.
The plots show how the loss evolves during 20 runs of projected gradient descent (PGD).
Each run starts at a uniformly random point in the $\ell_\infty$-ball around the same natural example (additional plots for different examples appear in Figure~\ref{fig:mnist_progress_appendix}).
The adversarial loss plateaus after a small number of iterations.
The optimization trajectories and final loss values are also fairly clustered, especially on CIFAR10.
Moreover, the final loss values on adversarially trained networks are
significantly smaller than on their standard counterparts.
}
\label{fig:mnist_progress}
\end{figure}

  \item Investigating the concentration of maxima further, we observe that over a large number of random restarts, the loss of the final iterate follows a well-concentrated distribution without extreme outliers (see Figure~\ref{fig:mnist_concentration}; we verified this concentration based on $10^5$ restarts).
\begin{figure}[htb]
\begin{center}
{\setlength\tabcolsep{-.0cm}
\begin{tabular}{c c c c c}
& & MNIST & &\\
\includegraphics[height=.17\textwidth]{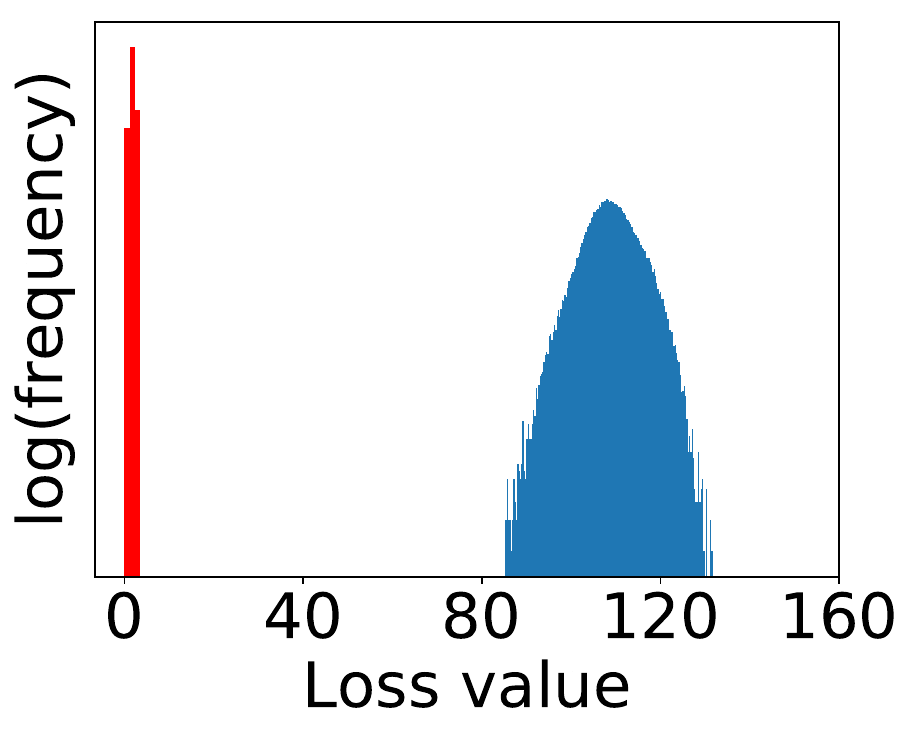} &
\includegraphics[height=.17\textwidth]{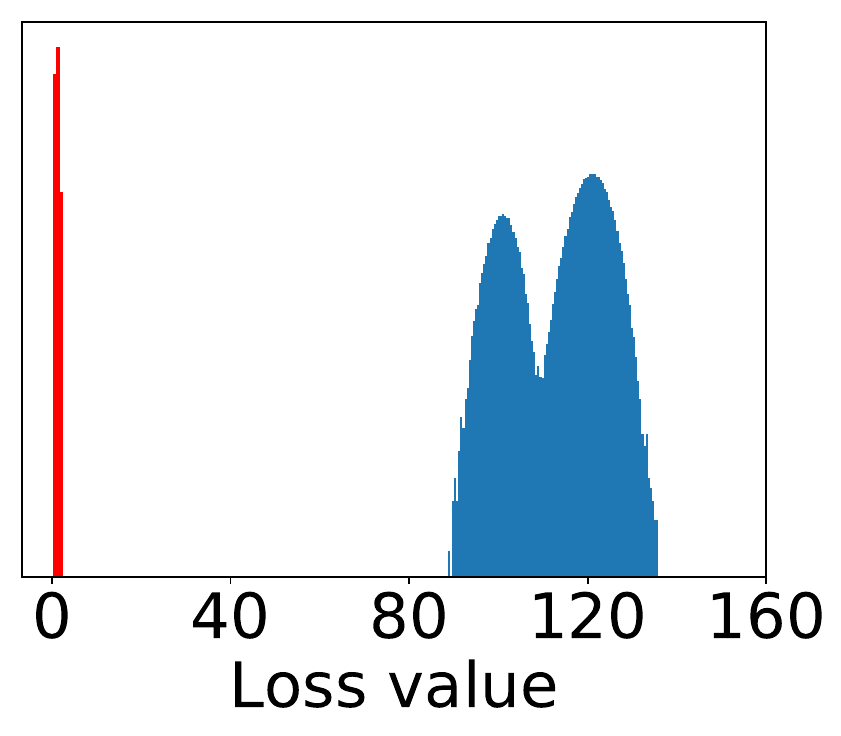} &
\includegraphics[height=.17\textwidth]{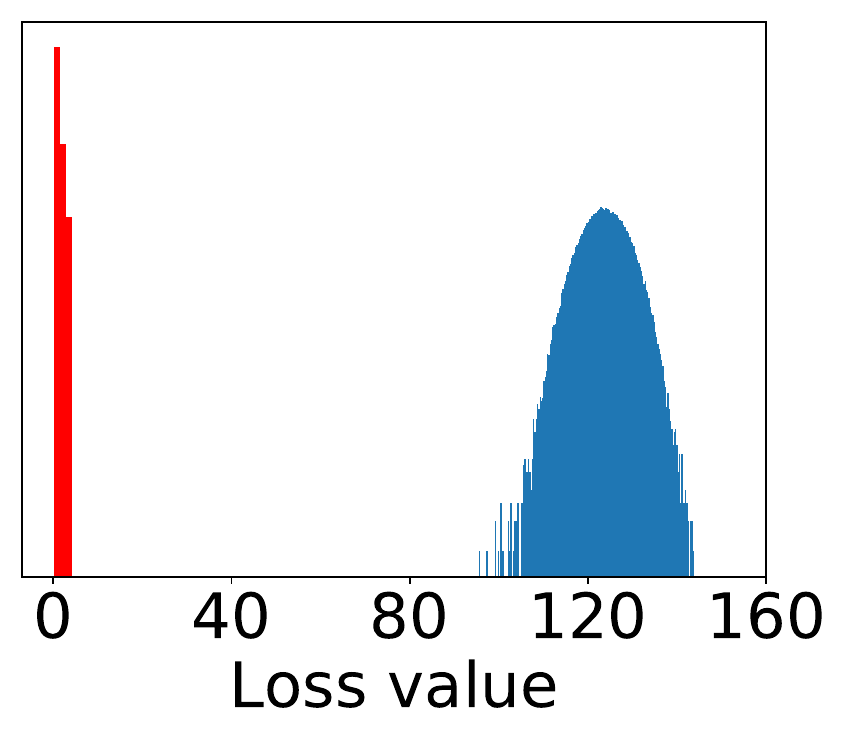} &
\includegraphics[height=.17\textwidth]{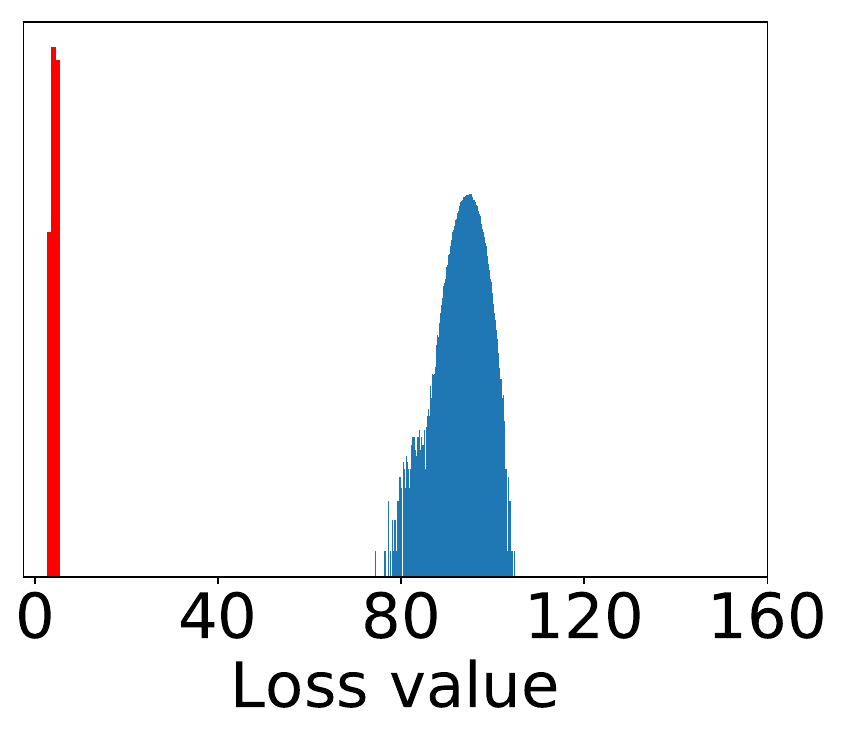} &
\includegraphics[height=.17\textwidth]{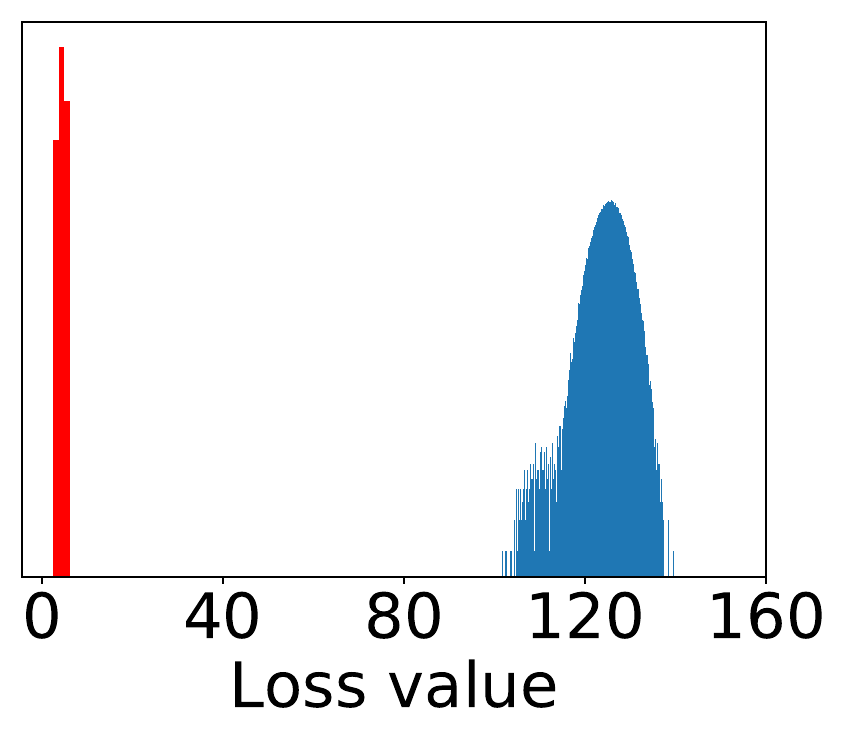} \\
& & CIFAR10 & &\\
\includegraphics[height=.17\textwidth]{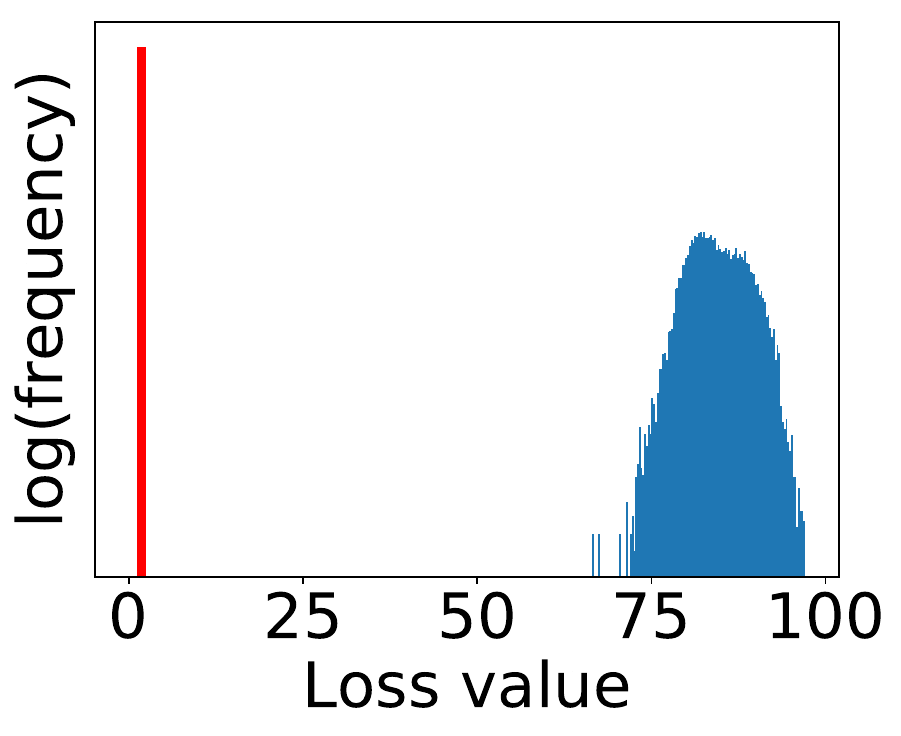} &
\includegraphics[height=.17\textwidth]{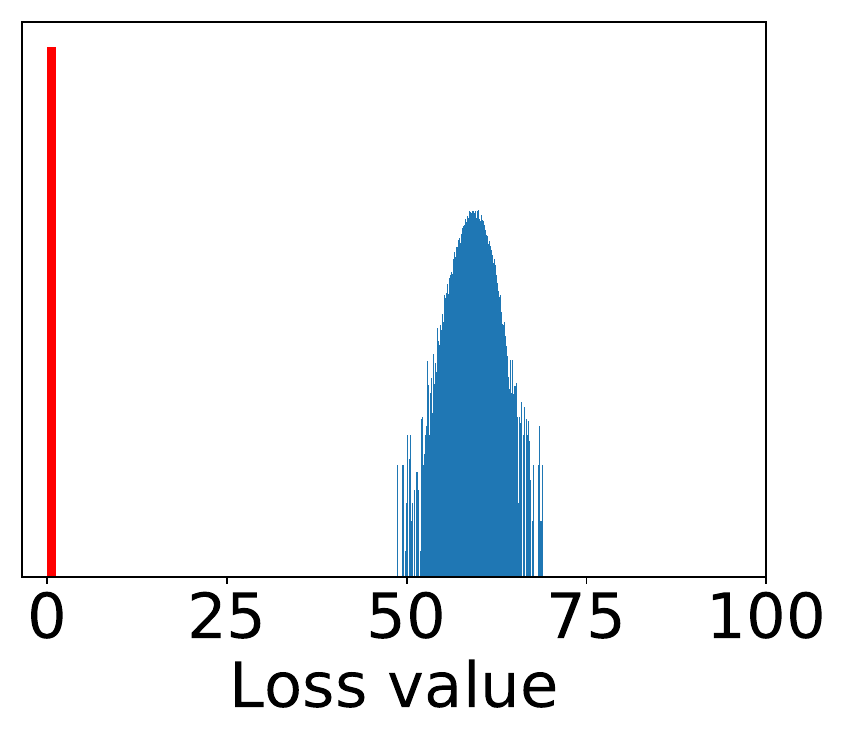} &
\includegraphics[height=.17\textwidth]{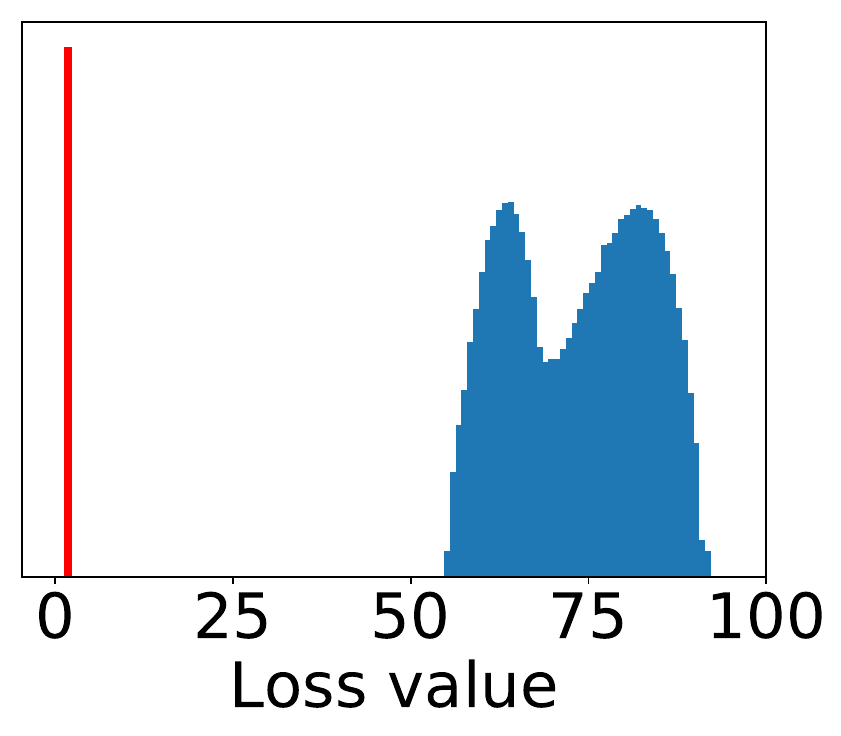} &
\includegraphics[height=.17\textwidth]{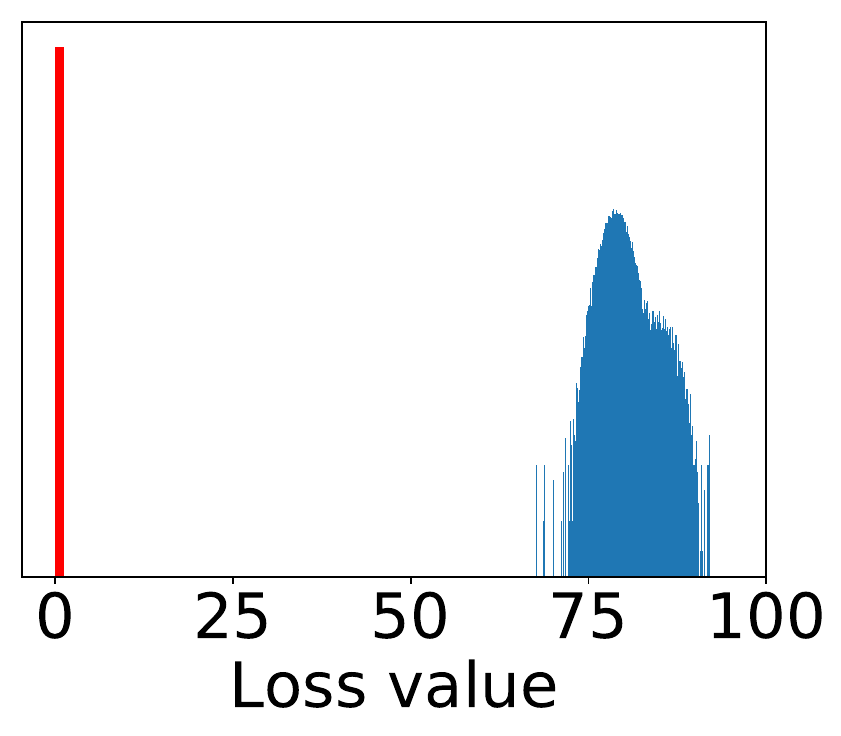} &
\includegraphics[height=.17\textwidth]{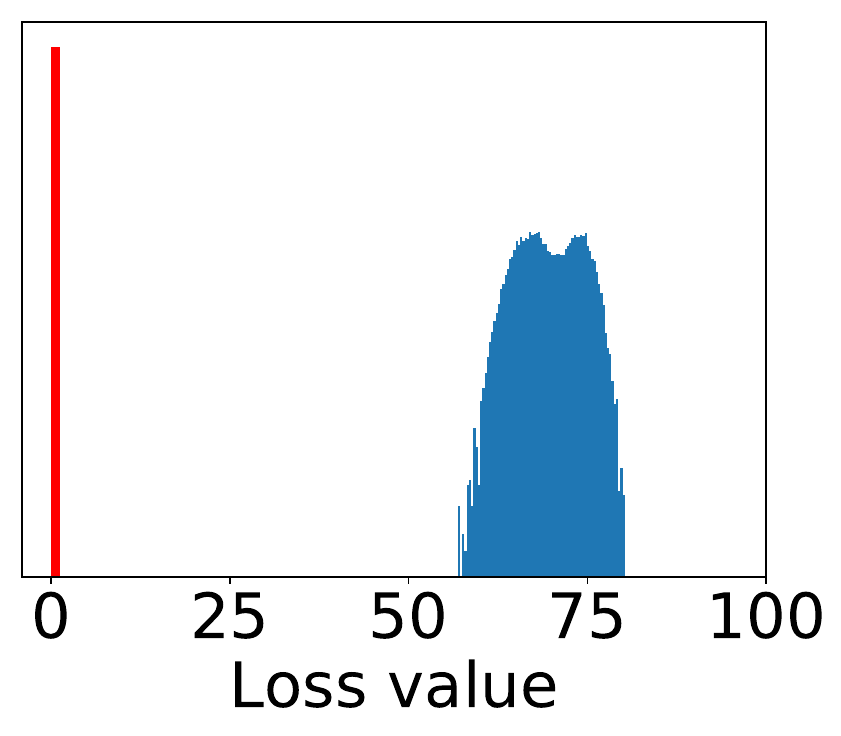} 
\end{tabular}}
\end{center}
\caption{Values of the local maxima given by the cross-entropy loss for five examples from the MNIST and CIFAR10 evaluation datasets.
For each example, we start projected gradient descent (PGD) from $10^5$
uniformly random points in the $\ell_\infty$-ball around the example and iterate PGD until the loss plateaus.
The blue histogram corresponds to the loss on a standard network, while the red histogram corresponds to the adversarially trained counterpart.
The loss is significantly smaller for the adversarially trained networks, and the final loss values are very concentrated without any outliers.
}
\label{fig:mnist_concentration}
\end{figure}

  \item To demonstrate that maxima are noticeably distinct, we also measured the
  $\ell_2$ distance and angles between all pairs of them and observed that distances are distributed close to the expected distance between two random points in the $\ell_\infty$ ball, and angles are close to $90^\circ$.
  Along the line segment between local maxima, the loss is convex, attaining its
  maximum at the endpoints and is reduced by a constant factor in the middle.
  Nevertheless, for the entire segment, the loss is considerably higher than that of a random point.
  \item Finally, we observe that the distribution of maxima suggests that the
      recently developed subspace view of adversarial examples is not fully
      capturing the richness of attacks~\cite{tramer2017space}. In particular, we observe adversarial perturbations with negative inner product with the gradient of the example, and deteriorating overall correlation with the gradient direction as the scale of perturbation increases.
\end{itemize}
 
All of this evidence points towards PGD being a ``universal'' adversary among first-order approaches, as we will see next.
 
\subsection{First-Order Adversaries}
Our experiments show that the local maxima found by PGD all have similar loss values, both for normally trained networks and adversarially trained networks. This concentration phenomenon suggests an intriguing view on the problem in which robustness against the PGD adversary yields robustness against \emph{all} first-order adversaries, i.e., attacks that rely only on first-order information. As long as the adversary only uses gradients of the loss function with respect to the input, we conjecture that it will not find significantly better local maxima than PGD. We give more experimental evidence for this hypothesis in Section \ref{sec:experiments}: if we train a network to be robust against PGD adversaries, it becomes robust against a wide range of other attacks as well.
 
Of course, our exploration with PGD does not preclude the existence of some isolated maxima with much larger function value. However, our experiments suggest that such better local maxima are \emph{hard to find} with first order methods: even a large number of random restarts did not find function values with significantly different loss values. Incorporating the computational power of the adversary into the attack model should be reminiscent of the notion of  \emph{polynomially bounded} adversary that is a cornerstone of modern cryptography. There, this classic attack model allows the adversary to only solve problems that require at most polynomial computation time. Here, we employ an \emph{optimization-based} view on the power of the adversary as it is more suitable in the context of machine learning. After all, we have not yet developed a thorough understanding of the computational complexity of many recent machine learning problems. However, the vast majority of optimization problems in ML is solved with first-order methods, and variants of SGD are the most effective way of training deep learning models in particular. Hence we believe that the class of attacks relying on first-order information is, in some sense, universal for the current practice of deep learning.
 
Put together, these two ideas chart the way towards machine learning models with \emph{guaranteed} robustness. If we train the network to be robust against PGD adversaries, it will be robust against a wide range of attacks that encompasses all current approaches.
 
In fact, this robustness guarantee would become even stronger in the context of
{\em black-box attacks}, i.e., attacks in which the adversary does not have a
direct access to the target network. Instead, the adversary only has less
specific information such as the (rough) model architecture and the training
data set. One can view this attack model as an example of ``zero order''
attacks, i.e., attacks in which the adversary has no direct access to the
classifier and is only able to evaluate it on chosen examples without gradient
feedback.
 
We discuss transferability in Section~\ref{sec:trans} of the appendix. We
observe that increasing network capacity and strengthening the adversary we
train against (FGSM or PGD training, rather than standard training) improves resistance against transfer attacks. Also, as expected, the resistance of our best models to such attacks tends to be significantly larger than to the (strongest) first order attacks.

\subsection{Descent Directions for Adversarial Training}
 
The preceding discussion suggests that the inner optimization problem can be
successfully solved by applying PGD.  In order to train adversarially robust
networks, we also need to solve the \emph{outer} optimization problem of the
saddle point formulation~\eqref{eq:minmax}, that is find
model parameters that minimize the ``adversarial loss'', the value of the inner
maximization problem.

In the context of training neural networks, the main method for minimizing the
loss function is Stochastic Gradient Descent (SGD). A natural way of computing
the gradient of the outer problem, $\nabla_\theta\rho(\theta)$, is computing the
gradient of the loss function at a maximizer of the inner problem. This
corresponds to replacing the input points by their corresponding adversarial
perturbations and normally training the network on the perturbed input.
A priori, it is not clear that this is a valid descent direction for the saddle
point problem. However, for the case of continuously differentiable functions,
Danskin's theorem---a classic theorem in optimization---states this is indeed
true and gradients at inner maximizers corresponds to descent directions for the
saddle point problem.

Despite the fact that the exact assumptions of Danskin's theorem do not hold for our
problem (the function is not continuously differentiable due to ReLU and
max-pooling units, and we are only computing approximate maximizers of the inner
problem), our experiments suggest that we can still use these gradients to
optimize our problem. By applying SGD using the gradient of the loss at
adversarial examples we can consistently reduce the loss of the saddle point
problem during training, as can be seen in  Figure~\ref{fig:train_loss}.
These observations suggest that we reliably optimize the saddle point
formulation~\eqref{eq:minmax} and thus train robust classifiers.
We formally state Danskin's theorem and describe how it applies to our problem in
Section~\ref{sec:app_math} of the Appendix.
  
\section{Network Capacity and Adversarial Robustness}
\label{sec:capacity}

Solving the problem from Equation~\eqref{eq:minmax} successfully is not sufficient
to guarantee robust and accurate classification.
We need to also argue that the \emph{value} of the problem (i.e. the final loss we
achieve against adversarial examples) is small, thus providing guarantees for the
performance of our classifier.  In particular, achieving a very small value corresponds to a perfect classifier, which is robust to adversarial inputs.

For a fixed set $\hood$ of possible perturbations, the value of the problem is entirely dependent on the architecture of the
classifier we are learning. Consequently, the architectural capacity of the model becomes a major
factor affecting its overall performance. At a high level, classifying examples
in a robust way requires a stronger classifier, since the presence of adversarial examples
changes the decision boundary of the problem to a more complicated one (see
Figure~\ref{fig:separation} for an illustration).

\begin{figure}[htb]
\begin{center}
\includegraphics[height=.25\textwidth]{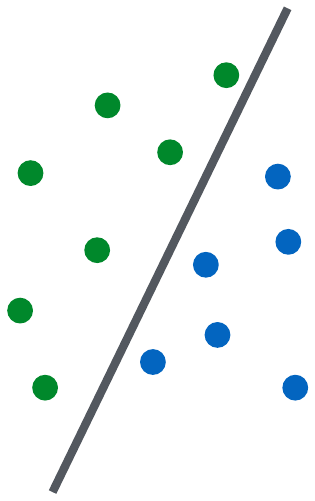}
\hspace{2cm}
\includegraphics[height=.25\textwidth]{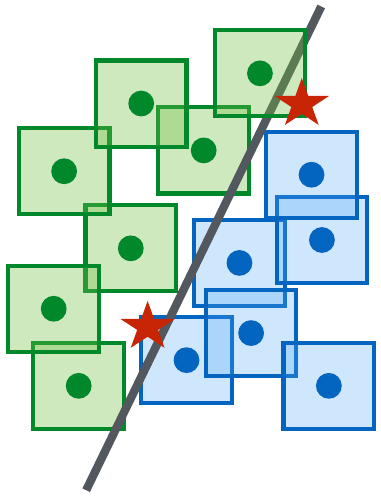}
\hspace{2cm}
\includegraphics[height=.25\textwidth]{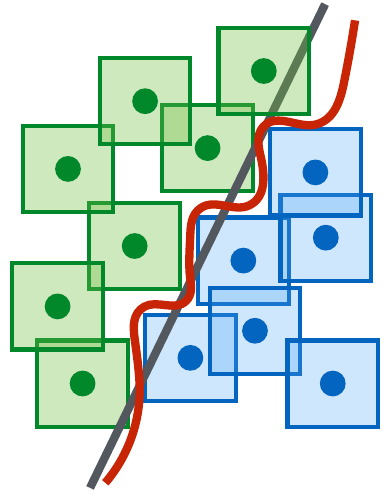}
\end{center}
\caption{A conceptual illustration of standard vs.\ adversarial decision boundaries.
Left: A set of points that can be easily separated with a simple (in this case, linear) decision boundary.
Middle: The simple decision boundary does not separate the $\ell_\infty$-balls (here, squares) around the data points. Hence there are adversarial examples (the red stars) that will be misclassified.
Right: Separating the $\ell_\infty$-balls requires a significantly more complicated decision boundary.
The resulting classifier is robust to adversarial examples with bounded $\ell_\infty$-norm perturbations.}
\label{fig:separation}
\end{figure}

Our experiments verify that capacity is crucial for robustness, as well as for the ability to successfully train against strong adversaries.  For the MNIST dataset, we consider a simple convolutional network and study how its behavior changes against different adversaries as we keep doubling the size of network (i.e. double the number of convolutional filters and the size of the fully connected layer).  The initial network has a convolutional layer with 2 filters, followed by another convolutional layer with 4 filters, and a fully connected hidden layer with 64 units. Convolutional layers are followed by $2\times 2$ max-pooling layers and adversarial examples are constructed with $\epsilon=0.3$. The results are in Figure~\ref{fig:capacity_plots}. 

For the CIFAR10 dataset, we used a ResNet model~\cite{he2016deep}. We
performed data augmentation using random crops and flips, as well as per image
standarization.  To increase the capacity, we modified the network incorporating
wider layers by a factor of 10. This results in a network with 5 residual units
with (16, 160, 320, 640) filters each. This network can achieve an accuracy of
95.2\% when trained with natural examples. Adversarial examples were constructed
with $\epsilon=8$. Results on capacity experiments appear in Figure~\ref{fig:capacity_plots}. 

We observe the following phenomena:

\paragraph{Capacity alone helps.} We observe that increasing the capacity of the network when training using only natural examples (apart from increasing accuracy on these examples) increases the robustness against one-step perturbations. This effect is greater when considering adversarial examples with smaller $\epsilon$.

\paragraph{FGSM adversaries don't increase robustness (for large $\eps$).} When
training the network using adversarial examples generated with the FGSM, we observe
that the network overfits to these adversarial examples. This behavior is known
as label leaking~\cite{kurakin2017adversarial} and stems from the fact that the adversary
produces a very restricted set of adversarial examples that the network can
overfit to.  These networks have poor performance on natural examples and don't
exhibit any kind of robustness against PGD adversaries.
For the case of smaller $\eps$ the loss is ofter linear enough in the
$\ell_\infty$-ball around natural examples, that FGSM finds adversarial examples
close to those found by PGD thus being a reasonable adversary to train against.

\paragraph{Weak models may fail to learn non-trivial classifiers.}
In the case of small capacity networks, attempting to train against a strong
adversary (PGD) prevents the network from learning anything meaningful. The
network converges to always predicting a fixed class, even though it could
converge to an accurate classifier through standard training. The small capacity of the
network forces the training procedure to sacrifice performance on natural
examples in order to provide any kind of robustness against adversarial inputs.

\paragraph{The value of the saddle point problem decreases as we increase the capacity.}
Fixing an adversary model, and training against it, the value of \eqref{eq:minmax} drops as capacity increases, indicating the the model can fit the adversarial examples increasingly well.

\paragraph{More capacity and stronger adversaries decrease transferability.}
Either increasing the capacity of the network, or using a stronger method for the inner optimization
problem reduces the effectiveness of transferred adversarial inputs.
We validate this experimentally by observing that the correlation between gradients from the source
and the transfer network, becomes less significant as capacity increases.
We describe our experiments in Section~\ref{sec:trans} of the appendix.
\begin{figure}[htb]
\begin{center}
{\setlength\tabcolsep{.06cm}
\begin{tabular}{c c c c}
 & \multicolumn{2}{c}{MNIST} &  \\
\begin{tikzpicture}[scale=0.4] 
    \begin{semilogxaxis}[
        xlabel=Capacity scale,
        ylabel=Accuracy,
        ylabel near ticks,
        xtick = data,
        log basis x = 2,
        log ticks with fixed point,
        grid = both,
        label style={font=\Huge},
        tick label style={font=\LARGE},
        every axis plot/.append style={ultra thick},
    ]
    \addplot plot coordinates {
        (1,  98.3)
        (2,  98.9)
        (4,  99.1)
        (8, 99.2)
        (16, 99.2)
    };
    \addplot plot coordinates {
        (1,  0.6)
        (2,  2.3)
        (4,  4.0)
        (8,  3.6)
        (16, 3.9)
    };
    \addplot plot coordinates {
        (1,  0.0)
        (2,  0.0)
        (4,  0.0)
        (8,  0.0)
        (16, 0.0)
    };
    \legend{}

    \end{semilogxaxis}
\end{tikzpicture} &
\begin{tikzpicture}[scale=0.4] 
    \begin{semilogxaxis}[
        xlabel=Capacity scale,
        ylabel near ticks,
        xtick = data,
        log basis x = 2,
        label style={font=\Huge},
        tick label style={font=\LARGE},
        log ticks with fixed point,
        grid = both,
        every axis plot/.append style={ultra thick},
    ]
    \addplot+[error bars/.cd,y dir=both,y explicit] plot coordinates {
        (1,  68.9) +- (0, 8.4)
        (2,  33.4) +- (0, 7.0)
        (4,  79.2) +- (0, 5.9)
        (8,  88.8) +- (0, 8.2)
        (16, 93.7) +- (0, 3.1)
    };
    \addplot+[error bars/.cd,y dir=both,y explicit] plot coordinates {
        (1,  94.8) +- (0, 3.4)
        (2,  99.5) +- (0, 0.3)
        (4,  99.8) +- (0, 0.1)
        (8,  99.3) +- (0, 0.5)
        (16, 99.6) +- (0, 0.3)
    };
    \addplot+[error bars/.cd,y dir=both,y explicit] plot coordinates {
        (1,  0.0)
        (2,  0.0)
        (4,  0.0) 
        (8,  0.0)
        (16, 0.0) 
    };
    \legend{}

    \end{semilogxaxis}
\end{tikzpicture} &
\begin{tikzpicture}[scale=0.4] 
    \begin{semilogxaxis}[
        xlabel=Capacity scale,
        ylabel near ticks,
        xtick = data,
        ymin = -10,
        label style={font=\Huge},
        tick label style={font=\LARGE},
        log ticks with fixed point,
        log basis x = 2,
        every axis plot/.append style={ultra thick},
        grid = both
    ]
    \addplot plot coordinates {
        (1,  11.4)
        (2,  11.4)
        (4,  97.6)
        (8,  98.0)
        (16, 98.9)
    };
    \addplot plot coordinates {
        (1,  11.4)
        (2,  11.4)
        (4,  89.6)
        (8, 93.9)
        (16, 95.5)
    };
    \addplot plot coordinates {
        (1,  11.4)
        (2,  11.4)
        (4,  87.7)
        (8, 91.5)
        (16, 93.3)
    };
    \legend{}

    \end{semilogxaxis}
\end{tikzpicture} &
\begin{tikzpicture}[scale=0.4] 
    \begin{loglogaxis}[
        xlabel=Capacity scale,
        ylabel=Average loss,
        ylabel near ticks,
        label style={font=\Huge},
        tick label style={font=\LARGE},
        log ticks with fixed point,
        xtick = data,
        log basis x = 2,
        grid = both,
        every axis plot/.append style={ultra thick},
        legend style = {at={(1.3,1.1)}, font=\Huge}
    ]
    \addplot plot coordinates {
        (1,  0.0537)
        (2,  0.0394)
        (4,  0.0371)
        (8,  0.0364)
        (16, 0.0351)
    };
    \addplot+[error bars/.cd,y dir=both,y explicit] plot coordinates {
        (1,  0.1647) +- (0, 0.1039)
        (2,  0.0175) +- (0, 0.0095)
        (4,  0.0067) +- (0, 0.0016)
        (8,  0.0206) +- (0, 0.0147)
        (16, 0.0110) +- (0, 0.0088)
    };
    \addplot plot coordinates {
        (1,  2.3011)
        (2,  2.3012)
        (4,  0.3650)
        (8,  0.2504)
        (16, 0.1770)
    };
    \legend{Natural\\ FGSM\\ PGD\\}

    \end{loglogaxis}
\end{tikzpicture} \\

 & \multicolumn{2}{c}{CIFAR10} &  \\
 
 \begin{tabular}{cc|c}
  & Simple & Wide \\ \hline
	Natural\quad & 92.7\% & 95.2\% \\
	FGSM\quad  & 27.5\% & 32.7\% \\
	PGD\quad  & 0.8\% & 3.5\% \\
\end{tabular}
 &
  \begin{tabular}{c|c}
  Simple & Wide \\ \hline
	 87.4\% & 90.3\% \\
	 90.9\% & 95.1\% \\
	 0.0\% & 0.0\% \\
\end{tabular}
 &
  \begin{tabular}{c|c}
   Simple & Wide \\ \hline
	 79.4\% & 87.3\% \\
	 51.7\% & 56.1\% \\
	 43.7\% & 45.8\% \\
\end{tabular}
 &
   \begin{tabular}{c|c}
   Simple & Wide \\ \hline
	 0.00357 & 0.00371 \\
	 0.0115 & 0.00557 \\
	 1.11 & 0.0218 \\
\end{tabular}
\\	
(a) Standard training & (b) FGSM training & (c) PGD training & (d) Training Loss

\end{tabular}}
\end{center}
\caption{The effect of network capacity on the performance of the network. We
trained MNIST and CIFAR10 networks of varying capacity on: (a) natural examples, (b)
with FGSM-made adversarial examples, (c) with PGD-made adversarial examples. In the first three plots/tables of each
dataset, we show how the standard and adversarial accuracy changes with respect
to capacity for each training regime. In the final plot/table, we show the value
of the cross-entropy loss on the adversarial examples the networks were trained
on. This corresponds to the value of our saddle point formulation
(\ref{eq:minmax}) for different sets of allowed perturbations.}
\label{fig:capacity_plots}
\end{figure}
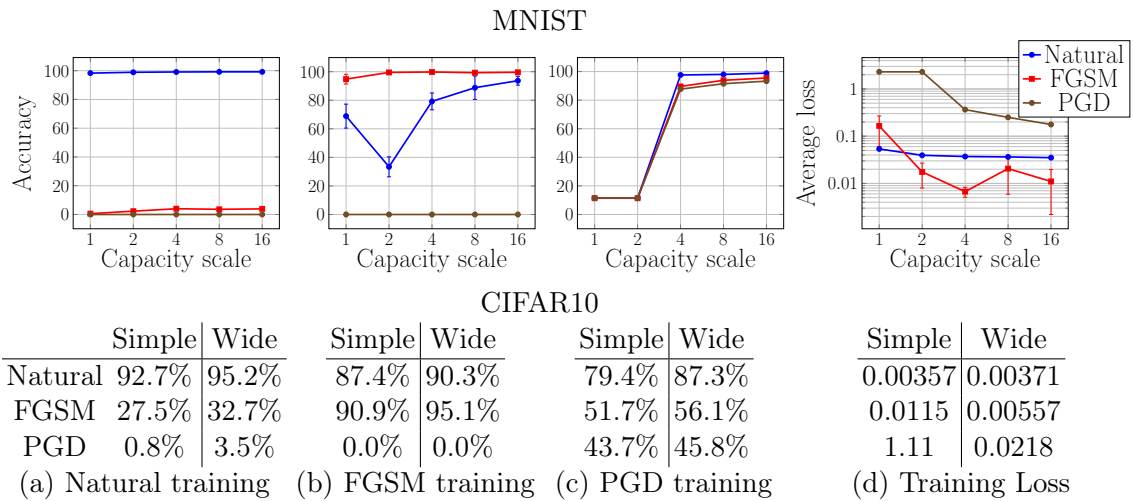

\section{Experiments: Adversarially Robust Deep Learning Models}
\label{sec:experiments}

Following the understanding of the problem we developed in previous sections, we
can now apply our proposed approach to train robust classifiers.  As our experiments
so far demonstrated, we need to focus on two key elements: a)
train a sufficiently high capacity network, b) use the strongest possible
adversary.

For both MNIST and CIFAR10, the adversary of choice will be projected gradient
descent (PGD)
starting from a random perturbation around the natural example. This corresponds to our
notion of a "complete" first-order adversary, an
algorithm that can efficiently maximize the loss of an example using only first order
information. Since we are training the model for multiple epochs, there is no
benefit from restarting PGD multiple times per batch---a new start will be chosen the
next time each example is encountered.

When training against that adversary, we observe a steady decrease in the
training loss of adversarial examples, illustrated in
Figure~\ref{fig:train_loss}. This behavior indicates that we are indeed
successfully solving our original optimization problem during training.

\begin{figure}[htb]
\begin{center}
{\setlength\tabcolsep{.06cm}
\begin{tabular}{c c}
\includegraphics[height=.25\textwidth]{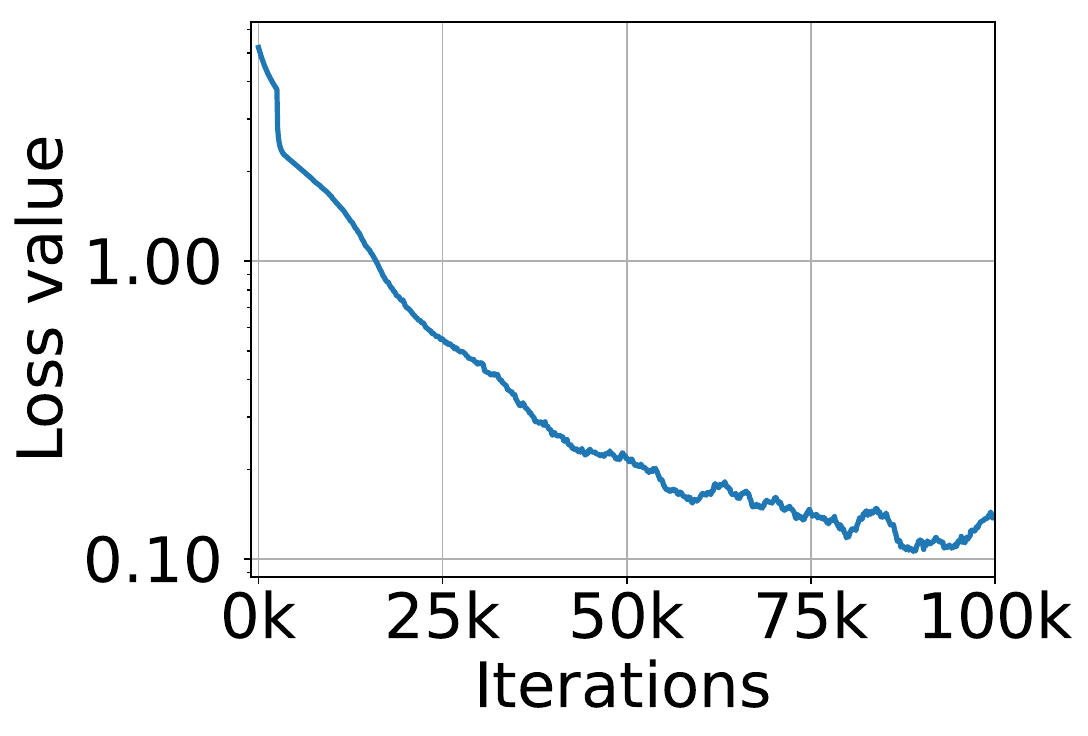} &
\includegraphics[height=.25\textwidth]{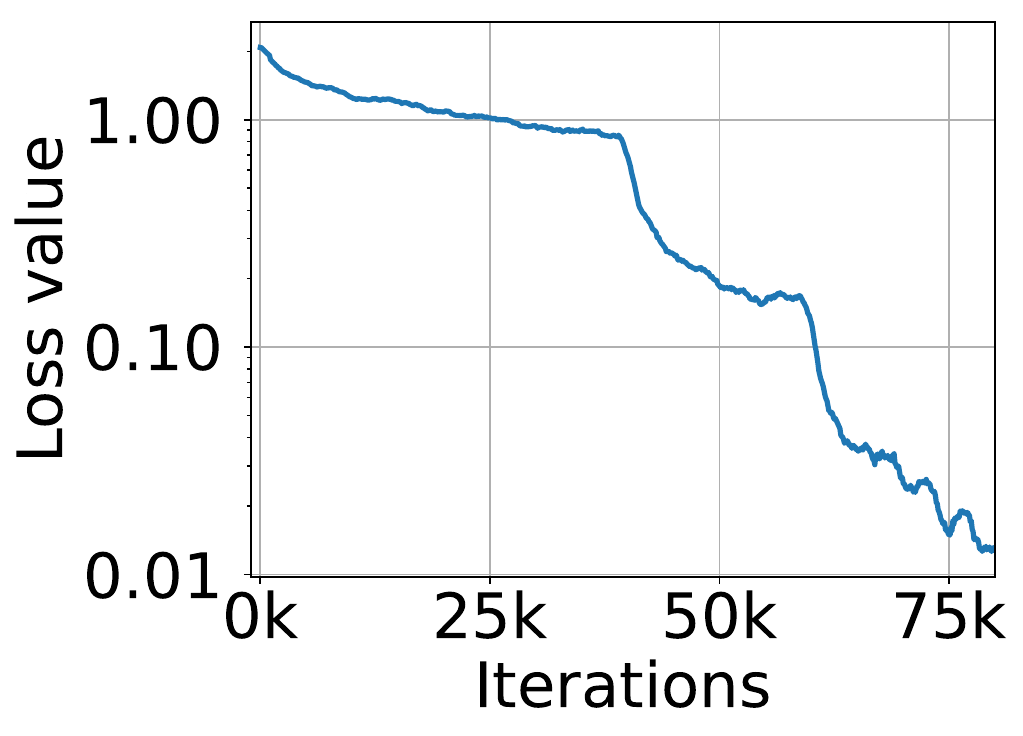}\\
(a) MNIST & (b) CIFAR10
\end{tabular}}
\end{center}
\caption{Cross-entropy loss on adversarial examples during training. The plots
show how the adversarial loss on training examples evolves during training the MNIST and
CIFAR10 networks against a PGD adversary.  The sharp drops in the CIFAR10 plot
correspond to decreases in training step size.
These plots illustrate that we can consistently reduce the value of the inner
problem of the saddle point formulation (\ref{eq:minmax}), thus producing an
increasingly robust classifier.
}
\label{fig:train_loss}
\end{figure}

We evaluate the trained models against a range of adversaries. We illustrate our
results in Table~\ref{fig:super_mnist} for MNIST and Table~\ref{fig:super-cifar}
for CIFAR10. 
The adversaries we consider are:
\begin{itemize}
\item White-box attacks with PGD for a different number of of iterations and
restarts, denoted by source A.
\item White-box attacks with PGD using the Carlini-Wagner (CW) loss
function~\cite{carlini2017towards} (directly optimizing the difference
between correct and incorrect logits).
This is denoted as CW, where the
corresponding attack with a high confidence parameter ($\kappa=50$) is denoted
as CW+.
\item Black-box attacks from an independently trained copy of the network,
denoted A'.
\item Black-box attacks from a version of the same network trained only on
natural examples, denoted $A_{nat}$.
\item Black-box attacks from a different convolution architecture, denoted B,
described in Tramer et al. 2017~\cite{tramer2017space}.
\end{itemize}

\paragraph{MNIST.} We run
40 iterations of projected gradient descent as our adversary, with a step size of
$0.01$ (we choose to take gradient steps in the $\ell_\infty$-norm, i.e. adding
the sign of the gradient, since this makes the choice of the step size simpler).
We train and evaluate against perturbations of size $\eps = 0.3$.
We use a network consisting of two convolutional
layers with 32 and 64 filters respectively, each followed by $2\times 2$
max-pooling, and a fully connected layer of size 1024.
When trained with natural examples, this network reaches 99.2\% accuracy on the
evaluation set. However, when evaluating on examples perturbed with FGSM the
accuracy drops to 6.4\%.
The resulting adversarial accuracies are reported in
Table~\ref{fig:super_mnist}.
Given that the resulting MNIST model is very robust to $\ell_\infty$-bounded
adversaries, we investigated the learned parameters in order to understand how they affect adversarial robustness. The results of the investigation are presented in Appendix~\ref{app:inspection}.
In particular, we found that the first convolutional layer of the network is
learning to threshold input pixels while other weights tend to be sparser.

\begin{table}[htb]
\begin{center}
\setlength{\tabcolsep}{0.2em}
\begin{tabular}{| p{2cm} | p{1.5cm} | p{1.5cm} | p{1.5cm} || r |}
\hline
Method     & Steps & Restarts & Source & Accuracy \\
\hline
\hline
Natural    & -       & -        & -      &  98.8\% \\ 
\hline
FGSM       & -       & -        & A      &  95.6\% \\
\hline
PGD        & 40      & 1        & A      &  93.2\% \\
\hline
PGD        & 100     & 1        & A      &  91.8\% \\
\hline
PGD        & 40      & 20       & A      &  90.4\% \\
\hline
PGD        & 100     & 20       & A      &  {\bf 89.3\%} \\
\hline
Targeted   & 40      & 1        & A      &  92.7\% \\
\hline
CW         & 40      & 1        & A      & 94.0\%\\
\hline
CW+        & 40      & 1        & A      & 93.9\%\\
\hline
\hline
FGSM       & -       & -        & A'     &  96.8\% \\
\hline
PGD        & 40      & 1        & A'     &  96.0\% \\
\hline
PGD        & 100     & 20       & A'     &  {\bf 95.7\%} \\
\hline
CW         & 40      & 1        & A'     & 97.0\%\\
\hline
CW+        & 40      & 1        & A'     & 96.4\%\\
\hline
\hline
FGSM       & -       & -        & B      &  {\bf 95.4\%} \\
\hline
PGD        & 40      & 1        & B      &  96.4\% \\
\hline
CW+        & -       & -        & B      &  95.7\% \\
\hline
\end{tabular}
\end{center}
\caption{MNIST: Performance of the adversarially trained network against
different adversaries for $\eps=0.3$. For each model of attack we show the most
successful attack with bold.  The source networks used for the attack are: the
network itself (A) (white-box attack), an indepentenly initialized and trained
copy of the network (A'), architecture B from \cite{tramer2017space} (B).}
\label{fig:super_mnist}
\end{table}

\paragraph{CIFAR10.} For the CIFAR10 dataset, we use the two architectures described in
\ref{sec:capacity} (the original ResNet and its $10\times$ wider variant). We trained
the network against a PGD adversary with $\ell_\infty$ projected gradient descent again, this time using 7 steps of size 2, and a total $\epsilon=8$. For our hardest adversary we chose 20 steps with the same settings, since other hyperparameter choices didn't offer a significant decrease in accuracy. The results of our experiments appear in Table~\ref{fig:super-cifar}.

\begin{table}
\begin{center}
\setlength{\tabcolsep}{0.2em}
\begin{tabular}{| p{2cm} | p{1.5cm} | p{1.5cm} || r |}
\hline
Method    & Steps & Source & Accuracy \\
\hline
\hline
Natural    & -       & -      &  87.3\% \\ 
\hline
FGSM       & -       & A      &  56.1\% \\
\hline
PGD        & 7       & A      &  50.0\% \\
\hline
PGD        & 20      & A      &  {\bf 45.8}\% \\
\hline
CW         & 30      & A      &  46.8\% \\
\hline
\hline
FGSM       & -       & A'     &  67.0\% \\
\hline
PGD        & 7       & A'     &  {\bf 64.2\%} \\
\hline
CW         & 30      & A'     &  78.7\% \\
\hline
FGSM       & -       & $\textnormal{A}_{nat}$ & 85.6\% \\
\hline
PGD        & 7     & $\textnormal{A}_{nat}$ & 86.0\% \\
\hline
\end{tabular}
\end{center}
\caption{CIFAR10: Performance of the adversarially trained network against
different adversaries for $\eps=8$. For each model of attack we show the most
effective attack in bold.  The source networks considered for the attack are:
the network itself (A) (white-box attack), an independtly initialized and
trained copy of the network (A'), a copy of the network trained on natural
examples
($\textnormal{A}_{nat}$).}
\label{fig:super-cifar}
\end{table}

The adversarial robustness of our network is significant, given the power of
iterative adversaries, but still far from satisfactory. We believe that these
results can be improved by further pushing along these directions, and training
networks of larger capacity.

\paragraph{Resistance for different values of $\eps$ and $\ell_2$-bounded attacks.}
In order to perform a broader evaluation of the adversarial robustness of our
models, we run two additional experiments. On one hand, we investigate
the resistance to $\ell_\infty$-bounded attacks for different values of $\eps$.
On the other hand, we examine the resistance of our model to attacks that are
bounded in $\ell_2$-norm as opposed to $\ell_\infty$-norm.
In the case of $\ell_2$-bounded PGD we take steps in the gradient direction (not the
sign of it) and normalize the steps to be of fixed norm to facilitate step size tuning.
For all PGD attacks, we use $100$ steps and set the step size to be
$2.5\cdot\epsilon / 100$ to ensure that we can reach the boundary of the
$\epsilon$-ball from any starting point within it (and still allow for movement
on the boundary).
Note that the models were training against $\ell_\infty$-bounded attacks with
the original value of $\eps=0.3$, for MNIST, and $\eps=8$ for CIFAR10.
The results appear in Figure~\ref{fig:acc_vs_eps}.

\begin{figure}
\begin{center}
\begin{tabular}{cccc}
\begin{tikzpicture}[scale=0.4]
    \begin{axis}[
        xlabel=$\eps$,
        ylabel=Accuracy,
        ylabel near ticks,
        grid = both,
        label style={font=\Huge},
        tick label style={font=\LARGE},
        every axis plot/.append style={ultra thick},
    ]
    \addplot plot coordinates {
        (0.00, 98.53)
        (0.03, 98.23)
        (0.06, 97.76)
        (0.09, 97.40)
        (0.12, 97.05)
        (0.15, 96.51)
        (0.18, 96.07)
        (0.21, 95.67)
        (0.24, 95.01)
        (0.27, 94.26)
        (0.30, 93.22)
        (0.33, 75.16)
        (0.36, 28.16)
        (0.39,  4.14)
        (0.42,  0.22)
    };
    \addlegendentry{PGD adv};
    \addplot plot coordinates {
        (0.00, 98.53)
        (0.03, 98.53)
        (0.06, 98.53)
        (0.09, 98.50)
        (0.12, 98.40)
        (0.15, 98.32)
        (0.18, 98.24)
        (0.21, 98.08)
        (0.24, 98.08)
        (0.27, 98.08)
        (0.30, 98.00)
        (0.33, 97.68)
        (0.36, 97.04)
        (0.39, 96.16)
        (0.42, 95.36)
    };
    \addlegendentry{DBA adv};
    \addplot plot coordinates {
        (0.00, 99.17)
        (0.03, 94.08)
        (0.06, 69.40)
        (0.09, 22.92)
        (0.12,  2.48)
        (0.15,  0.12)
        (0.18,  0.00)
        (0.21,  0.01)
        (0.24,  0.00)
        (0.27,  0.00)
        (0.30,  0.00)
        (0.33,  0.00)
        (0.36,  0.00)
        (0.39,  0.00)
        (0.42,  0.00)
    };
    \addlegendentry{PGD nat};
    \addplot plot coordinates {
        (0.00, 99.17)
        (0.03, 77.97)
        (0.06, 68.49)
        (0.09, 60.26)
        (0.12, 49.91)
        (0.15, 36.97)
        (0.18, 24.71)
        (0.21, 15.37)
        (0.24,  9.03)
        (0.27,  5.26)
        (0.30,  3.00)
        (0.33,  1.57)
        (0.36,  0.77)
        (0.39,  0.43)
        (0.42,  0.17)
    };
    \addlegendentry{DBA nat};
    \legend{};
    \draw [red, dashed,very thick] (300,-90) -- (300,1090);

    \end{axis}
\end{tikzpicture} &

\begin{tikzpicture}[scale=0.4]
    \begin{axis}[
        xlabel=$\eps$,
        ylabel near ticks,
        grid = both,
        label style={font=\Huge},
        tick label style={font=\LARGE},
        every axis plot/.append style={ultra thick},
        legend style={at={(1.1, 1.1)},anchor=north east}
    ]
    \addplot plot coordinates {
        (0.0, 98.53)
        (0.5, 96.01)
        (1.0, 94.25)
        (1.5, 92.25)
        (2.0, 89.49)
        (2.5, 86.39)
        (3.0, 82.34)
        (3.5, 73.66)
        (4.0, 56.57)
        (4.5, 33.05)
        (5.0, 14.87)
        (5.5,  5.34)
        (6.0,  1.68)
    };
    \addlegendentry{PGD adv. trained};
    \addplot plot coordinates {
        (0.0, 98.53)
        (0.5, 96.03)
        (1.0, 83.10)
        (1.5, 43.12)
        (2.0, 10.65)
        (2.5,  1.22)
        (3.0,  0.05)
        (3.5,  0.00)
        (4.0,  0.00)
        (4.5,  0.00)
        (5.0,  0.00)
        (5.5,  0.00)
        (6.0,  0.00)
    };
    \addlegendentry{DBA adv. trained};
    \addplot plot coordinates {
        (0.0, 99.17)
        (0.5, 93.47)
        (1.0, 64.14)
        (1.5, 27.74)
        (2.0, 18.47)
        (2.5, 15.15)
        (3.0, 12.22)
        (3.5,  9.78)
        (4.0,  7.11)
        (4.5,  6.03)
        (5.0,  4.41)
        (5.5,  3.11)
        (6.0,  2.31)
    };
    \addlegendentry{PGD standard};
    \addplot plot coordinates {
        (0.0, 99.17)
        (0.5, 65.40)
        (1.0, 43.49)
        (1.5, 18.31)
        (2.0,  5.43)
        (2.5,  1.57)
        (3.0,  0.14)
        (3.5,  0.03)
        (4.0,  0.00)
        (4.5,  0.00)
        (5.0,  0.00)
        (5.5,  0.00)
        (6.0,  0.00)
    };
    \addlegendentry{DBA standard};

    \end{axis}
\end{tikzpicture} &
\begin{tikzpicture}[scale=0.4]
    \begin{axis}[
        xlabel=$\eps$,
        ylabel near ticks,
        grid = both,
        label style={font=\Huge},
        tick label style={font=\LARGE},
        every axis plot/.append style={ultra thick},
    ]
    \addplot plot coordinates {
(0.0, 87.3)
(1.0, 83.7)
(2.0, 79.10000000000001)
(3.0, 74.0)
(4.0, 68.4)
(5.0, 63.2)
(6.0, 57.9)
(7.0, 52.6)
(8.0, 47.3)
(9.0, 42.6)
(10.0, 38.0)
(11.0, 34.2)
(12.0, 30.599999999999998)
(13.0, 27.700000000000003)
(14.0, 24.9)
(15.0, 22.7)
(16.0, 20.7)
(17.0, 18.8)
(18.0, 17.1)
(19.0, 15.6)
(20.0, 14.099999999999998)
(21.0, 13.100000000000001)
(22.0, 12.0)
(23.0, 11.1)
(24.0, 10.100000000000001)
(25.0, 9.5)
(26.0, 8.7)
(27.0, 8.1)
(28.0, 7.3999999999999995)
(29.0, 6.6000000000000005)
(30.0, 6.1)

    };
    \draw [red, dashed,very thick] (80,-110) -- (80,1100);
    \legend{}

    \end{axis}
\end{tikzpicture} &

\begin{tikzpicture}[scale=0.4]
    \begin{axis}[
        xlabel=$\eps$,
        ylabel near ticks,
        grid = both,
        label style={font=\Huge},
        tick label style={font=\LARGE},
        every axis plot/.append style={ultra thick},
    ]
    \addplot plot coordinates {
(0.0, 87.3)
(5.0, 82.0)
(10.0, 73.6)
(15.0, 63.2)
(20.0, 55.5)
(25.0, 45.7)
(30.0, 36.8)
(35.0, 30.1)
(40.0, 25.7)
(45.0, 21.9)
(50.0, 18.6)
(55.0, 15.9)
(60.0, 13.5)
(65.0, 11.4)
(70.0, 9.6)
(75.0, 8.0)
(80.0, 7.3)
(85.0, 6.1)
(90.0, 5.2)
(95.0, 5.0)

    };
    \legend{}

    \end{axis}
\end{tikzpicture}

\\
(a) MNIST, $\ell_\infty$-norm& (b) MNIST, $\ell_2$-norm&
(c) CIFAR10, $\ell_\infty$-norm & (d) CIFAR10, $\ell_2$-norm
\end{tabular}
\end{center}
\caption{Performance of our adversarially trained networks against PGD
adversaries of different strength.
The MNIST and CIFAR10 networks were trained
against $\eps=0.3$ and $\eps=8$ PGD $\ell_\infty$ adversaries respectively (the
training $\eps$ is denoted with a red dashed lines in the $\ell_\infty$ plots).
In the case of the MNIST adversarially trained networks, we also evaluate the
performance of the Decision Boundary Attack (DBA)~\cite{brendel2017decision}
with 2000 steps and
PGD on standard and adversarially trained models.
We observe that for $\eps$ less or equal to the value used during training, the
performance is equal or better. For MNIST there is a sharp drop shortly after.
Moreover, we observe that the performance of PGD on the MNIST $\ell_2$-trained
networks is poor and significantly overestimates the robustness of the model.
This is potentially due to the threshold filters learned by the model masking
the loss gradients (the decision-based attack does not utilize gradients).}
\label{fig:acc_vs_eps}
\end{figure}

We observe that for smaller $\eps$ than the one used during training the models
achieve equal or higher accuracy, as expected.
For MNIST, we notice a large drop in robustness for slightly large $\eps$
values, potentially due to the fact that the threshold operators learned are
tuned to the exact value of $\eps$ used during training
(Appendix~\ref{app:inspection}).
In contrast, the decay for the case of CIFAR10 is smoother.

For the case of $\ell_2$-bounded attacks on MNIST, we observe that PGD is unable
to find adversarial examples even for quite large values of $\eps$, e.g.,
$\eps=4.5$.
To put this value of $\eps$ into perspective, we provide a sample of corresponding adversarial examples in Figure~\ref{fig:l2_mnist_examples} of Appendix~\ref{sec:omitted}.
We observe that these perturbations are significant enough that they would
change the ground-truth label of the images and it is thus unlikely that our
models are actually that robust.
Indeed, subsequent work~\cite{li2018second,schott2018towards} has found that PGD
is in fact overestimating the $\ell_2$-robustness of this model.
This behavior is potentially due to the fact that the learned threshold filters
(Appendix~\ref{app:inspection}) mask the gradient, preventing PGD from
maximizing the loss.
Attacking the model with a decision-based attack~\cite{brendel2017decision}
which does not rely on model gradients reveals that the model is significantly
more brittle against $\ell_2$-bounded attacks.
Nevertheless, the $\ell_\infty$-trained model is still more robust to $\ell_2$
attacks compared to a standard model.
 
\section{Related Work}
Due to the large body of work on adversarial
examples we focus only on the most related papers here.
Before we compare our contributions, we remark that robust optimization has been
studied outside deep learning for multiple decades (see~\cite{ben2009robust} for
an overview of this field).
We also want to note that the study of adversarial ML predates the widespread
use of deep neural networks~\cite{dalvi2004adversarial,globerson2006nightmare}
(see~\cite{biggio2018wild} for an overview of earlier work).

Adversarial training was introduced in~\cite{goodfellow2015explaining}, however
the adversary utilized was quite weak---it relied on linearizing the loss around
the data points. As a result, while these models were robust against this
particular adversary, they were completely vulnerable to slightly more
sophisticated adversaries utilizing iterative attacks.

Recent work on adversarial training on ImageNet also observed that the model
capacity is important for adversarial training \cite{kurakin2017adversarial}.
In contrast to this paper, we find that training against multi-step methods
(PGD) \emph{does} lead to resistance against such adversaries.

In~\cite{huang2015learning} and~\cite{shaham2018understanding} a version of the min-max optimization problem is also considered for adversarial training.
There are, however, three important differences between the formerly mentioned result and the present paper.
Firstly, the authors claim that the inner maximization problem can be difficult to solve, whereas we explore the loss surface in more detail and find that randomly re-started projected gradient descent often converges to solutions with comparable quality. This shows that it is possible to obtain sufficiently good solutions to the inner maximization problem, which offers good evidence that deep neural network can be immunized against adversarial examples.
Secondly, they consider only one-step adversaries, while we work with multi-step methods.
Additionally, while the experiments in \cite{shaham2018understanding} produce promising results, they are only evaluated against FGSM. However, FGSM-only evaluations are not fully reliable. One evidence for that is that \cite{shaham2018understanding} reports 70\% accuracy for $\eps=0.7$, but any
adversary that is allowed to perturb each pixel by more than 0.5 can construct a
uniformly gray image, thus fooling any classifier.

A more recent paper~\cite{tramer2017space} also explores the transferability
phenomenon. This exploration focuses mostly on the region around natural
examples where the loss is (close to) linear. When large perturbations are 
allowed, this region does not give a complete picture of the adversarial
landscape. This is confirmed by our experiments, as well as pointed out by
\cite{tramer2017space}.
  

\section{Conclusion}

Our findings provide evidence that deep neural networks can be made resistant to adversarial attacks. As our theory and experiments indicate, we can design reliable adversarial training methods. One of the key insights behind this is the unexpectedly regular structure of the underlying optimization task: even though the relevant problem corresponds to the maximization of a highly non-concave function with many distinct local maxima, their \emph{values} are highly concentrated.
Overall, our findings give us hope that adversarially robust deep learning
models may be within current reach. 

For the MNIST dataset, our networks are very robust, achieving high accuracy for
a wide range of powerful $\ell_\infty$-bound adversaries and large perturbations. Our experiments on CIFAR10
have not reached the same level of performance yet. However, our results already show
that our techniques lead to significant increase in the robustness of the
network.  We believe that further exploring this direction will lead to
adversarially robust networks for this dataset.

\section*{Acknowledgments}
Aleksander M\k{a}dry, Aleksandar Makelov, and Dimitris Tsipras were supported by the NSF Grant No. 1553428, a Google Research Fellowship, and a Sloan Research Fellowship. Ludwig Schmidt was supported by a Google PhD Fellowship. Adrian Vladu was supported by the NSF Grants No. 1111109 and No. 1553428.

We thank Wojciech Matusik for kindly providing us with computing resources to perform this work.

\bibliographystyle{plain}
\bibliography{bibliography/bib.bib}
\appendix
\section{Statement and Application of Danskin's Theorem}\label{sec:app_math}
Recall that our goal is to minimize the value of the saddle point problem
\begin{equation*}
	\min_\theta \rho(\theta),\quad \text{ where }\quad \rho(\theta) =
    \mathbb{E}_{(x,y)\sim\mathcal{D}}\left[\max_{\delta\in 
    \hood}
    \loss(\theta,x+\delta,y)\right] \; .
\end{equation*}
In practice, we don't have access to the distribution $\mathcal{D}$ so both the
gradients and the value of $\rho(\theta)$ will be computed using sampled input
points.  Therefore we can consider --without loss of generality-- the case of a
single random example $x$ with label $y$, in which case the problem becomes
\begin{equation*}
    \min_\theta \max_{\delta\in \hood} g(\theta,\delta) ,\quad
    \text{ where }\quad g(\theta,\delta) = \loss(\theta, x+\delta,y) \; .
\end{equation*}

If we assume that the loss $\loss$ is continuously differentiable in $\theta$, we
can compute a descent direction for $\theta$ by utilizing the
classical theorem of Danskin.

\begin{theorem}[Danskin]\label{thm:danskin}
Let $\hood$ be  nonempty compact topological space and $g:\mathbb{R}^n \times \hood \rightarrow \mathbb{R}$ be such that $g(\cdot, \delta)$ is differentiable for every $\delta \in \hood$ and $\nabla_\theta g(\theta,\delta)$ is continuous on $\mathbb{R}^n \times \hood$.  Also, let $\delta^*(\theta) = \{\delta \in \arg \max_{\delta \in \hood}  g(\theta,\delta)\}$.

Then the corresponding max-function $$\phi(\theta) = \max_{\delta \in \hood} g(\theta,\delta)$$ is locally Lipschitz continuous, directionally differentiable, and its directional derivatives satisfy
$$\phi'(\theta, h) = \sup_{\delta \in \delta^*(\theta)} h^\top \nabla_\theta g(\theta, \delta)\,{.}$$
In particular, if for some $\theta \in\mathbb{R}^n$ the set $\delta^*(\theta) = \{ \delta^*_\theta \}$ is a singleton, the the max-function is differentiable at $\theta$ and 
$$\nabla \phi(\theta) = \nabla_\theta g(\theta, \delta^*_\theta)$$
\end{theorem}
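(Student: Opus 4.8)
The plan is to prove the theorem in three stages: first local Lipschitz continuity of $\phi$; then a direct computation of the one-sided directional derivatives, obtained by matching a lower and an upper bound; and finally the reduction of the singleton case to a general fact about locally Lipschitz functions. Throughout I will use that, for each fixed $\theta$, $g(\theta,\cdot)$ is continuous on the compact space $\hood$ (so the defining maximum is attained and $\delta^*(\theta)$ is a nonempty \emph{compact} subset of $\hood$); this is implicit in the hypotheses and certainly holds for $g(\theta,\delta)=\loss(\theta,x+\delta,y)$ in our application.

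\emph{Step 1 (local Lipschitzness).} Fix $\theta_0$ and a closed ball $B$ around it. Since $\nabla_\theta g$ is continuous on the compact set $B\times\hood$, it is bounded there, say $\|\nabla_\theta g\|\le L$. For $\theta_1,\theta_2\in B$ I would write $g(\theta_1,\delta)-g(\theta_2,\delta)=\int_0^1\nabla_\theta g(\theta_2+t(\theta_1-\theta_2),\delta)^\top(\theta_1-\theta_2)\,dt$, giving $|g(\theta_1,\delta)-g(\theta_2,\delta)|\le L\|\theta_1-\theta_2\|$ uniformly in $\delta$; combined with the elementary inequality $|\max_\delta a(\delta)-\max_\delta b(\delta)|\le\max_\delta|a(\delta)-b(\delta)|$ this yields $|\phi(\theta_1)-\phi(\theta_2)|\le L\|\theta_1-\theta_2\|$.

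\emph{Step 2 (directional derivative).} Fix $\theta$ and a direction $h$, and set $D:=\sup_{\delta\in\delta^*(\theta)}h^\top\nabla_\theta g(\theta,\delta)$, which is finite and attained because $\delta^*(\theta)$ is compact and the integrand continuous. For the lower bound: for any $\delta\in\delta^*(\theta)$, differentiability of $g(\cdot,\delta)$ gives $\phi(\theta+th)\ge g(\theta+th,\delta)=\phi(\theta)+t\,h^\top\nabla_\theta g(\theta,\delta)+o(t)$, so $\liminf_{t\downarrow0}\tfrac{\phi(\theta+th)-\phi(\theta)}{t}\ge h^\top\nabla_\theta g(\theta,\delta)$, and taking the supremum over $\delta\in\delta^*(\theta)$ gives $\liminf\ge D$. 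For the upper bound: choose $t_k\downarrow0$ realizing $\limsup_{t\downarrow0}\tfrac{\phi(\theta+th)-\phi(\theta)}{t}$, pick $\delta_k\in\delta^*(\theta+t_kh)$, and (compactness of $\hood$) pass to a subsequence with $\delta_k\to\bar\delta$. Using $\phi(\theta+t_kh)=g(\theta+t_kh,\delta_k)$, the mean value bound $|g(\theta+t_kh,\delta_k)-g(\theta,\delta_k)|\le Lt_k\|h\|$, local Lipschitzness of $\phi$, and continuity of $g(\theta,\cdot)$, I would conclude $g(\theta,\bar\delta)=\lim_k\phi(\theta+t_kh)=\phi(\theta)$, i.e.\ $\bar\delta\in\delta^*(\theta)$. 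Then, since $\phi(\theta)\ge g(\theta,\delta_k)$ and, by the mean value theorem, $g(\theta+t_kh,\delta_k)-g(\theta,\delta_k)=t_k\,h^\top\nabla_\theta g(\theta+s_kt_kh,\delta_k)$ for some $s_k\in[0,1]$,
\[
\frac{\phi(\theta+t_kh)-\phi(\theta)}{t_k}\ \le\ \frac{g(\theta+t_kh,\delta_k)-g(\theta,\delta_k)}{t_k}\ =\ h^\top\nabla_\theta g(\theta+s_kt_kh,\delta_k),
\]
and the right-hand side converges to $h^\top\nabla_\theta g(\theta,\bar\delta)\le D$ by continuity of $\nabla_\theta g$; hence $\limsup\le D$. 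The two bounds give $\phi'(\theta,h)=D$, which is the asserted formula.

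\emph{Step 3 (singleton case) and the main obstacle.} When $\delta^*(\theta)=\{\delta^*_\theta\}$, Step 2 gives $\phi'(\theta,h)=h^\top\nabla_\theta g(\theta,\delta^*_\theta)$, which is \emph{linear} in $h$; I would finish by invoking that a locally Lipschitz function whose directional derivative at a point is linear in the direction is Fréchet differentiable there. Concretely, the difference quotients $v\mapsto\tfrac{\phi(\theta+tv)-\phi(\theta)}{t}$ are equi-Lipschitz in $v$ (by the bound from Step 1) and converge pointwise as $t\downarrow0$ to the linear functional $v\mapsto v^\top\nabla_\theta g(\theta,\delta^*_\theta)$, hence converge uniformly on the compact unit sphere — which is exactly Fréchet differentiability with $\nabla\phi(\theta)=\nabla_\theta g(\theta,\delta^*_\theta)$. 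The step I expect to be the real obstacle is the upper bound in Step 2: it hinges on the closedness (upper semicontinuity) of the maximizer correspondence $\theta\mapsto\delta^*(\theta)$ — the nontrivial point that limits of near-maximizers $\delta_k$ of the perturbed problems must land in $\delta^*(\theta)$ — since this is what lets us control an otherwise arbitrary perturbed maximizer by the supremum over $\delta^*(\theta)$. The remaining ingredients (the Lipschitz estimates, the mean value theorem, and the passage from Gateaux to Fréchet differentiability) are routine.
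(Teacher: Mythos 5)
Your argument is correct, but note that the paper itself offers no proof of this theorem to compare against: it is invoked as a classical result (with only the informal remark that $\phi$ is ``locally the same'' as $g(\cdot,\delta^*_\theta)$), and only Corollary~\ref{cor:descentdir} is proved. What you have written is essentially the standard proof of Danskin's theorem (in the style of Danskin/Bertsekas): the lower bound on the difference quotient from a fixed maximizer, the upper bound via upper semicontinuity of the argmax correspondence $\theta\mapsto\delta^*(\theta)$ --- which you rightly single out as the one non-routine step --- and, in the singleton case, the passage from the linear directional derivative to Fr\'echet differentiability using the equi-Lipschitz difference quotients; this last step is a clean way to get genuine differentiability rather than mere Gateaux differentiability. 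Two small caveats, both of which you essentially flag: continuity of $g(\theta,\cdot)$ on $\hood$ is not literally among the stated hypotheses but is needed both for attainment of the maximum and for the limit point $\bar\delta$ to lie in $\delta^*(\theta)$, so it should be stated as an assumption (it holds in the paper's application $g(\theta,\delta)=\loss(\theta,x+\delta,y)$); and since $\hood$ is only assumed to be a compact topological space, the extraction of a convergent subsequence of the $\delta_k$ should formally be a subnet (or one simply takes $\hood\subseteq\R^d$ compact, as in the intended application), after which every step of your Step 2 goes through unchanged.
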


The intuition behind the theorem is that since gradients are local objects, and
the function $\phi(\theta)$ is locally the same as $g(\theta, \delta^*_\theta)$
their gradients will be the same. The theorem immediately gives us the following
corollary, stating the we can indeed compute gradients for the saddle point by
computing gradients at the inner optimizers.
\begin{corollary}\label{cor:descentdir}
Let $\bar{\delta}$ be such that $ \bar{\delta} \in \hood$ and is a maximizer for
$\max_\delta L(\theta,x+\delta,y)$. Then, as long as it is nonzero, $-\nabla_\theta \loss(\theta, x+\bar{\delta}, y)$ is a descent direction for $\phi(\theta) = \max_{\delta \in \hood} \loss(\theta, x+\delta, y)$.
\end{corollary}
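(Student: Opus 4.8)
The plan is to instantiate Danskin's theorem (Theorem~\ref{thm:danskin}) with $n = p$ and $g(\theta,\delta) = \loss(\theta, x+\delta, y)$, and then simply read off the conclusion from the directional-derivative formula. First I would check the hypotheses: the perturbation set $\hood$ (an $\ell_\infty$-ball) is nonempty and compact, and under the standing assumption that $\loss$ is continuously differentiable in $\theta$, the map $g(\cdot,\delta)$ is differentiable for every $\delta$ with $\nabla_\theta g$ jointly continuous on $\R^p \times \hood$, so the theorem applies to $\phi(\theta) = \max_{\delta\in\hood} g(\theta,\delta)$. By hypothesis $\bar\delta$ is an inner maximizer, i.e. $\bar\delta \in \delta^*(\theta)$.

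The key step is to evaluate $\phi'(\theta, h)$ at the candidate direction $h = -\nabla_\theta g(\theta,\bar\delta) = -\nabla_\theta \loss(\theta, x+\bar\delta, y)$, which is nonzero by the hypothesis of the corollary. In the regime where the inner maximizer is unique, $\delta^*(\theta) = \{\bar\delta\}$, Theorem~\ref{thm:danskin} gives that $\phi$ is differentiable at $\theta$ with $\nabla\phi(\theta) = \nabla_\theta g(\theta,\bar\delta) = -h$; hence $\phi'(\theta, h) = \langle h, \nabla\phi(\theta)\rangle = -\|\nabla_\theta\loss(\theta,x+\bar\delta,y)\|_2^2 < 0$. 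A direction with strictly negative directional derivative is by definition a descent direction for $\phi$ — sufficiently small steps along it strictly decrease the max-function — which is exactly the claim.

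The main obstacle is the case where $\delta^*(\theta)$ is \emph{not} a singleton. There the theorem only yields $\phi'(\theta, h) = \sup_{\delta \in \delta^*(\theta)} \langle h, \nabla_\theta g(\theta,\delta)\rangle$, and for $h = -\nabla_\theta g(\theta,\bar\delta)$ this equals $\sup_{\delta\in\delta^*(\theta)} -\langle \nabla_\theta g(\theta,\bar\delta), \nabla_\theta g(\theta,\delta)\rangle$, which need not be negative unless the gradient at $\bar\delta$ has positive inner product with the gradient at every other maximizer. I would resolve this by noting that the set of parameters $\theta$ admitting multiple inner maximizers is non-generic, so along an SGD trajectory the singleton case is the typical one and the corollary applies there; alternatively one can invoke the steepest-descent variant of Danskin (taking $h$ to be the negative of the minimum-norm element of the convex hull of $\{\nabla_\theta g(\theta,\delta) : \delta\in\delta^*(\theta)\}$) to obtain an unconditional descent direction. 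I would also flag, as the surrounding text already does, that ReLU/max-pool nonsmoothness and the use of approximate rather than exact inner maximizers mean the hypotheses are not literally satisfied, so in the paper's context the corollary serves as the principled justification for the adversarial-training update rule rather than a theorem with all assumptions discharged.
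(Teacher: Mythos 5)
Your proposal is correct and uses the same core mechanism as the paper: instantiate Theorem~\ref{thm:danskin} with $g(\theta,\delta)=\loss(\theta,x+\delta,y)$ and read the conclusion off the directional-derivative formula. The difference is where the directional derivative is evaluated, and it is worth noting. The paper takes $h=+\nabla_\theta\loss(\theta,x+\bar\delta,y)$ and uses only that $\bar\delta\in\delta^*(\theta)$, so $\phi'(\theta,h)=\sup_{\delta\in\delta^*(\theta)}h^\top\nabla_\theta g(\theta,\delta)\geq h^\top h=\|h\|_2^2>0$; this lower bound needs no uniqueness of the inner maximizer, and the paper then simply declares $-h$ a descent direction. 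You instead evaluate $\phi'(\theta,-h)$ directly, which forces you to confront the multi-maximizer case explicitly, since $\sup_{\delta\in\delta^*(\theta)}(-h)^\top\nabla_\theta g(\theta,\delta)$ need not be negative. That is not a defect of your argument relative to the paper's: the same subtlety is present in the paper's proof, just hidden in the unargued step from ``$h$ is an ascent direction'' to ``$-h$ is a descent direction'' (for a merely directionally differentiable max-function these are not equivalent, precisely when $\delta^*(\theta)$ is not a singleton). Your two patches --- restricting to the generic singleton case, or passing to the minimum-norm element of the convex hull of maximizer gradients (the steepest-descent form of Danskin) --- are reasonable ways to make the statement airtight, and your closing caveats about nonsmoothness and approximate maximizers match the discussion the paper itself gives after the corollary. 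In short: same approach, with your version being the more scrupulous of the two about the non-unique-maximizer case.
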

\begin{proof}[Proof of Corollary~\ref{cor:descentdir}]
We apply Theorem~\ref{thm:danskin} to $g(\theta, \delta) := \loss(\theta, x+\delta, y)$ and $\hood = B_{\|\cdot\|}(\epsilon)$. We see that the directional derivative in the direction of $h = \nabla_\theta \loss(\theta, x+\bar{\delta}, y)$ satisfies
\begin{align*}
\phi'(\theta, h) &= \sup_{\delta \in \delta^*(\theta)} h^\top \nabla_\theta \loss(\theta, x+\delta, y) 
\geq h^\top h = \Vert \nabla_\theta \loss(\theta, x+\bar{\delta}, y) \Vert_2^2 \geq 0\,{.}
\end{align*}
If this gradient is nonzero, then the inequality above is strict. Therefore it gives a descent direction.
\end{proof}

A technical issue is that, since we use ReLU and max-pooling units in our neural
network architecture, the loss function is not continuously differentiable.
Nevertheless, since the set of discontinuities has measure zero, we can assume
that this will not be an issue in practice, as we will never encounter the
problematic points.

Another technical issue is that, due to the not concavity of the inner problem,
we are not able to compute global maximizers, since PGD will converge to local
maxima. In such cases, we can consider a subset $\hood'$ of $\hood$ such that
the local maximum is a global maximum in the region $\hood'$.
Applying the theorem for $\hood'$ gives us that the gradient corresponds to a
descent direction for the saddle point problem when the adversary is constrained
in $\hood'$. Therefore if the inner maximum is a true adversarial example for
the network, then SGD using the gradient at that point will decrease the loss
value at this particular adversarial examples, thus making progress towards a
robust model.

These arguments suggest that the conclusions of the theorem are still valid in
our saddle point problem, and --as our experiments confirm-- we can solve it
reliably.

\section{Transferability}\label{sec:trans}

A lot of recent literature on adversarial training discusses the phenomenon of
transferability~\cite{szegedy2014intriguing,goodfellow2015explaining,tramer2017space}---adversarial examples
transfer between independently trained networks. This raises concerns for practical
applications, since it suggests that deep networks are vulnerable to attacks,
\emph{even when there is no direct access to the target network}.

This phenomenon is further confirmed by our current experiments.~\footnote{Our experiments involve transferability between  networks with the same architecture (potentially with layers of varying sizes), trained with the same method, but with different random initializations. The reason we consider these models rather than highly different architectures is that they are likely the worst case instances for transferability.} Moreover, we notice that the extent to which adversarial examples transfer decreases as we increase either network capacity or the power of the adversary used for training the network. This serves as evidence for the fact that the transferability phenomenon can be alleviated by using high capacity networks in conjunction with strong oracles for the inner optimization problem. 

\paragraph{MNIST.} 
In an attempt to understand these phenomena we inspect the loss functions corresponding to the trained models we used for testing transferability. More precisely, we compute angles between gradients of the loss functions evaluated over a large set of input examples, and plot their distribution. Similarly, we plot the value of the loss functions between clean and perturbed examples for both the source and transfer networks.
In Figure~\ref{fig:transfer} we plot our experimental findings on the MNIST
dataset for $\epsilon = 0.3$. We consider a large standard network (two convolutional layers of sizes $32$ and $64$, and a fully connected layer of size $1024$), which we train twice starting with different initializations. 
We plot the distribution of angles between gradients for the same test image in the two resulting networks ({\color{orange} orange} histograms), noting that they are somewhat correlated. As opposed to this, we see that pairs of gradients for random pairs of inputs for one architecture are as uncorrelated as they can be ({\color{blue} blue} histograms), since the distribution of their angles looks Gaussian.

Next, we run the same experiment on a very large standard network (two convolutional layers of sizes $64$ and $128$, and a fully connected layer of size $1024$). We notice a mild increase in classification accuracy for transferred examples.

Finally, we repeat the same set of experiments, after training the large and very large networks against the FGSM adversary.
We notice that gradients between the two architectures become significantly less
correlated. Also, the classification accuracy for transferred examples increases
significantly compared to the standard networks.

We further plot how the value of the loss function changes when moving from the natural input towards the adversarially perturbed input (in Figure~\ref{fig:transfer} we show these plots for four images in the MNIST test dataset), for each pair of networks we considered. We observe that, while for the naturally trained networks, when moving towards the perturbed point, the value of the loss function on the transfer architecture tends to start increasing  soon after it starts increasing on the source architecture. In contrast, for the stronger models, the loss function on the transfer network tends to start increasing later, and less aggressively.

\paragraph{CIFAR10.}
\label{par:}

For the CIFAR10 dataset, we investigate the transferability of the FGSM and PGD adversaries between our simple and wide architectures, each trained on natural, FGSM and PGD examples. Transfer accuracies for the FGSM adversary and PGD adversary between all pairs of such configurations (model + training method) with independently random weight initialization are given in tables \ref{cifar10-fgsm-transfer} and \ref{cifar10-pgd-transfer} respectively. The results exhibit the following trends:
\begin{itemize}
	\item \textbf{Stronger adversaries decrease transferability:} In particular,
        transfer attacks between two PGD-trained models are less successful than
        transfer attacks between their standard counterparts. Moreover, adding PGD training helps with transferability from all adversarial datasets, \emph{except} for those with source a PGD-trained model themselves. This applies to both FGSM attacks and PGD attacks. 
	\item \textbf{Capacity decreases transferability:} In particular, transfer attacks between two PGD-trained wide networks are less successful than transfer attacks between their simple PGD-trained counterparts. Moreover, with few close exceptions, changing the architecture from simple to wide (and keeping the training method the same) helps with transferability from all adversarial datasets. 
\end{itemize}

We additionally plotted how the loss of a network behaves in the direction of
FGSM and PGD examples obtained from itself and an independently trained copy;
results for the simple standard network and the wide PGD trained network are given in Table \ref{fig:cifar_transfer_loss_plots}. As expected, we observe the following phenomena:
\begin{itemize}
	\item sometimes, the FGSM adversary manages to increase loss faster near the natural example, but as we move towards the boundary of the $\ell_\infty$ box of radius $\epsilon$, the PGD attack always achieves higher loss. 
	\item the transferred attacks do worse than their white-box counterparts in terms of increasing the loss;
	\item and yet, the transferred PGD attacks dominate the white-box FGSM
        attacks for the standard network (and sometimes for the PGD-trained one too). 
\end{itemize}

\begin{table}[htb]
\centering
\setlength{\tabcolsep}{0.2em}
\begin{tabular}{|l|l|l|l|l|l|l|}
\hline
            \backslashbox[3.1cm]{Target}{Source}
            &\begin{tabular}{@{}c@{}}Simple \\ (standard\\ training)\end{tabular} & \begin{tabular}{@{}c@{}}Simple \\ (FGSM\\ training)\end{tabular} & \begin{tabular}{@{}c@{}}Simple \\ (PGD\\ training)\end{tabular}  & \begin{tabular}{@{}c@{}}Wide \\ (natural\\ training)\end{tabular} &\begin{tabular}{@{}c@{}}Wide \\ (FGSM\\ training)\end{tabular} & \begin{tabular}{@{}c@{}}Wide \\ (PGD\\ training)\end{tabular} \\ \hline
\begin{tabular}{@{}c@{}}Simple \\ (standard training)\end{tabular} & 32.9\%                                   & 74.0\%                                & 73.7\%                               & 27.6\%                                 & 71.8\%                              & 76.6\%                      \\ \hline
\begin{tabular}{@{}c@{}}Simple \\ (FGSM training)\end{tabular}    & 64.2\%                                   & 90.7\%                                & 60.9\%                               & 61.5\%                                 & 90.2\%                              & 67.3\%        \\ \hline
\begin{tabular}{@{}c@{}}Simple \\ (PGD training)\end{tabular}     & 77.1\%                                   & 78.1\%                                & 60.2\%                               & 77.0\%                                 & 77.9\%                              & 66.3\%             \\ \hline
\begin{tabular}{@{}c@{}}Wide \\ (standard training)\end{tabular}   &  34.9\%                                   & 78.7\%                                & 80.2\%                               & 21.3\%                                 & 75.8\%                              & 80.6\%   \\ \hline
\begin{tabular}{@{}c@{}}Wide \\ (FGSM training)\end{tabular}     & 64.5\%                                   & 93.6\%                                & 69.1\%                               & 53.7\%                                 & 92.2\%                              & 72.8\%          \\ \hline
\begin{tabular}{@{}c@{}}Wide \\ (PGD training)\end{tabular}       & 85.8\%                                   & 86.6\%                                & 73.3\%                               & 85.6\%                                 & 86.2\%                              & 67.0\%           \\ \hline
\end{tabular}
\caption{CIFAR10: black-box FGSM attacks. We create FGSM adversarial examples with $\epsilon=8$ from the evaluation set on the source network, and then evaluate them on an independently initialized target network.}
\label{cifar10-fgsm-transfer}
\end{table}

\begin{table}[htb]
\centering
\setlength{\tabcolsep}{0.2em}
\begin{tabular}{|l|l|l|l|l|l|l|ll}
\cline{1-7}
        \backslashbox[3.1cm]{Target}{Source}                 &
        \begin{tabular}{@{}c@{}}Simple \\ (standard\\ training)\end{tabular} & \begin{tabular}{@{}c@{}}Simple \\ (FGSM\\ training)\end{tabular} & \begin{tabular}{@{}c@{}}Simple \\ (PGD\\ training)\end{tabular} & \begin{tabular}{@{}c@{}}Wide \\ (natural\\ training)\end{tabular} & \begin{tabular}{@{}c@{}}Wide \\ (FGSM\\ training)\end{tabular} & \begin{tabular}{@{}c@{}}Wide \\ (PGD\\ training)\end{tabular} &  &  \\ \cline{1-7}
\begin{tabular}{@{}c@{}}Simple \\ (standard training)\end{tabular} & 6.6\%                                   & 71.6\%                               & 71.8\%                              & 1.4\%                                 & 51.4\%                             & 75.6\%                         &  &  \\ \cline{1-7}
\begin{tabular}{@{}c@{}}Simple \\ (FGSM training)\end{tabular}   & 66.3\%                                  & 40.3\%                               & 58.4\%                              & 65.4\%                                & 26.8\%                             & 66.2\%         &  &  \\ \cline{1-7}
\begin{tabular}{@{}c@{}}Simple \\ (PGD training)\end{tabular}       & 78.1\%                                  & 78.2\%                               & 57.7\%                              & 77.9\%                                & 78.1\%                             & 65.2\%  &  &  \\ \cline{1-7}
\begin{tabular}{@{}c@{}}Wide \\ (standard training)\end{tabular}   & 10.9\%                                  & 79.6\%                               & 79.1\%                              & 0.0\%                                 & 51.3\%                             & 79.7\%       &  &  \\ \cline{1-7}
\begin{tabular}{@{}c@{}}Wide \\ (FGSM training)\end{tabular}     & 67.6\%                                  & 51.7\%                               & 67.4\%                              & 56.5\%                                & 0.0\%                              & 71.6\%              &  &  \\ \cline{1-7}
\begin{tabular}{@{}c@{}}Wide \\ (PGD training)\end{tabular}      & 86.4\%                                  & 86.8\%                               & 72.1\%                              & 86.0\%                                & 86.3\%                             & 64.2\%       &  &  \\ \cline{1-7}
\end{tabular}
\caption{CIFAR10: black-box PGD attacks. We create PGD adversarial examples with $\epsilon=8$ for $7$ iterations from the evaluation set on the source network, and then evaluate them on an independently initialized target network.}
\label{cifar10-pgd-transfer}
\end{table}


\begin{table}[htb]
\setlength{\tabcolsep}{0.2em}
\centering
\begin{tabular}{|c|c|c|c|c|c|}
\hline
\backslashbox{Model}{Adversary}         & Natural & FGSM   & FGSM random & PGD (7 steps) & PGD (20 steps) \\ \hline
\begin{tabular}{@{}c@{}}Simple \\ (standard training)\end{tabular}              & 92.7\%  & 27.5\% & 19.6\%      & 1.2\%   & 0.8\%          \\ \hline
\begin{tabular}{@{}c@{}}Simple \\ (FGSM training)\end{tabular}    & 87.4\%  & 90.9\% & 90.4\%      & 0.0\%   & 0.0\%      \\ \hline
\begin{tabular}{@{}c@{}}Simple \\ (PGD training)\end{tabular}  & 79.4\%  & 51.7\% & 55.9\%      & 47.1\%  & 43.7\%       \\ \hline
\begin{tabular}{@{}c@{}}Wide \\ (standard training)\end{tabular}        & 95.2\%  & 32.7\% & 25.1\%      & 4.1\%   & 3.5\%         \\ \hline
\begin{tabular}{@{}c@{}}Wide \\ (FGSM training)\end{tabular}       & 90.3\%  & 95.1\% & 95.0\%      & 0.0\%   & 0.0\%       \\ \hline
\begin{tabular}{@{}c@{}}Wide \\ (PGD training)\end{tabular}      & 87.3\%  & 56.1\% & 60.3\%      & 50.0\%  & 45.8\%  \\ \hline
\end{tabular}
\caption{CIFAR10: white-box attacks for $\epsilon=8$. For each architecture and
training method, we list the accuracy of the resulting network on the full
CIFAR10 evaluation set of 10,000 examples. The FGSM random method is the one
suggested by \cite{tramer2017space}, whereby we first do a small random perturbation of the natural example, and the apply FGSM to that.}
\label{cifar10-white-box}
\end{table}

\begin{figure}[htb]
\begin{center}
{\setlength\tabcolsep{-.0cm}
\begin{tabular}{c c c c c}
\includegraphics[height=.142\textwidth]{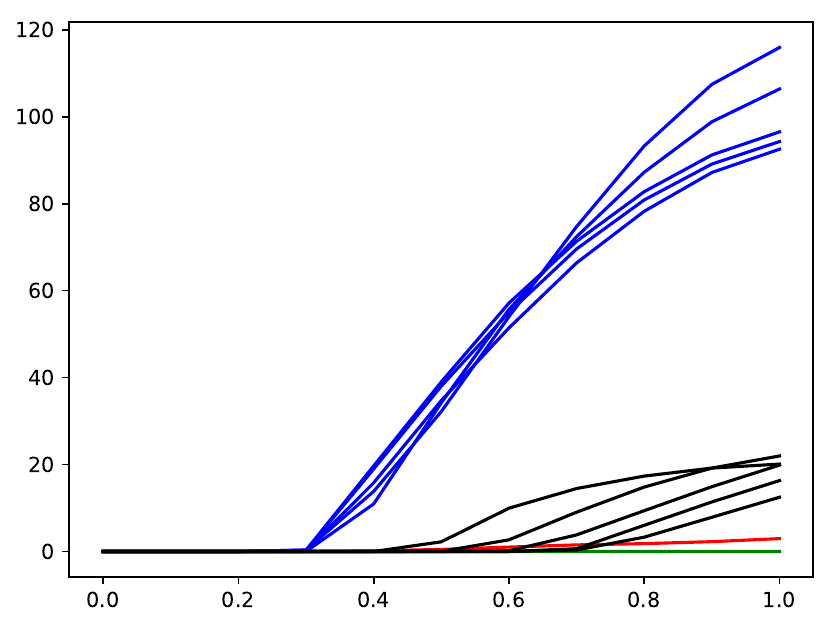} &
\includegraphics[height=.142\textwidth]{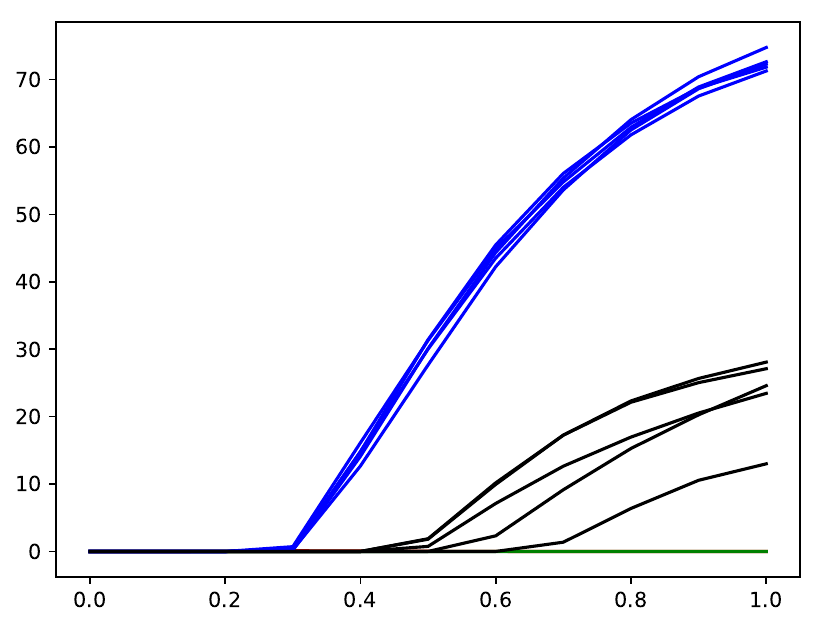} &
\includegraphics[height=.142\textwidth]{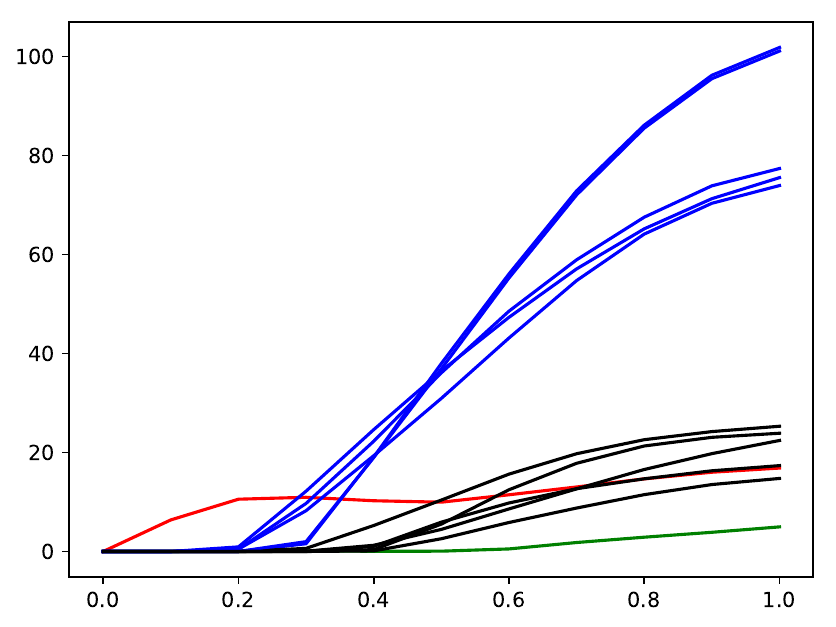} &
\includegraphics[height=.142\textwidth]{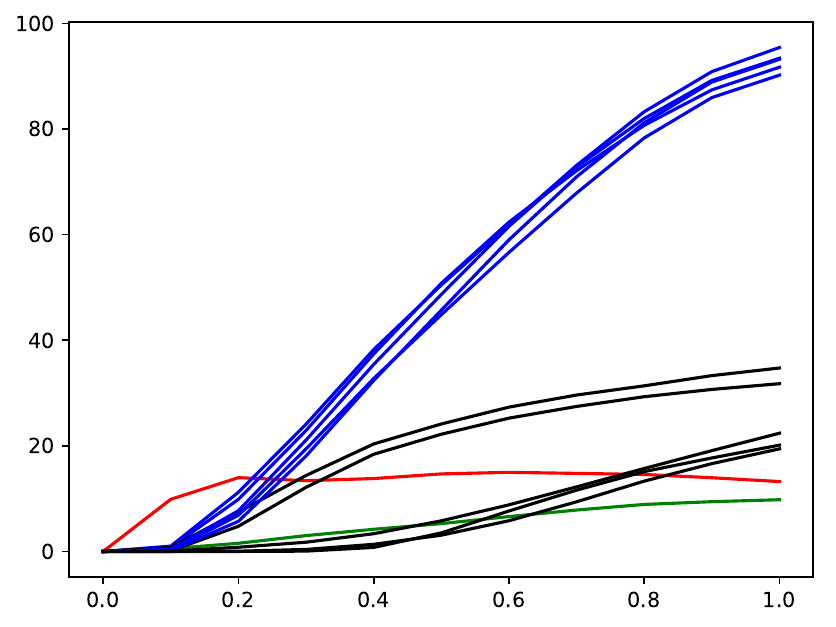} &
\includegraphics[height=.142\textwidth]{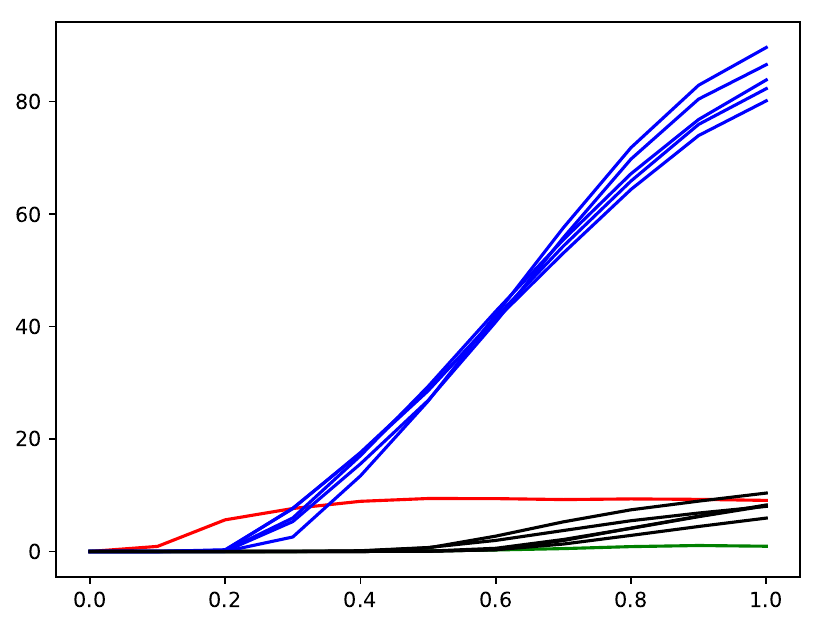}\\
\includegraphics[height=.142\textwidth]{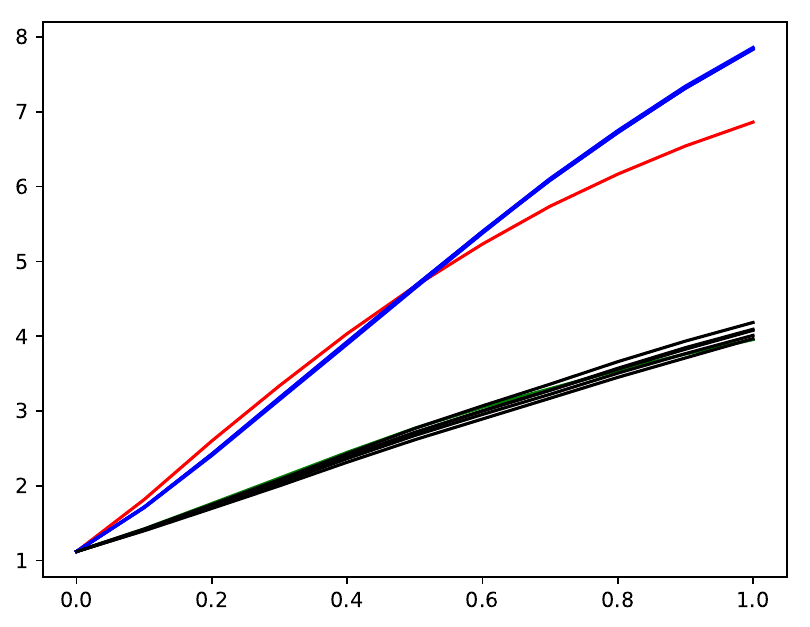} &
\includegraphics[height=.142\textwidth]{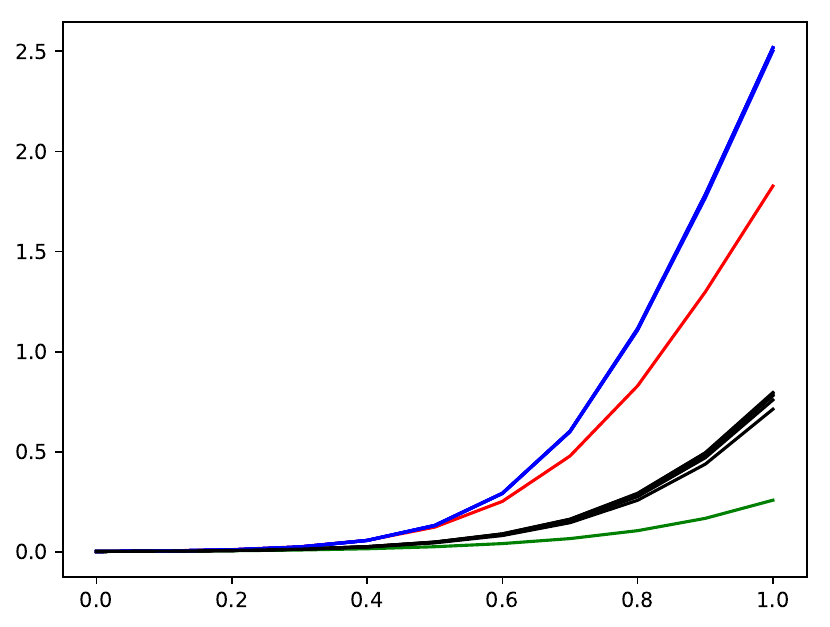} &
\includegraphics[height=.142\textwidth]{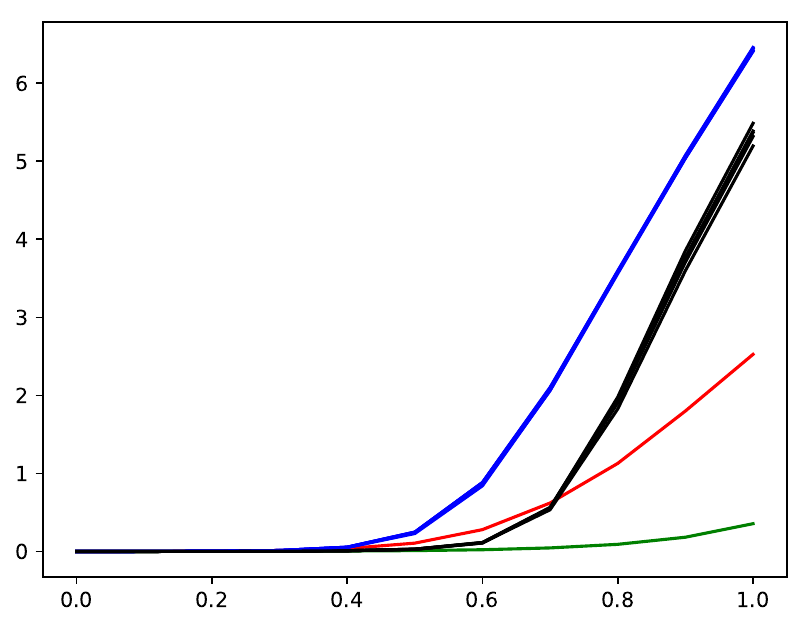} &
\includegraphics[height=.142\textwidth]{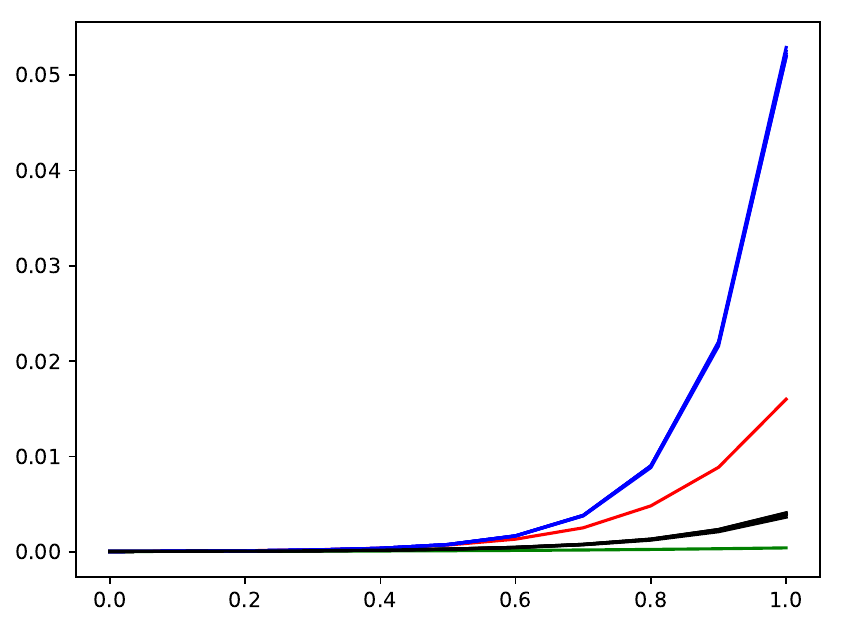} &
\includegraphics[height=.142\textwidth]{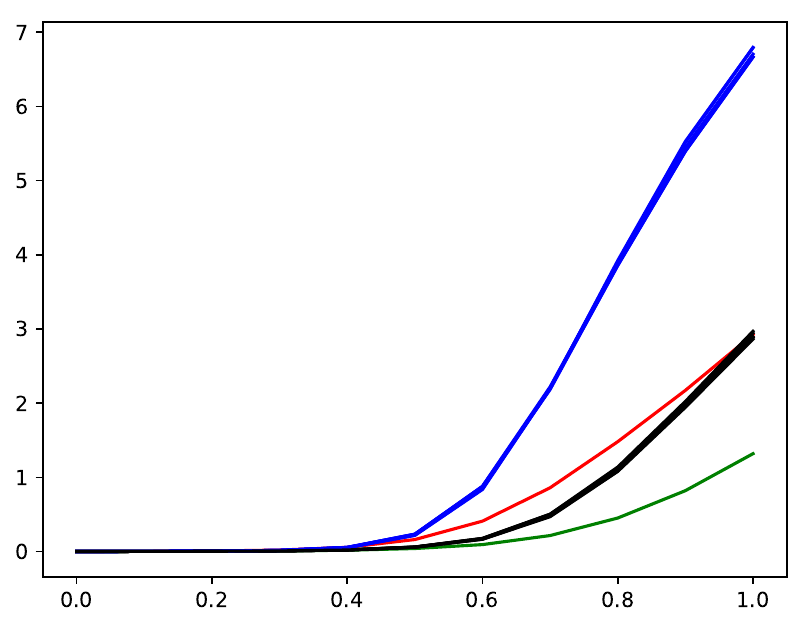}
\end{tabular}}
\end{center}
\caption{CIFAR10: change of loss function in the direction of white-box and black-box FGSM and PGD examples with $\epsilon=8$ for the same five natural examples. Each line shows how the loss changes as we move from the natural example to the corresponding adversarial example. Top: simple naturally trained model. Bottom: wide PGD trained model. We plot the loss of the original network in the direction of the FGSM example for the original network ({\color{red} red} lines), 5 PGD examples for the original network obtained from 5 random starting points ({\color{blue} blue} lines), the FGSM example for an independently trained copy network ({\color{green} green} lines) and 5 PGD examples for the copy network obtained from 5 random starting points ({\color{black} black} lines). All PGD attacks use 100 steps with step size $0.3$.}
\label{fig:cifar_transfer_loss_plots}
\end{figure}

\begin{figure}[h]

\begin{minipage}[c][3.25cm][t]{.25\textwidth}
\vspace{.1cm}
\setlength{\tabcolsep}{0.2em} 
\begin{tabular}{|l|r|r|}
\cline{2-3} 
\multicolumn{1}{l|}{} & {\small Source} & {\small Transfer}\tabularnewline
\hline 
{\small Clean} & \small{99.2\%} & \small{99.2\%} \tabularnewline
\hline 
{\small FGSM} & \small{3.9\%} & \small{41.9\%}\tabularnewline
\hline
{\small PGD} &\small{0.0\%} & \small{26.0\%}\tabularnewline
\hline 
\end{tabular}
\caption*{\small{Large network, standard training}}
 \vspace*{\fill}
 \end{minipage} \hspace{0.2cm}
\begin{minipage}[c][3cm][t]{.21\textwidth}
  \vspace*{\fill}
  \centering
  \includegraphics[width=3cm,height=3cm,trim={0 5.8cm 0 6.1cm},clip]{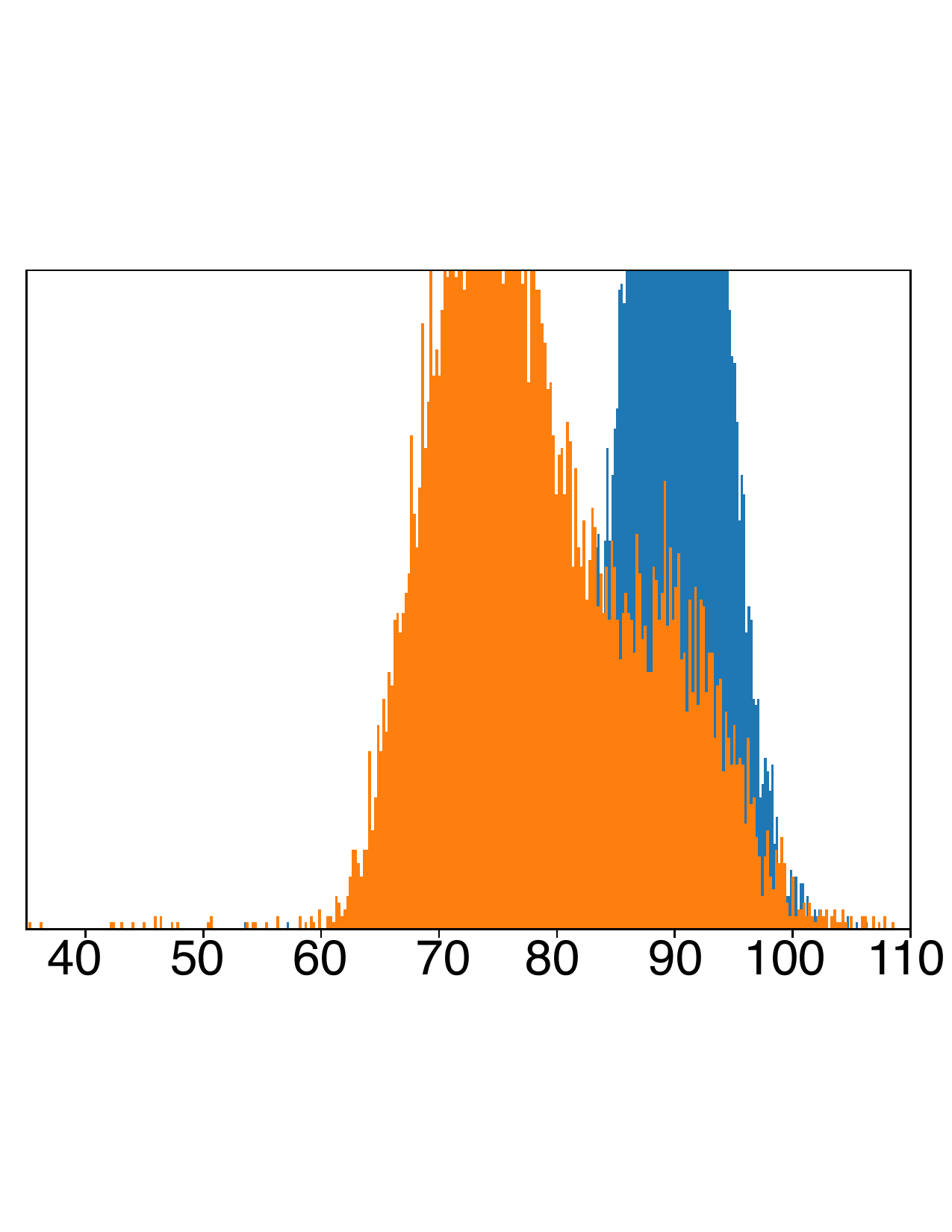}
\end{minipage}
\begin{minipage}[c][3cm][t]{.60\textwidth}
  \vspace*{\fill}
      \begin{subfigure}[H]{0.2\textwidth}
        \includegraphics[width=1.8cm, height=1.5cm, trim={0 0.7cm 0 0},clip]{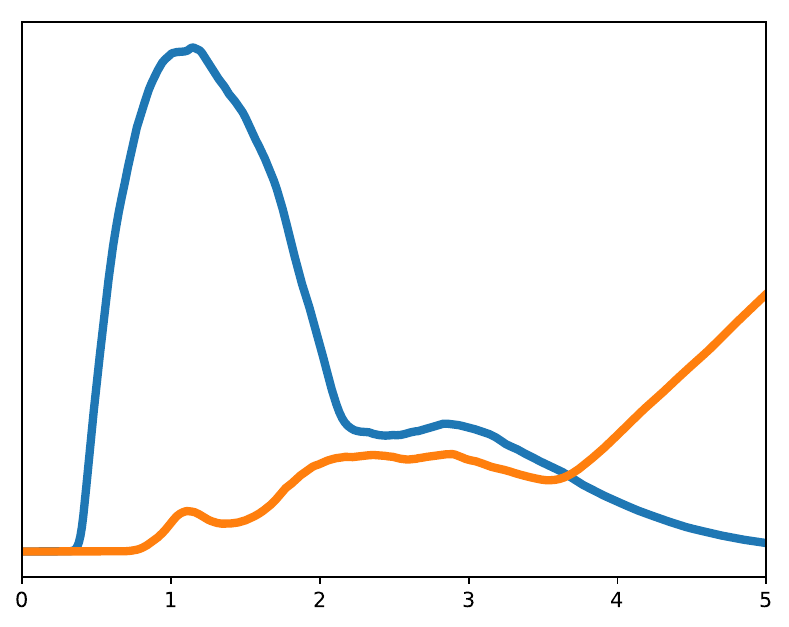}
       \end{subfigure}
      \begin{subfigure}[H]{0.2\textwidth}
        \includegraphics[width=1.8cm, height=1.5cm, trim={0 0.7cm 0 0},clip]{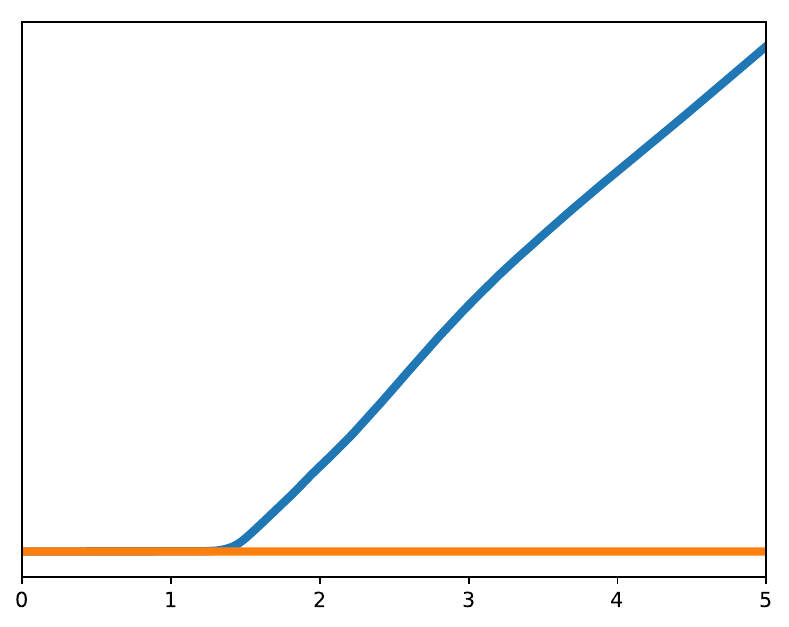}
    \end{subfigure}
      \begin{subfigure}[H]{0.2\textwidth}
        \includegraphics[width=1.8cm, height=1.5cm, trim={0 0.7cm 0 0},clip]{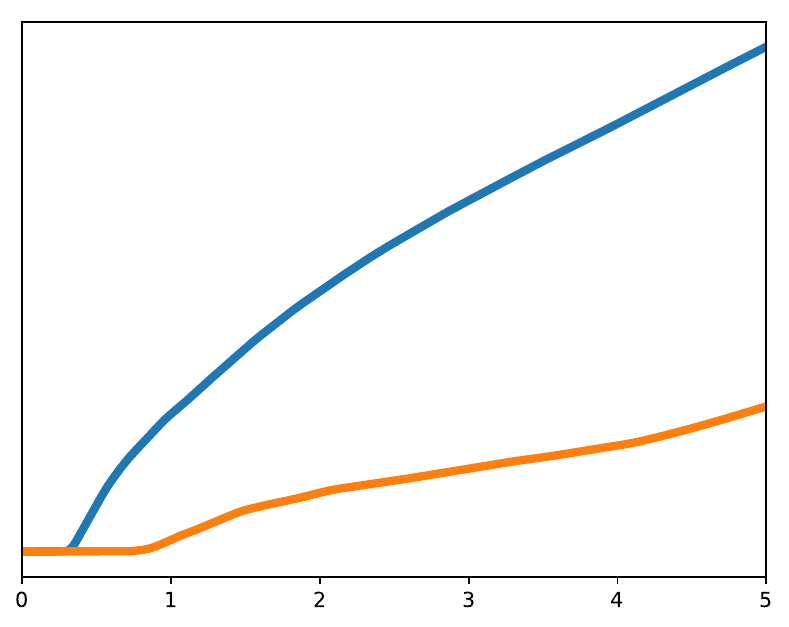}
       \end{subfigure}
      \begin{subfigure}[H]{0.2\textwidth}
        \includegraphics[width=1.8cm, height=1.5cm, trim={0 0.7cm 0 0},clip]{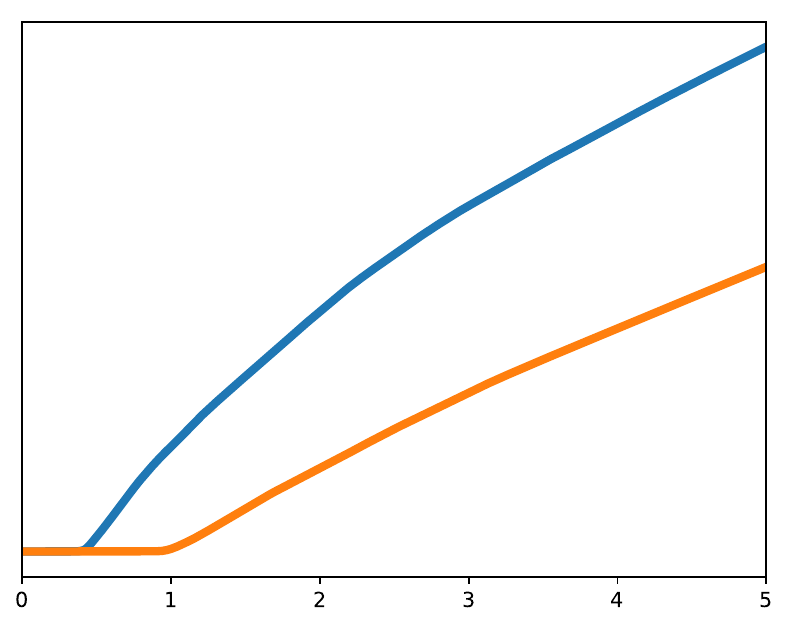}
    \end{subfigure}
   \newline\noindent
      \begin{subfigure}[H]{0.2\textwidth}
        \includegraphics[width=1.8cm, height=1.5cm, trim={0 0.7cm 0 0},clip]{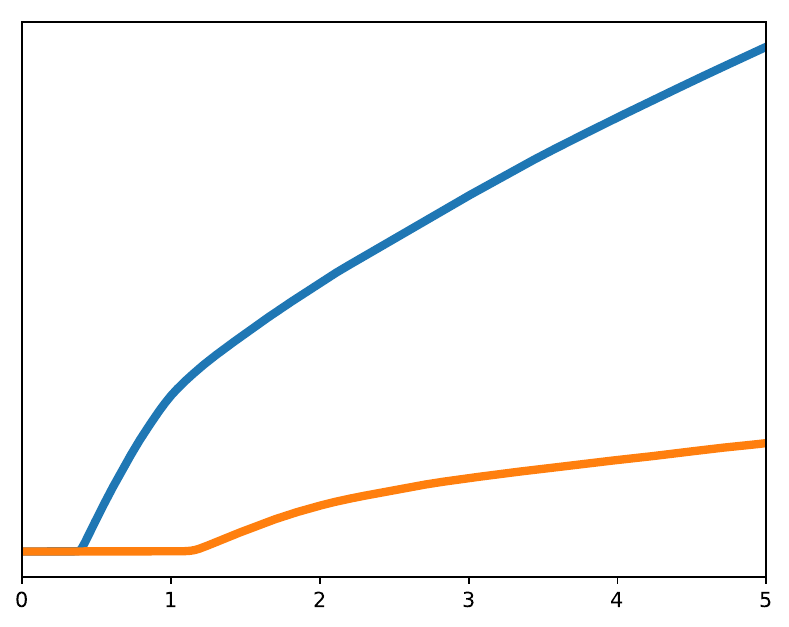}
       \end{subfigure}
      \begin{subfigure}[H]{0.2\textwidth}
        \includegraphics[width=1.8cm, height=1.5cm, trim={0 0.7cm 0 0},clip]{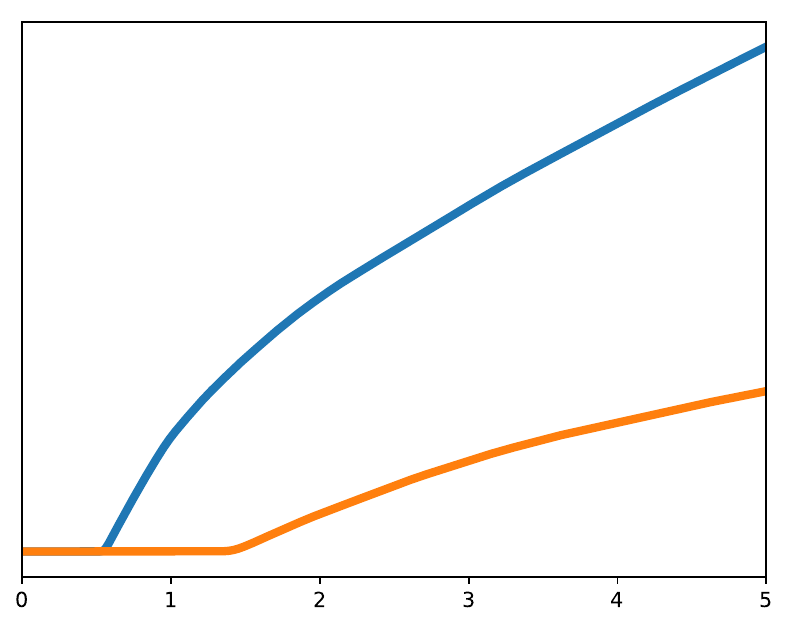}
    \end{subfigure}
      \begin{subfigure}[H]{0.2\textwidth}
        \includegraphics[width=1.8cm, height=1.5cm, trim={0 0.7cm 0 0},clip]{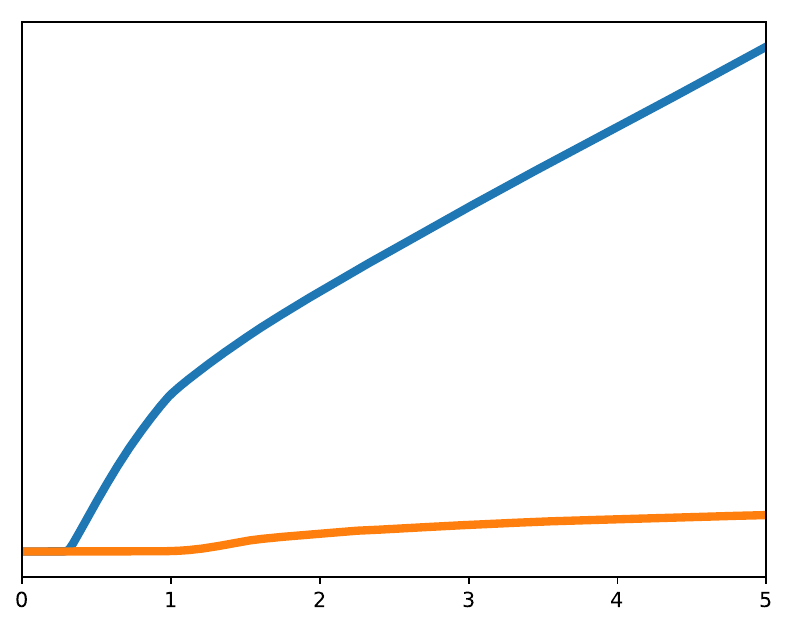}
       \end{subfigure}
      \begin{subfigure}[H]{0.2\textwidth}
        \includegraphics[width=1.8cm, height=1.5cm, trim={0 0.7cm 0 0},clip]{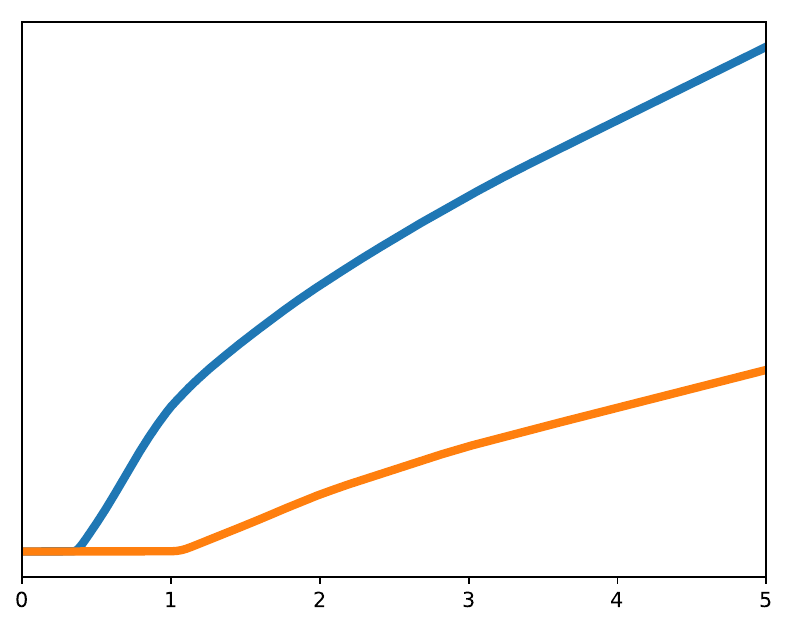}
    \end{subfigure}
        \newline\noindent
    \tiny{0\hspace{0.21cm}1\hspace{0.21cm}2\hspace{0.21cm}3\hspace{0.21cm}4\hspace{0.21cm}5}\hspace{0.20cm}
    \tiny{0\hspace{0.21cm}1\hspace{0.21cm}2\hspace{0.21cm}3\hspace{0.21cm}4\hspace{0.21cm}5}\hspace{0.20cm}
    \tiny{0\hspace{0.21cm}1\hspace{0.21cm}2\hspace{0.21cm}3\hspace{0.21cm}4\hspace{0.21cm}5}\hspace{0.20cm}
    \tiny{0\hspace{0.21cm}1\hspace{0.21cm}2\hspace{0.21cm}3\hspace{0.21cm}4\hspace{0.21cm}5}
\end{minipage}

\begin{minipage}[c][3.25cm][t]{.25\textwidth}
\vspace{.1cm}
\setlength{\tabcolsep}{0.2em} 
\begin{tabular}{|l|r|r|}
\cline{2-3} 
\multicolumn{1}{l|}{} & {\small Source} & {\small Transfer}\tabularnewline
\hline 
{\small Clean} & \small{99.2\%} & \small{99.3\%} \tabularnewline
\hline 
{\small FGSM} & \small{7.2\%} & \small{44.6\%}\tabularnewline
\hline
{\small PGD} &\small{0.0\%} & \small{35.0\%}\tabularnewline
\hline 
\end{tabular}
\caption*{\small{Very large network, standard training}}
 \vspace*{\fill}
 \end{minipage} \hspace{0.2cm}
\begin{minipage}[c][3cm][t]{.21\textwidth}
  \vspace*{\fill}
  \centering
\includegraphics[width=3cm,height=3cm,trim={0 5.8cm 0 6.1cm},clip]{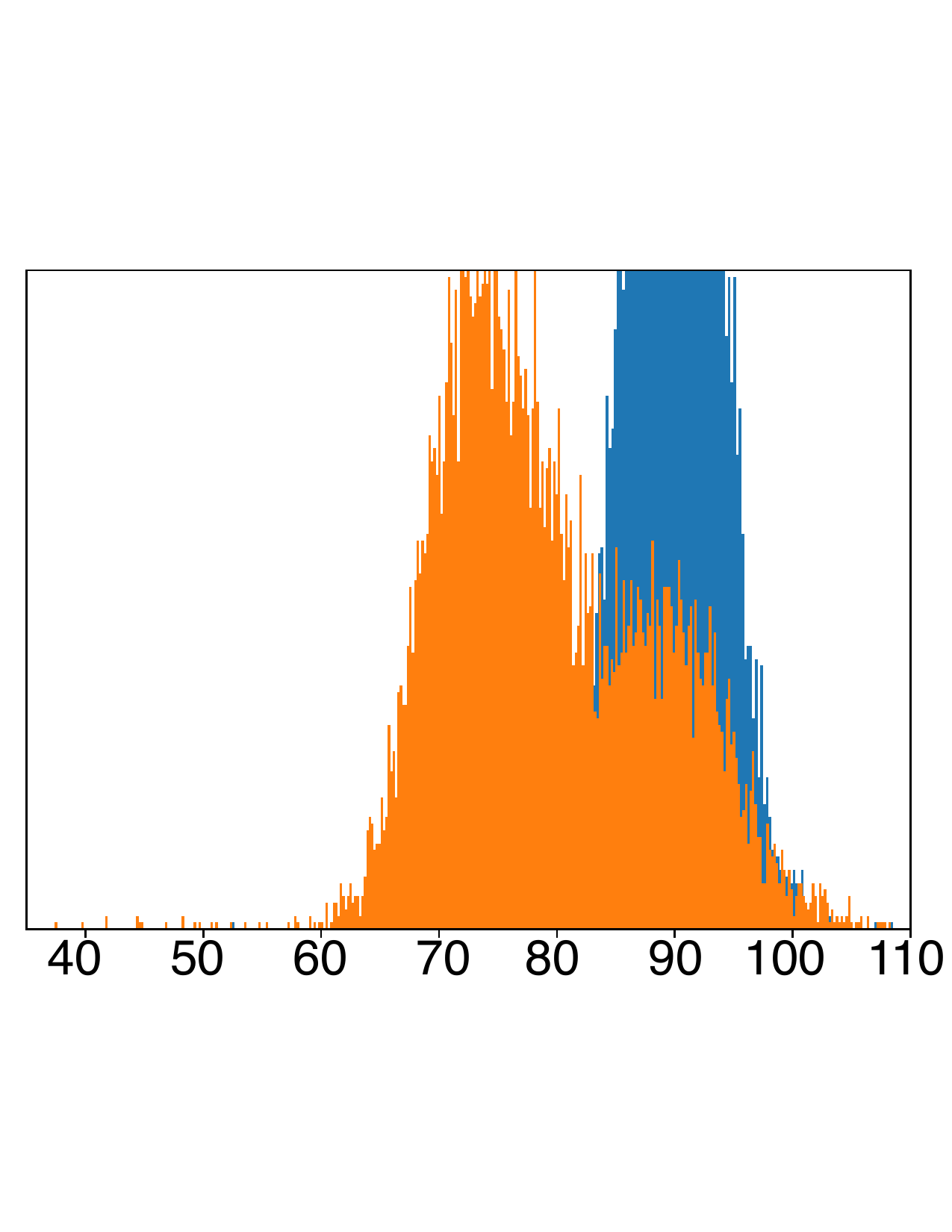}
\end{minipage}
\begin{minipage}[c][3cm][t]{.60\textwidth}
  \vspace*{\fill}
      \begin{subfigure}[H]{0.2\textwidth}
        \includegraphics[width=1.8cm, height=1.5cm, trim={0 0.7cm 0 0},clip]{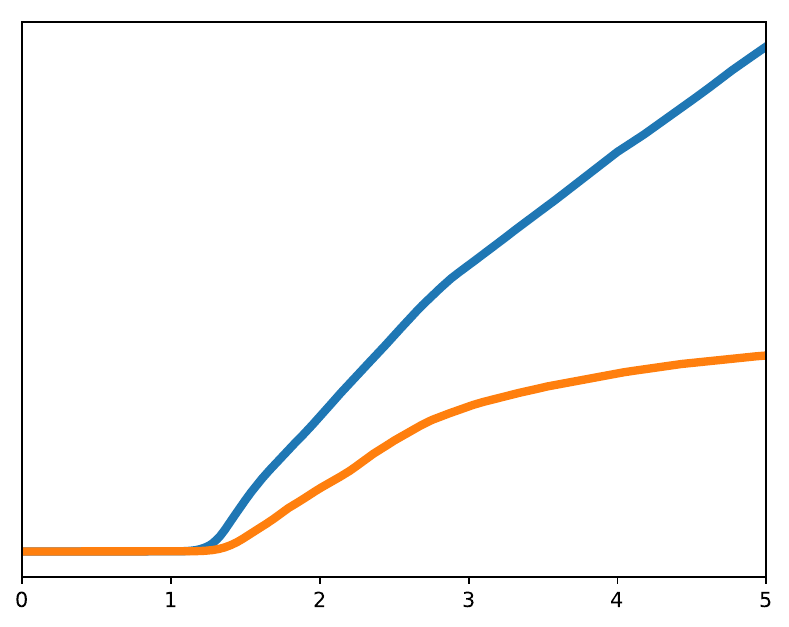}
       \end{subfigure}
      \begin{subfigure}[H]{0.2\textwidth}
        \includegraphics[width=1.8cm, height=1.5cm, trim={0 0.7cm 0 0},clip]{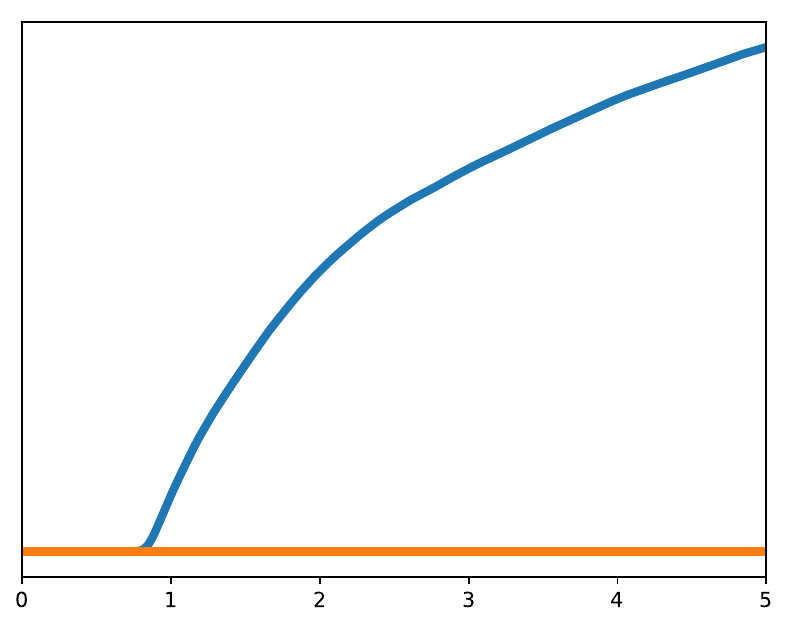}
    \end{subfigure}
      \begin{subfigure}[H]{0.2\textwidth}
        \includegraphics[width=1.8cm, height=1.5cm, trim={0 0.7cm 0 0},clip]{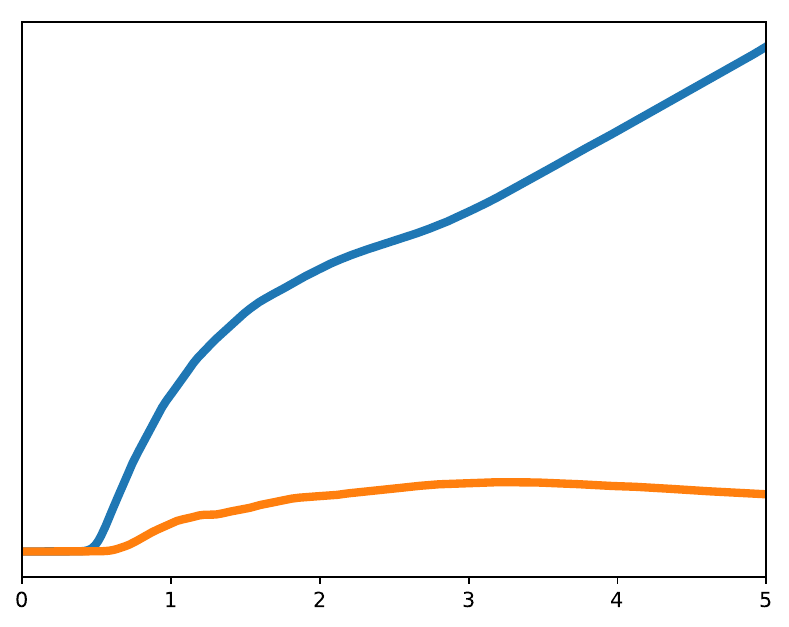}
       \end{subfigure}
      \begin{subfigure}[H]{0.2\textwidth}
        \includegraphics[width=1.8cm, height=1.5cm, trim={0 0.7cm 0 0},clip]{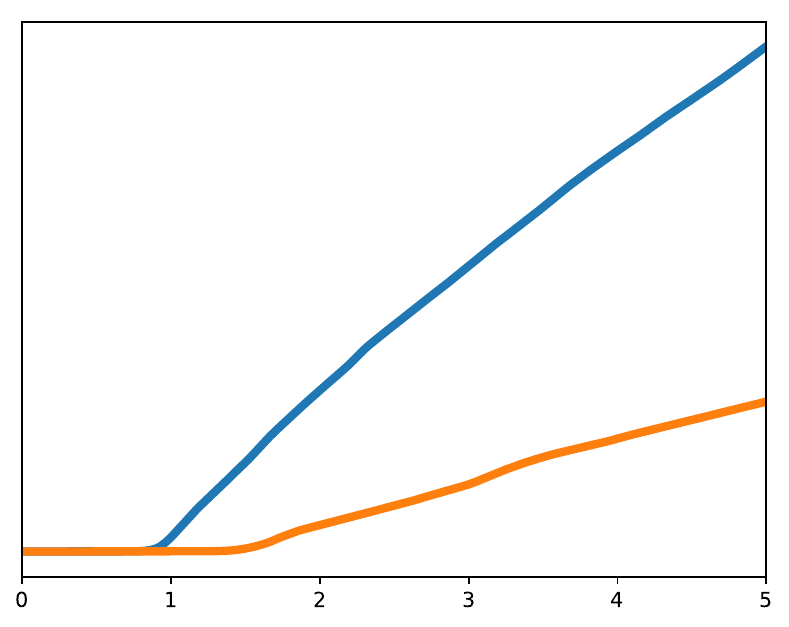}
    \end{subfigure}
   \newline\noindent
      \begin{subfigure}[H]{0.2\textwidth}
        \includegraphics[width=1.8cm, height=1.5cm, trim={0 0.7cm 0 0},clip]{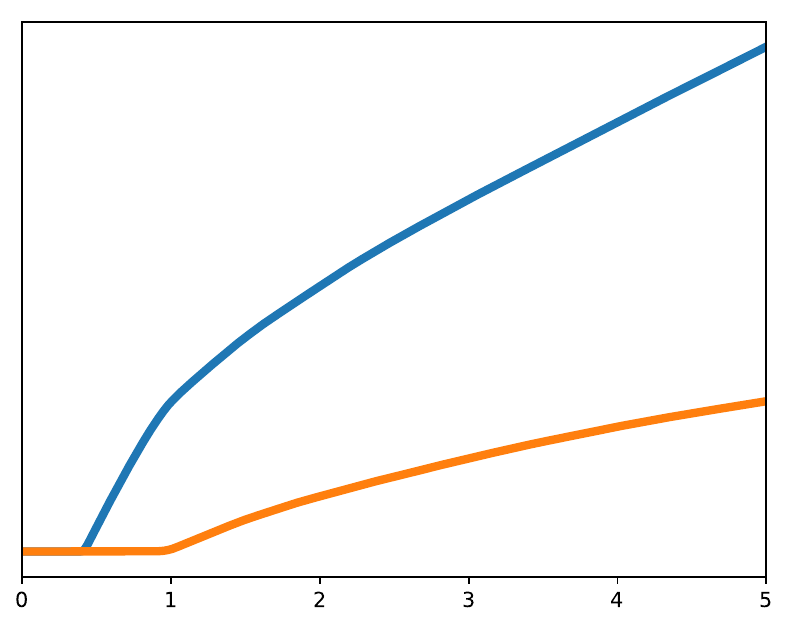}
       \end{subfigure}
      \begin{subfigure}[H]{0.2\textwidth}
        \includegraphics[width=1.8cm, height=1.5cm, trim={0 0.7cm 0 0},clip]{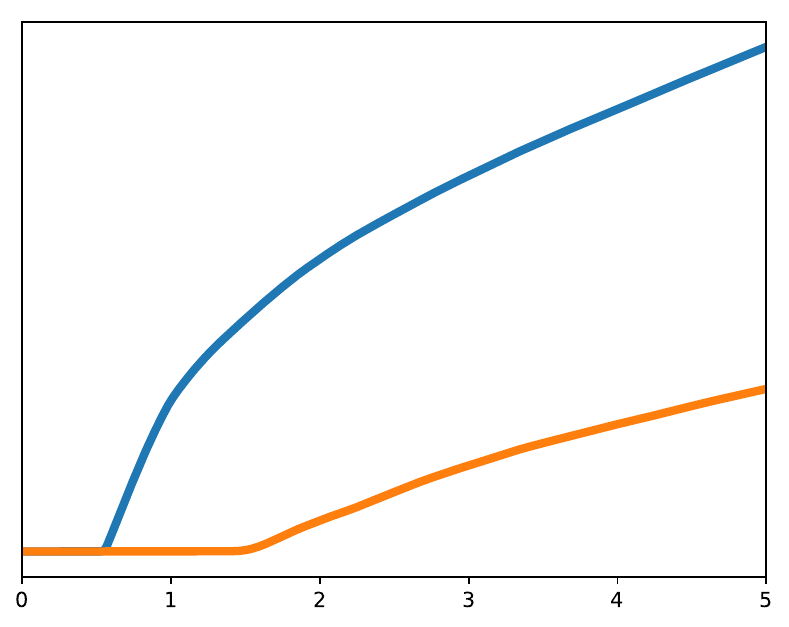}
    \end{subfigure}
      \begin{subfigure}[H]{0.2\textwidth}
        \includegraphics[width=1.8cm, height=1.5cm, trim={0 0.7cm 0 0},clip]{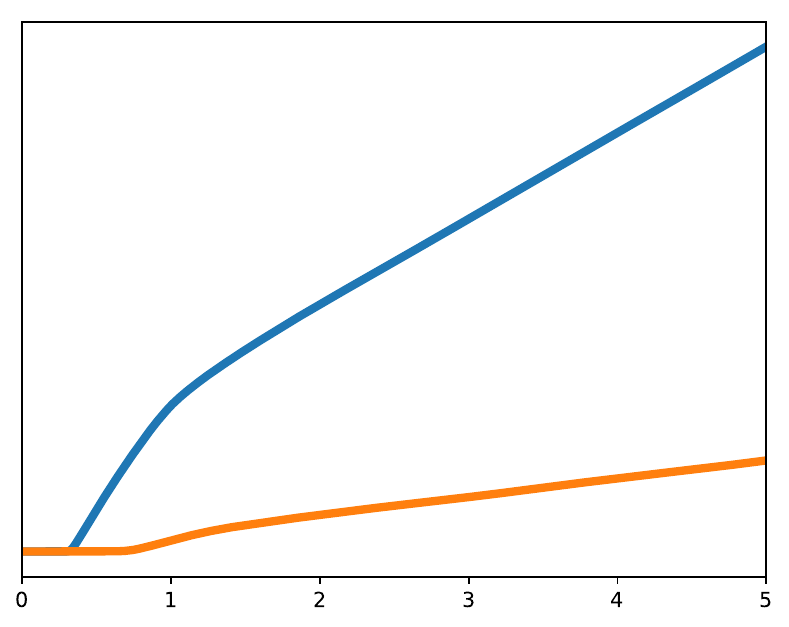}
       \end{subfigure}
      \begin{subfigure}[H]{0.2\textwidth}
        \includegraphics[width=1.8cm, height=1.5cm, trim={0 0.7cm 0 0},clip]{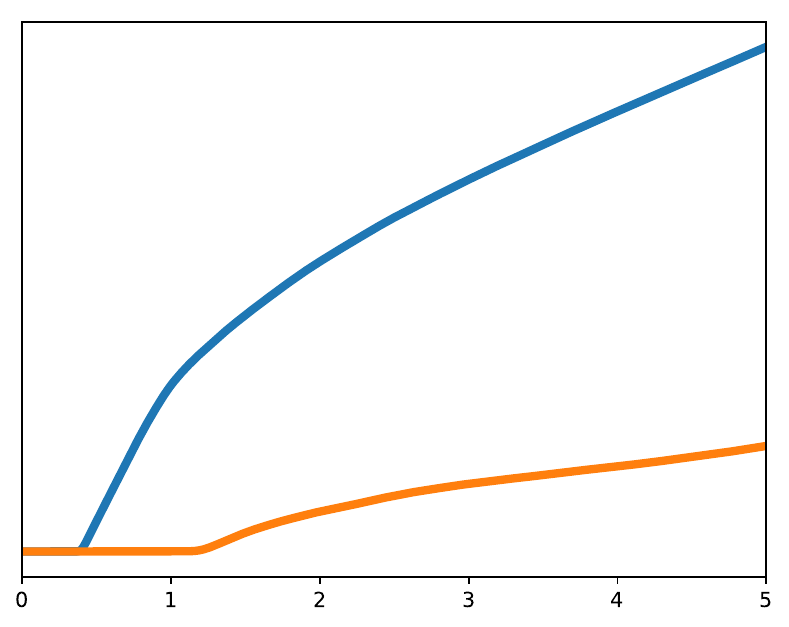}
    \end{subfigure}
    \newline\noindent
    \tiny{0\hspace{0.21cm}1\hspace{0.21cm}2\hspace{0.21cm}3\hspace{0.21cm}4\hspace{0.21cm}5}\hspace{0.20cm}
    \tiny{0\hspace{0.21cm}1\hspace{0.21cm}2\hspace{0.21cm}3\hspace{0.21cm}4\hspace{0.21cm}5}\hspace{0.20cm}
    \tiny{0\hspace{0.21cm}1\hspace{0.21cm}2\hspace{0.21cm}3\hspace{0.21cm}4\hspace{0.21cm}5}\hspace{0.20cm}
    \tiny{0\hspace{0.21cm}1\hspace{0.21cm}2\hspace{0.21cm}3\hspace{0.21cm}4\hspace{0.21cm}5}
\end{minipage}

\begin{minipage}[c][3.25cm][t]{.25\textwidth}
\vspace{.1cm}
\setlength{\tabcolsep}{0.2em} 
\begin{tabular}{|l|r|r|}
\cline{2-3} 
\multicolumn{1}{l|}{} & {\small Source} & {\small Transfer}\tabularnewline
\hline 
{\small Clean} & \small{92.9\%} & \small{96.1\%} \tabularnewline
\hline 
{\small FGSM} & \small{99.9\%} & \small{62.0\%}\tabularnewline
\hline
{\small PGD} &\small{0.0\%} & \small{54.1\%}\tabularnewline
\hline 
\end{tabular}
\caption*{\small{Large network, FGSM training}}
 \vspace*{\fill}
 \end{minipage} \hspace{0.2cm}
\begin{minipage}[c][3cm][t]{.21\textwidth}
  \vspace*{\fill}
  \centering
\includegraphics[width=3cm,height=3cm,trim={0 5.8cm 0 6.1cm},clip]{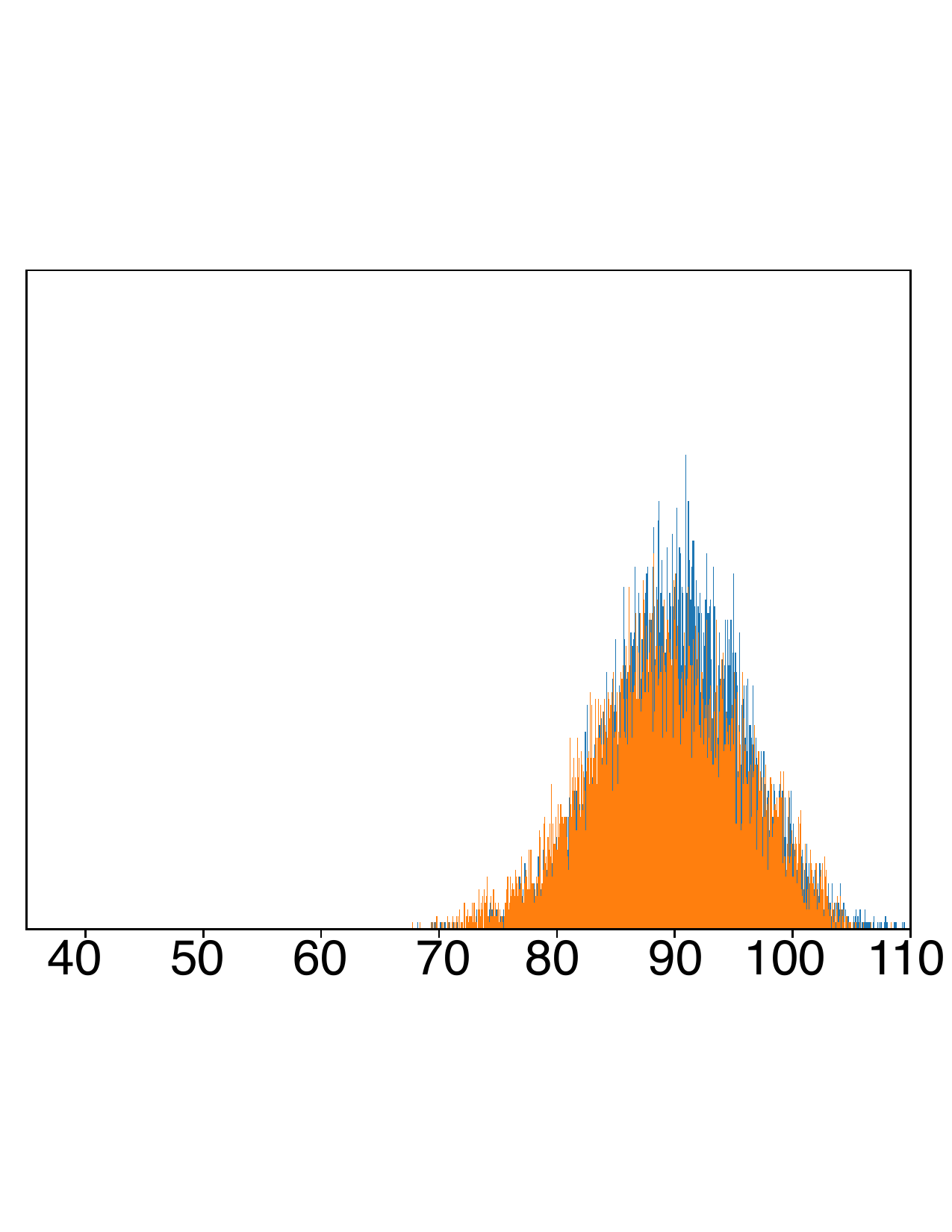}
\end{minipage}
\begin{minipage}[c][3cm][t]{.60\textwidth}
  \vspace*{\fill}
      \begin{subfigure}[H]{0.2\textwidth}
        \includegraphics[width=1.8cm, height=1.5cm, trim={0 0.7cm 0 0},clip]{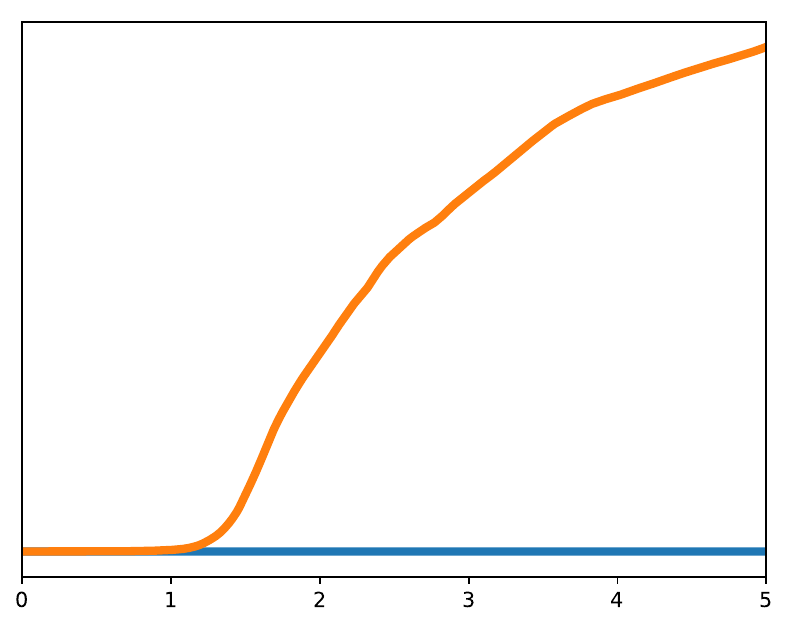}
       \end{subfigure}
      \begin{subfigure}[H]{0.2\textwidth}
        \includegraphics[width=1.8cm, height=1.5cm, trim={0 0.7cm 0 0},clip]{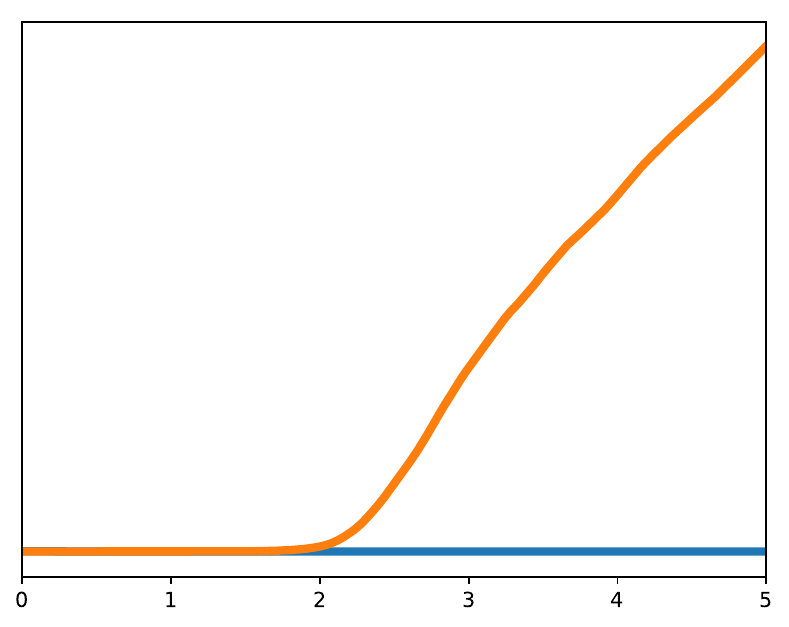}
    \end{subfigure}
      \begin{subfigure}[H]{0.2\textwidth}
        \includegraphics[width=1.8cm, height=1.5cm, trim={0 0.7cm 0 0},clip]{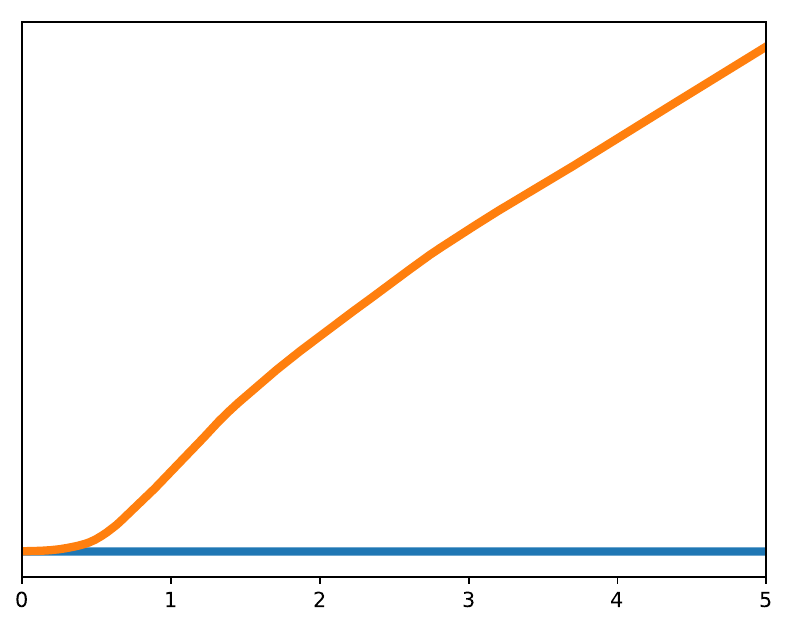}
       \end{subfigure}
      \begin{subfigure}[H]{0.2\textwidth}
        \includegraphics[width=1.8cm, height=1.5cm, trim={0 0.7cm 0 0},clip]{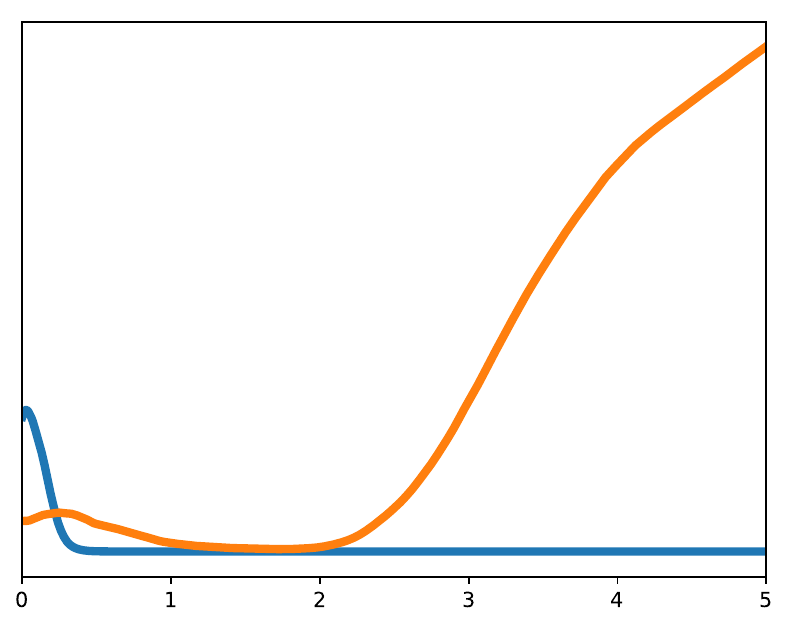}
    \end{subfigure}
   \newline\noindent
      \begin{subfigure}[H]{0.2\textwidth}
        \includegraphics[width=1.8cm, height=1.5cm, trim={0 0.7cm 0 0},clip]{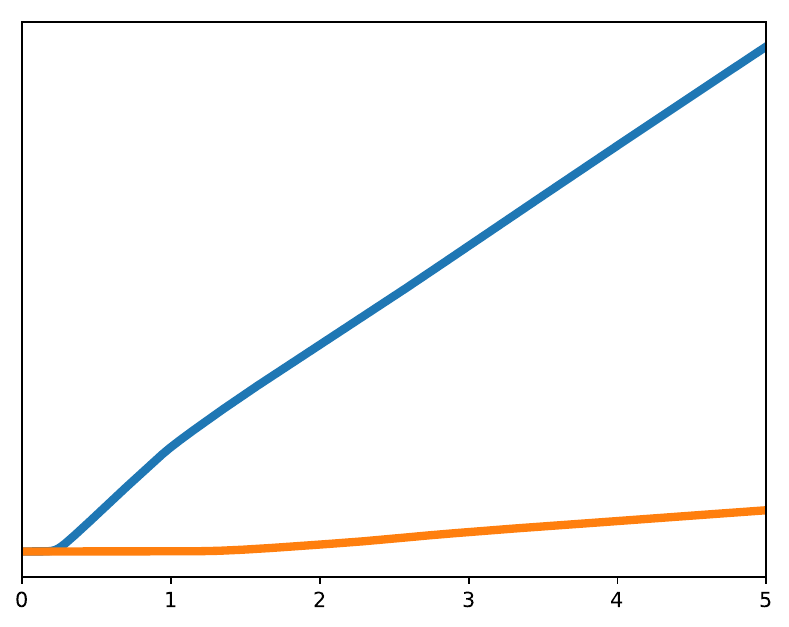}
       \end{subfigure}
      \begin{subfigure}[H]{0.2\textwidth}
        \includegraphics[width=1.8cm, height=1.5cm, trim={0 0.7cm 0 0},clip]{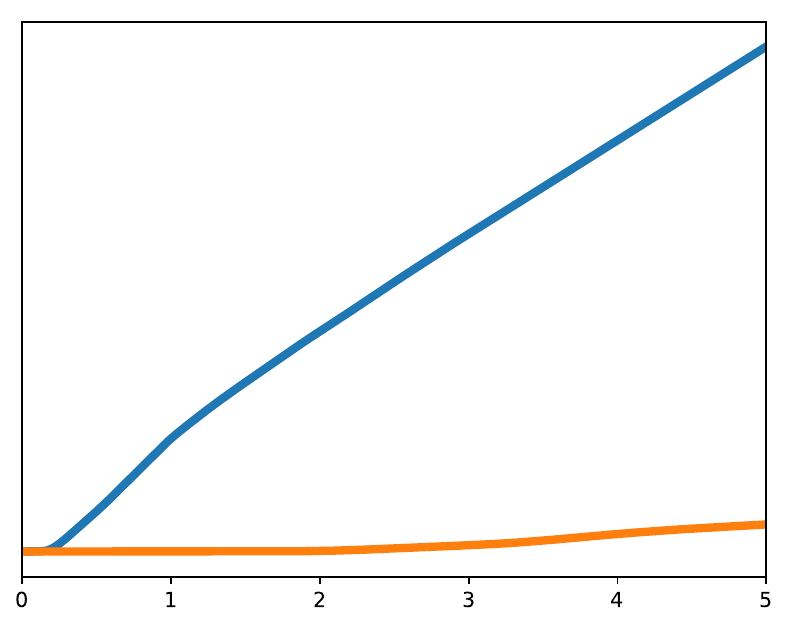}
    \end{subfigure}
      \begin{subfigure}[H]{0.2\textwidth}
        \includegraphics[width=1.8cm, height=1.5cm, trim={0 0.7cm 0 0},clip]{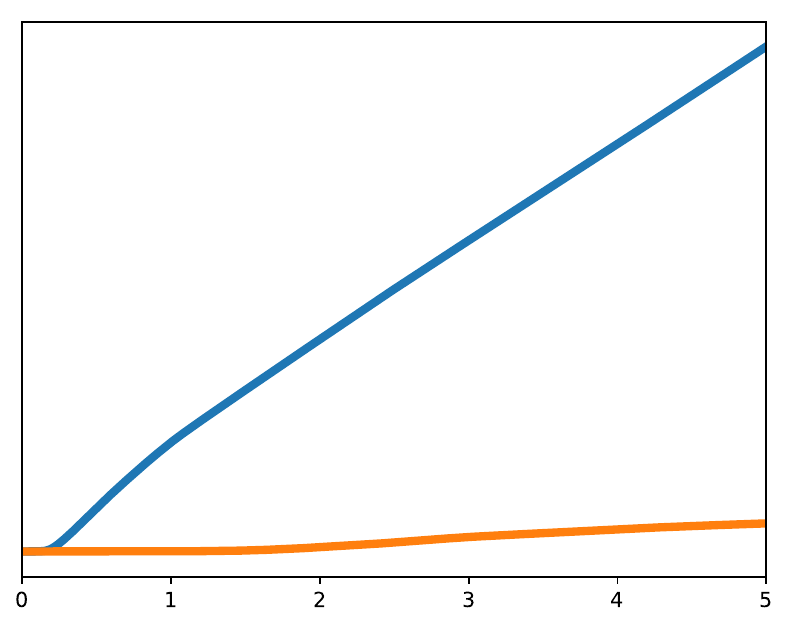}
       \end{subfigure}
      \begin{subfigure}[H]{0.2\textwidth}
        \includegraphics[width=1.8cm, height=1.5cm, trim={0 0.7cm 0 0},clip]{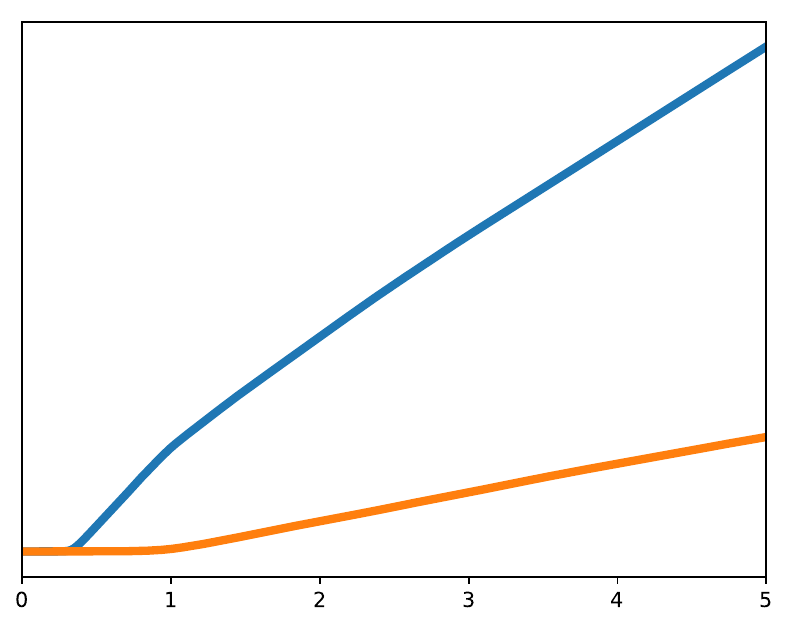}
    \end{subfigure}
    \newline\noindent
    \tiny{0\hspace{0.21cm}1\hspace{0.21cm}2\hspace{0.21cm}3\hspace{0.21cm}4\hspace{0.21cm}5}\hspace{0.20cm}
    \tiny{0\hspace{0.21cm}1\hspace{0.21cm}2\hspace{0.21cm}3\hspace{0.21cm}4\hspace{0.21cm}5}\hspace{0.20cm}
    \tiny{0\hspace{0.21cm}1\hspace{0.21cm}2\hspace{0.21cm}3\hspace{0.21cm}4\hspace{0.21cm}5}\hspace{0.20cm}
    \tiny{0\hspace{0.21cm}1\hspace{0.21cm}2\hspace{0.21cm}3\hspace{0.21cm}4\hspace{0.21cm}5}
\end{minipage}

\begin{minipage}[c][3.25cm][t]{.25\textwidth}
\vspace{.1cm}
\setlength{\tabcolsep}{0.2em} 
\begin{tabular}{|l|r|r|}
\cline{2-3} 
\multicolumn{1}{l|}{} & {\small Source} & {\small Transfer}\tabularnewline
\hline 
{\small Clean} & \small{96.4\%} & \small{97.8\%} \tabularnewline
\hline 
{\small FGSM} & \small{99.4\%} & \small{71.6\%}\tabularnewline
\hline
{\small PGD} &\small{0.0\%} & \small{60.6\%}\tabularnewline
\hline 
\end{tabular}
\caption*{\small{Very large network, FGSM training}}
 \vspace*{\fill}
 \end{minipage} \hspace{0.2cm}
\begin{minipage}[c][3cm][t]{.21\textwidth}
  \vspace*{\fill}
  \centering
\includegraphics[width=3cm,height=3cm,trim={0 5.8cm 0 6.1cm},clip]{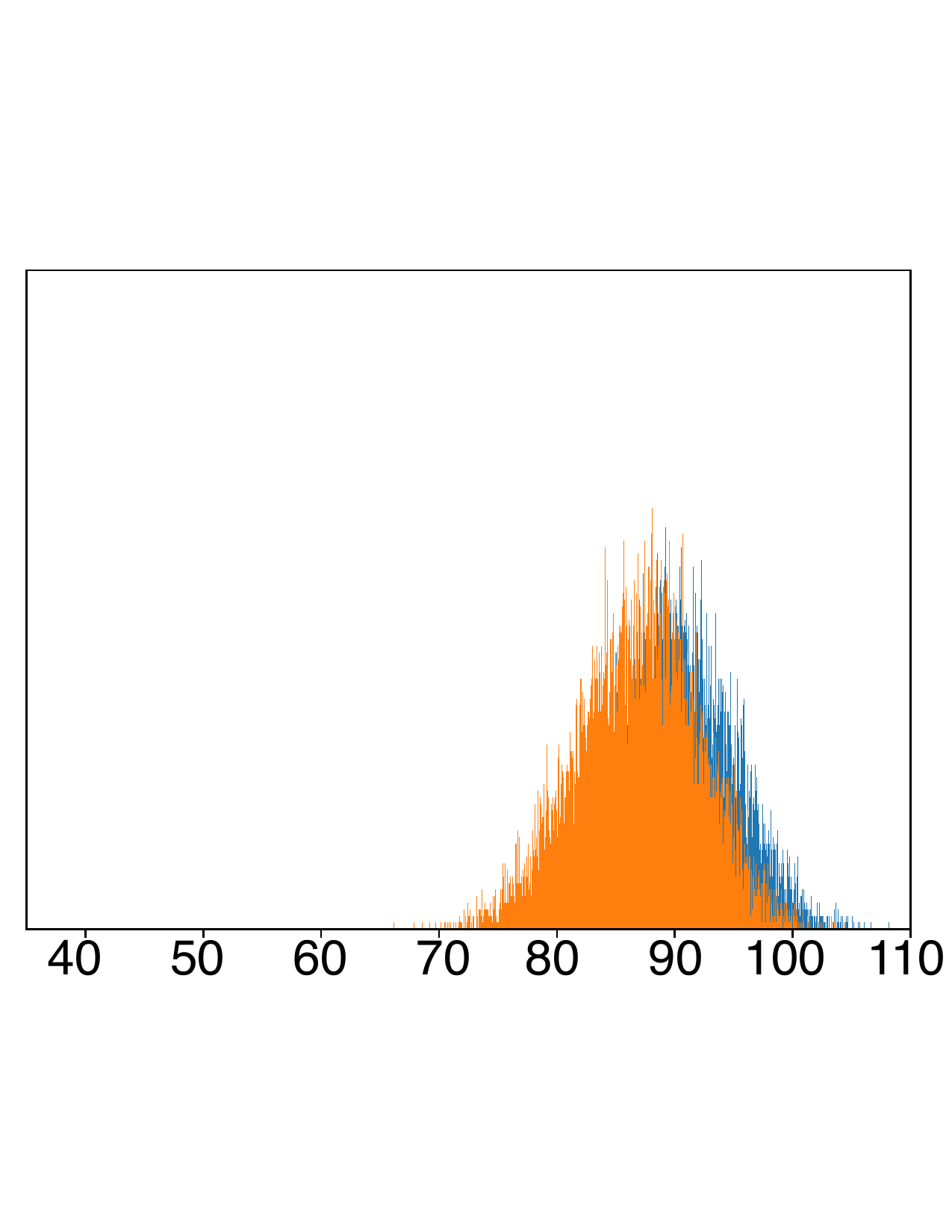}
\end{minipage}
\begin{minipage}[c][3cm][t]{.60\textwidth}
  \vspace*{\fill}
      \begin{subfigure}[H]{0.2\textwidth}
        \includegraphics[width=1.8cm, height=1.5cm, trim={0 0.7cm 0 0},clip]{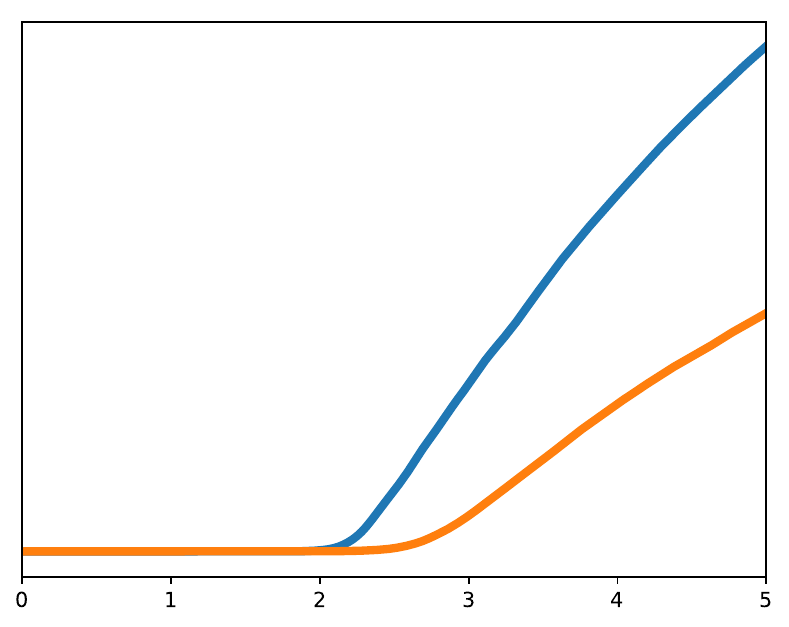}
       \end{subfigure}
      \begin{subfigure}[H]{0.2\textwidth}
        \includegraphics[width=1.8cm, height=1.5cm, trim={0 0.7cm 0 0},clip]{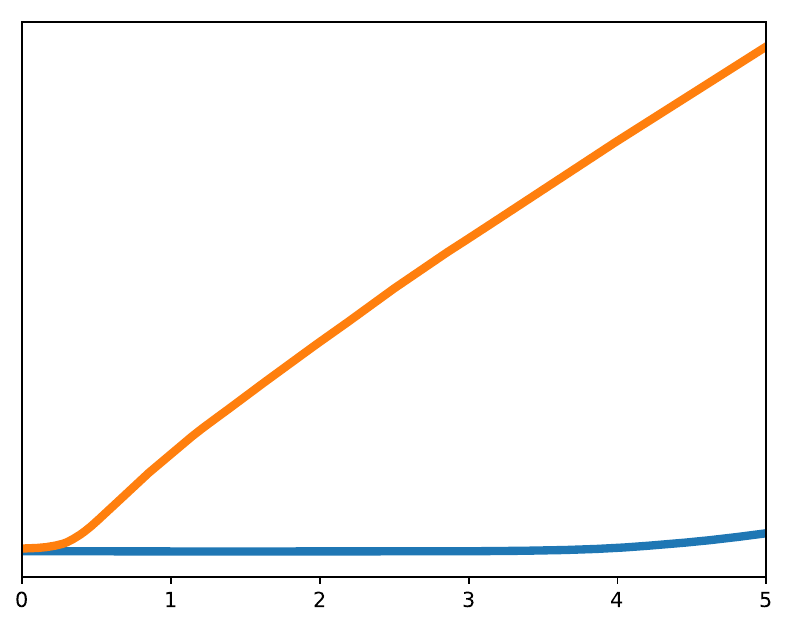}
    \end{subfigure}
      \begin{subfigure}[H]{0.2\textwidth}
        \includegraphics[width=1.8cm, height=1.5cm, trim={0 0.7cm 0 0},clip]{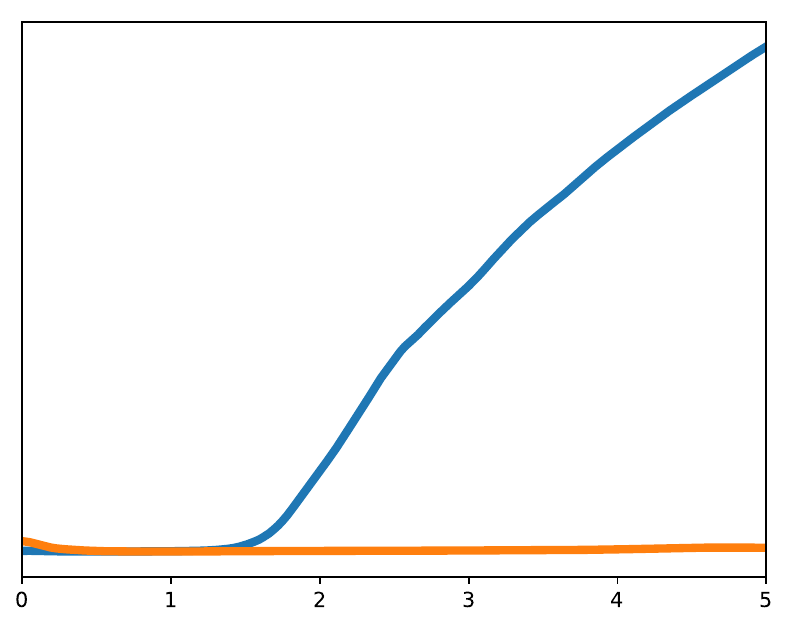}
       \end{subfigure}
      \begin{subfigure}[H]{0.2\textwidth}
        \includegraphics[width=1.8cm, height=1.5cm, trim={0 0.7cm 0 0},clip]{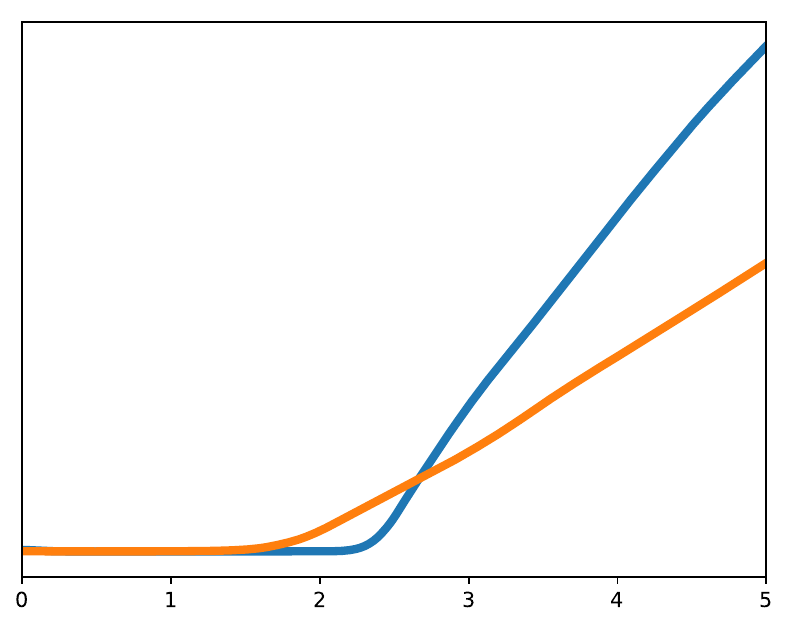}
    \end{subfigure}
   \newline\noindent
      \begin{subfigure}[H]{0.2\textwidth}
        \includegraphics[width=1.8cm, height=1.5cm, trim={0 0.7cm 0 0},clip]{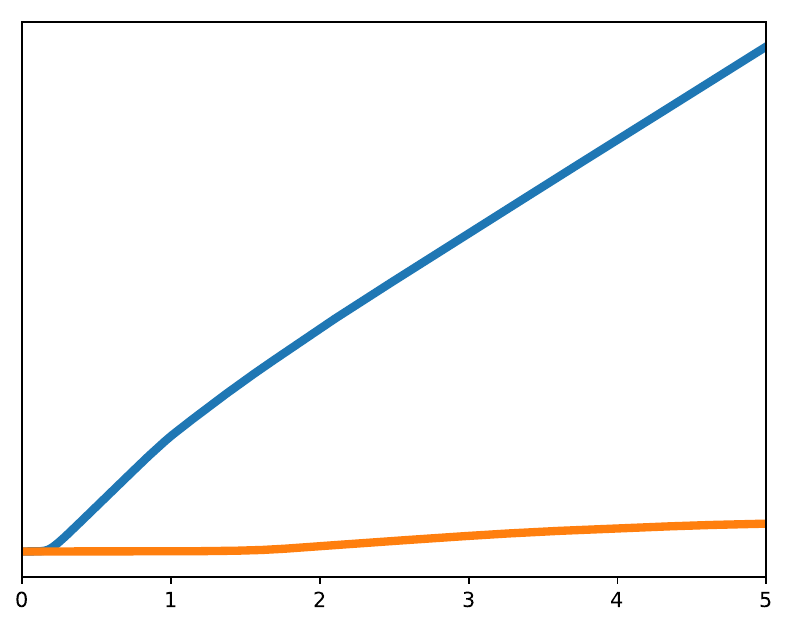}
       \end{subfigure}
      \begin{subfigure}[H]{0.2\textwidth}
        \includegraphics[width=1.8cm, height=1.5cm, trim={0 0.7cm 0 0},clip]{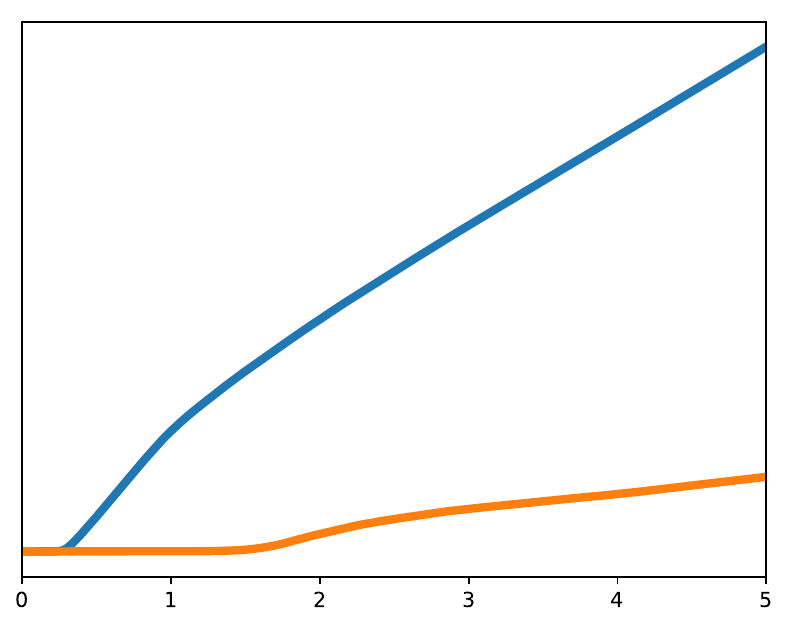}
    \end{subfigure}
      \begin{subfigure}[H]{0.2\textwidth}
        \includegraphics[width=1.8cm, height=1.5cm, trim={0 0.7cm 0 0},clip]{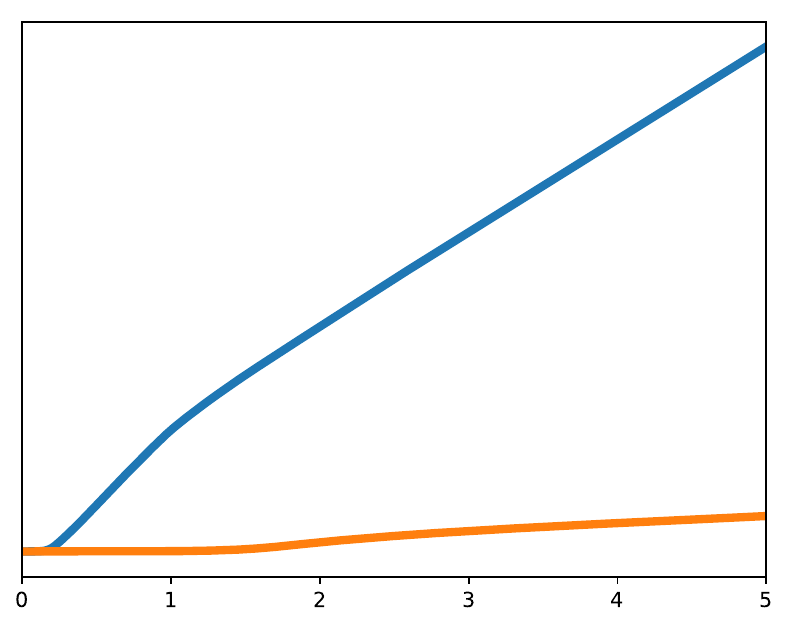}
       \end{subfigure}
      \begin{subfigure}[H]{0.2\textwidth}
        \includegraphics[width=1.8cm, height=1.5cm, trim={0 0.7cm 0 0},clip]{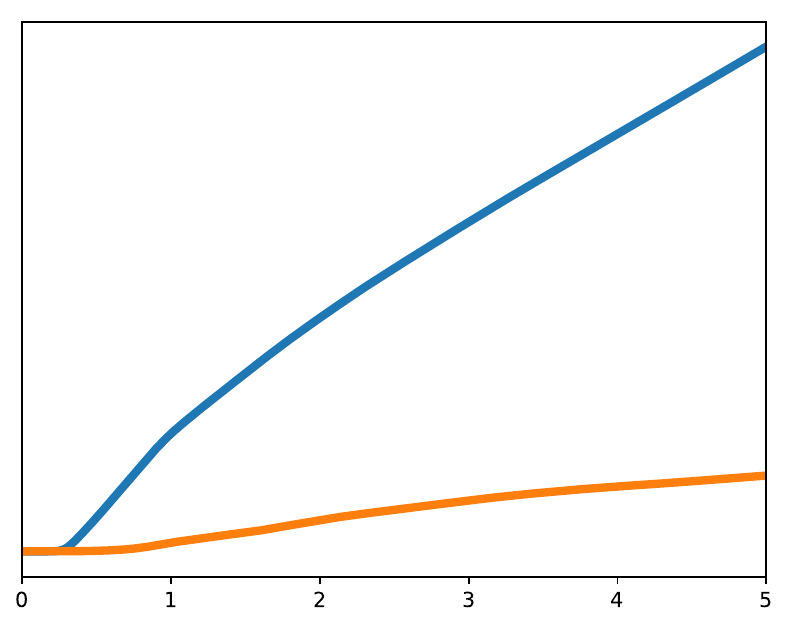}
    \end{subfigure}
    \newline\noindent
    \tiny{0\hspace{0.21cm}1\hspace{0.21cm}2\hspace{0.21cm}3\hspace{0.21cm}4\hspace{0.21cm}5}\hspace{0.20cm}
    \tiny{0\hspace{0.21cm}1\hspace{0.21cm}2\hspace{0.21cm}3\hspace{0.21cm}4\hspace{0.21cm}5}\hspace{0.20cm}
    \tiny{0\hspace{0.21cm}1\hspace{0.21cm}2\hspace{0.21cm}3\hspace{0.21cm}4\hspace{0.21cm}5}\hspace{0.20cm}
    \tiny{0\hspace{0.21cm}1\hspace{0.21cm}2\hspace{0.21cm}3\hspace{0.21cm}4\hspace{0.21cm}5}
\end{minipage}

\caption{Transferability experiments for four different instances (standard large and very large networks, and FGSM-trained large and very large networks, respectively). For each
instance we ran the same training algorithm twice, starting from different initializations.
Tables on the left show the accuracy of the networks against three types of input (clean, perturbed with FGSM, perturbed with PGD ran for 40 steps); the first column shows the resilience of the first network against  examples produced using its own gradients, the second column shows resilience of the second network against examples transferred from the former network.   The histograms reflect angles between pairs of gradients corresponding to the same inputs versus the baseline consisting of angles between  gradients from random pairs of points. Images on the right hand side reflect how the loss functions of the native and the transfer network change when moving in the direction of the perturbation; the perturbation is at $1$ on the horizontal axis. Plots in the top row are for FGSM perturbations, plots in the bottom row are for PGD perturbations produced over 40 iterations. }\label{fig:transfer}
\end{figure}

\clearpage
\section{MNIST Inspection}
\label{app:inspection}
The robust MNIST model described so far is small enough that we can visually
inspect most of its parameters. Doing so will allow us to understand how it is
different from a standard network and what are the general
characteristics of a network that is robust against $\ell_\infty$ adversaries.
We will compare three different networks: a standard model, and two
adversarially trained ones. The latter two models are identical,
modulo the random weight initialization, and were used as the public and secret
models used for our robustness challenge.

Initially, we examine the first convolutional layer of each network. We observe
that the robust models only utilize 3 out of the total 32 filters, and for
each of these filters only one weight is non-zero. By doing so, the convolution
degrades into a scaling of the original image. Combined with the bias and the
ReLU that follows, this results in a \emph{thresholding filter}, or equivalently
$\textrm{ReLU}(\alpha  x - \beta)$ for some constants $\alpha$, $\beta$.  From
the perspective of adversarial robustness, thresholding filters are immune to
any perturbations on pixels with value less than $\beta - \eps$. We visualize a
sample of the filters in Figure~\ref{fig:mnist_conv} (plots a, c, and e).

\begin{figure}[htp]
\begin{center}
\begin{tabular}{cc}
(a) Standard Model First Conv. Layers & (b) Natural Model Second Conv. Layer\\
\includegraphics[scale=0.13]{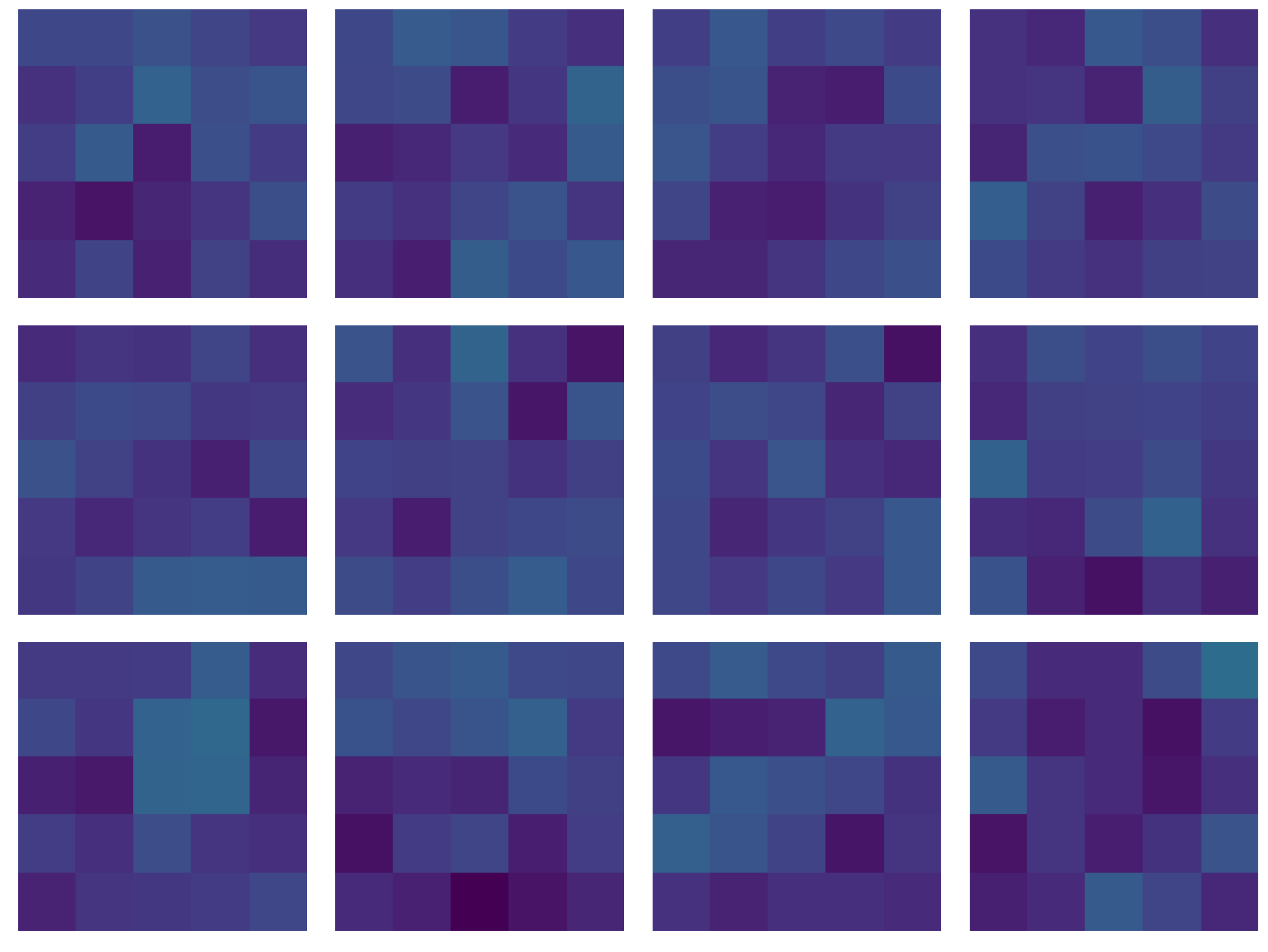} &
\includegraphics[scale=0.13]{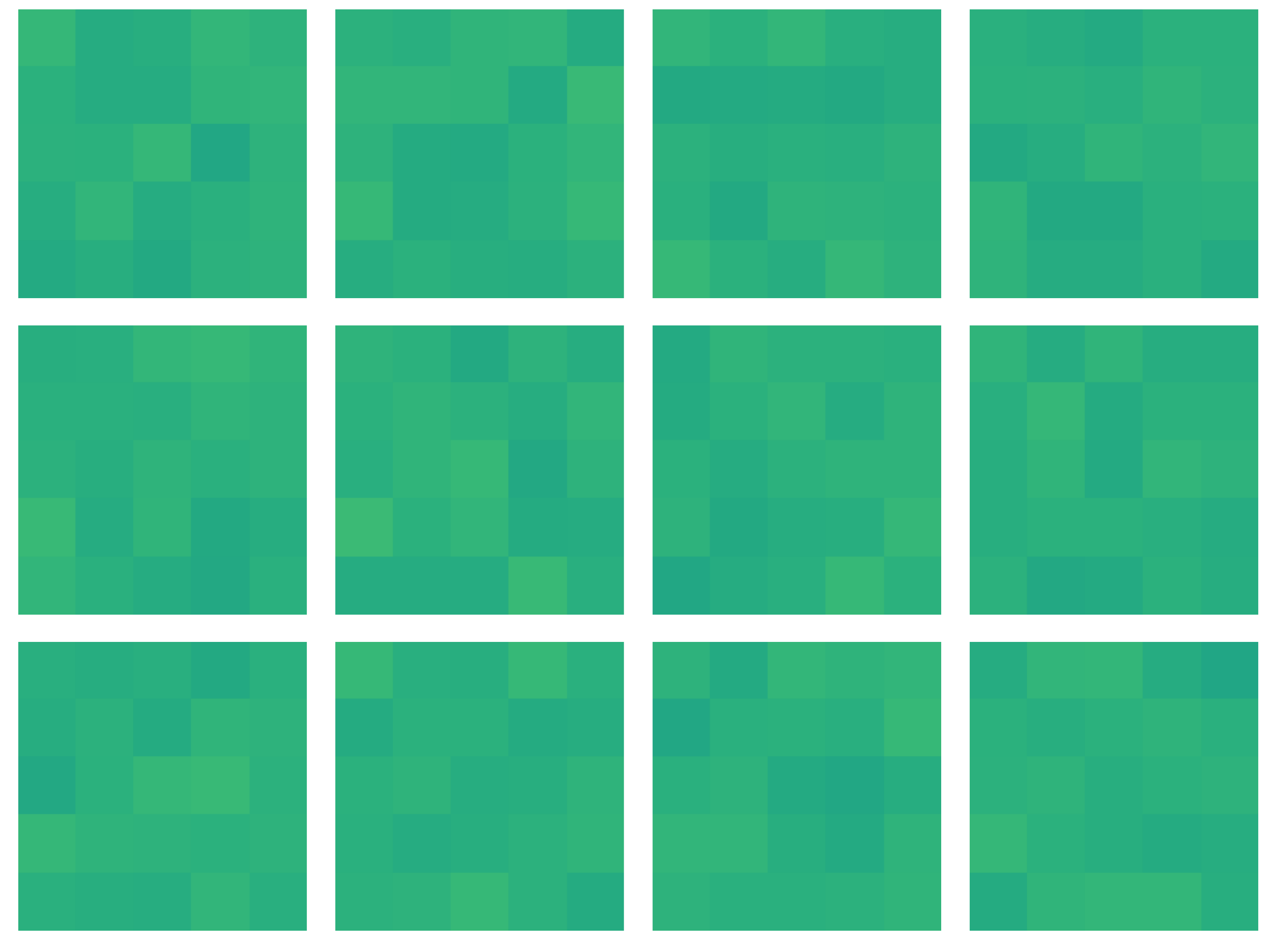} \\
(c) Public Model First Conv. Layers & (d) Public Model Second Conv. Layer\\
\includegraphics[scale=0.13]{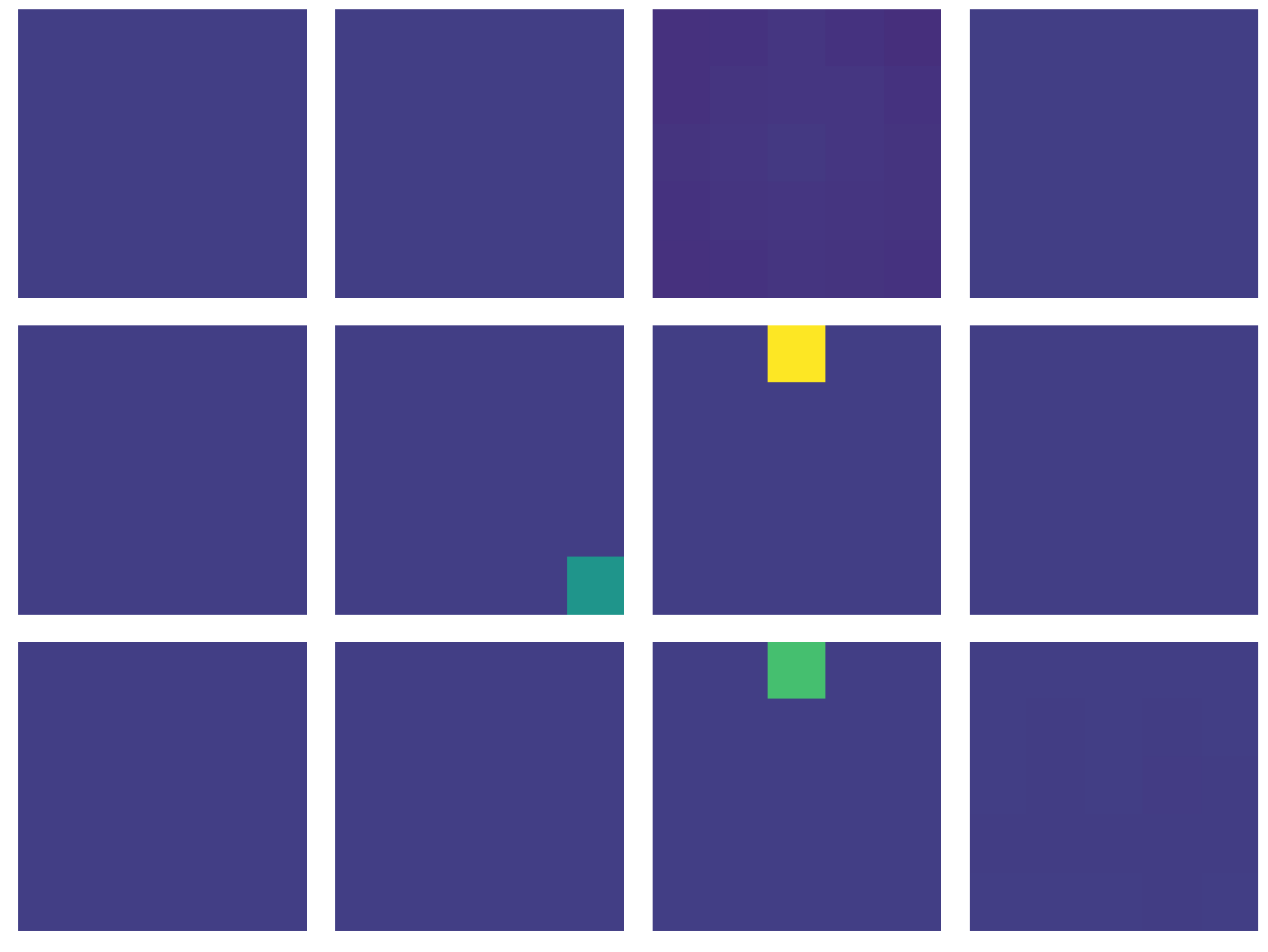} &
\includegraphics[scale=0.13]{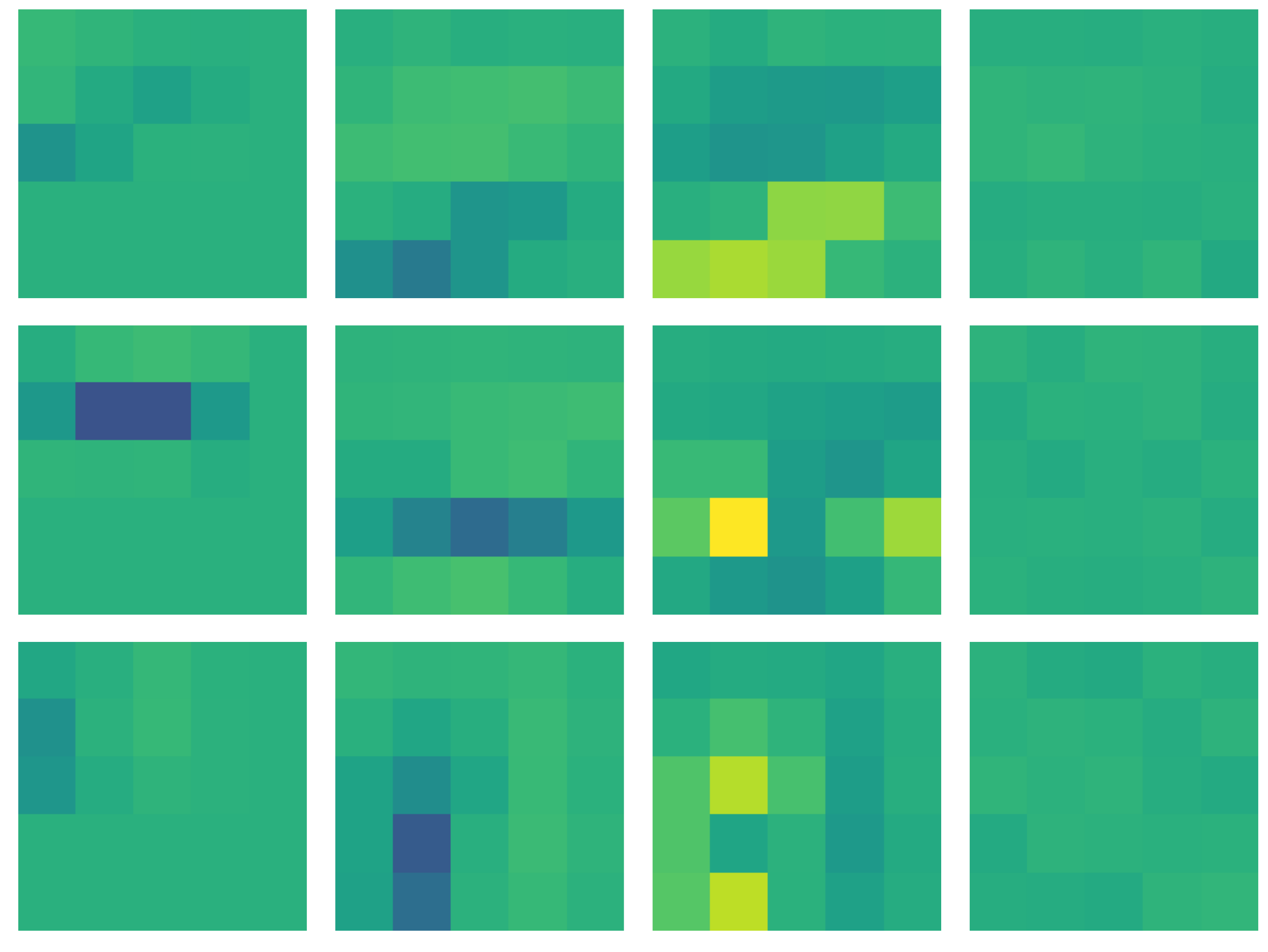} \\
(e) Secret Model First Conv. Layers & (f) Secret Model Second Conv. Layer\\
\includegraphics[scale=0.13]{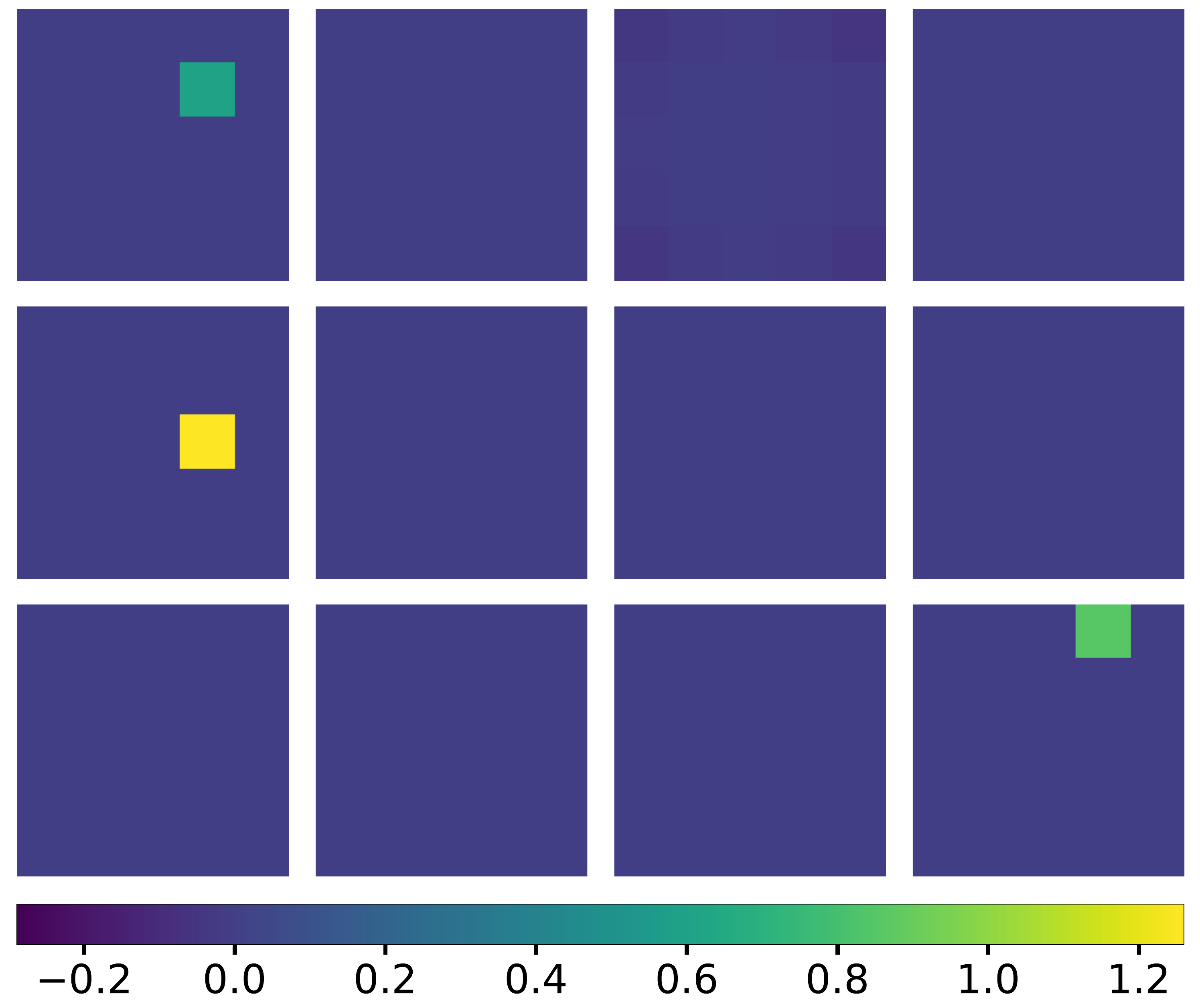} &
\includegraphics[scale=0.13]{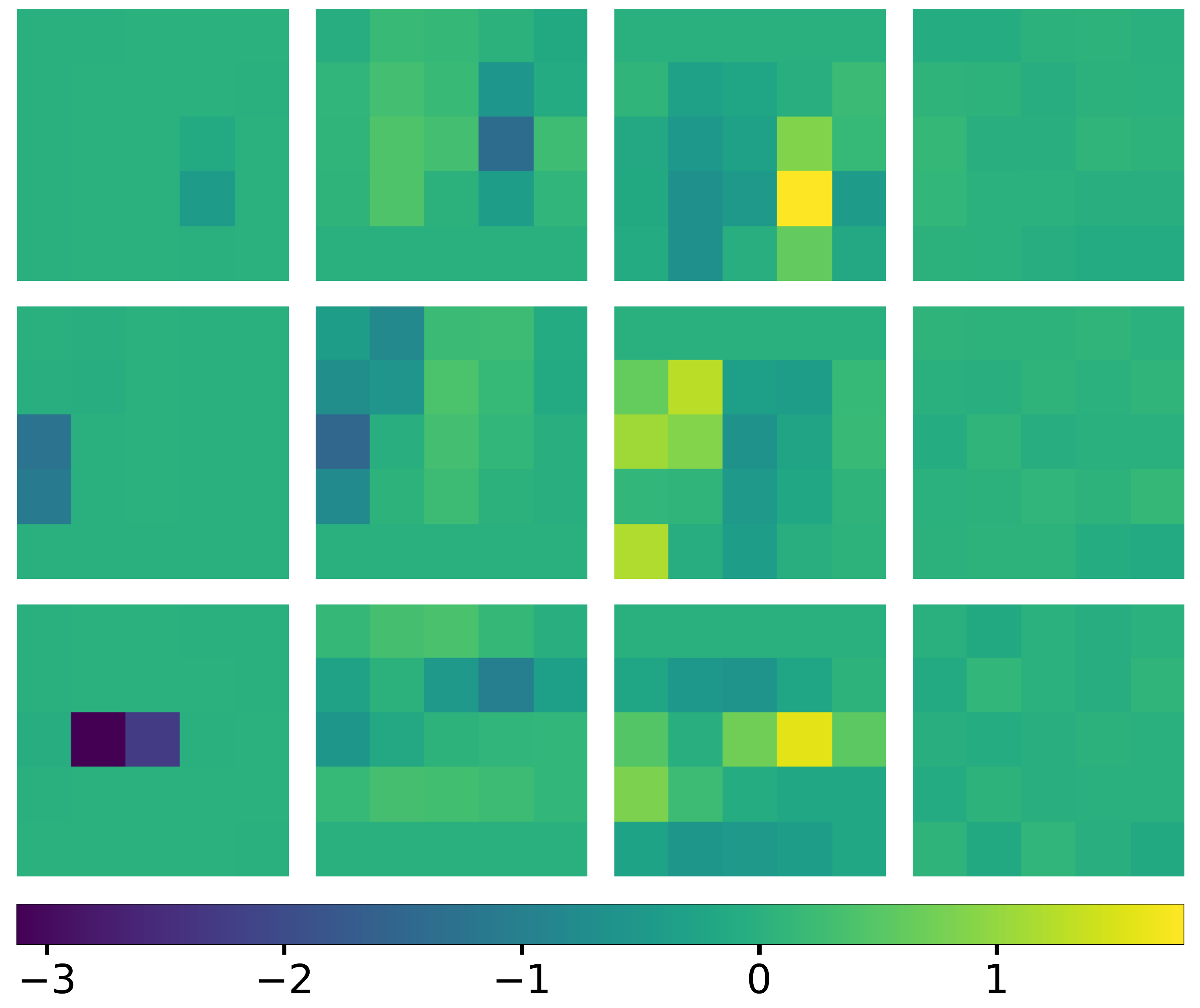}
\end{tabular}
\end{center}
\caption{Visualizing a sample of the convolutional filters. For the standard
model (a,b) we visualize random filters, since there is no observable difference
in any of them. For the first layer of robust networks we make sure to include
the 3 non-zero filters. For the second layer, the first three columns represent
convolutional filters that utilize the 3 non-zero channels, and we choose the
most interesting ones (larger range of values). We observe that adversarially
trained networks have significantly more concentrated weights. Moreover, the
first convolutional layer degrades into a few thresholding filters.}
\label{fig:mnist_conv}
\end{figure}

Having observed that the first layer of the network essentially maps the
original image to three copies thresholded at different values, we examine the
second convolutional layer of the classifier. Again, the filter weights are
relatively sparse and have a significantly wider value range than the standard
version. Since only three channels coming out of the first layer matter,
is follows (and is verified) that the only relevant convolutional filters are
those that interact with these three channels. We visualize a sample of the
filters in Figure~\ref{fig:mnist_conv} (plots b, d, and f).

Finally, we examine the softmax/output layer of the network. While the weights
seem to be roughly similar between all three version of the network, we notice a
significant difference in the class biases. The adversarially trained networks
heavily utilize class biases (far from uniform), and do so in a way very similar
to each other. A plausible explanation is that certain classes tend to be very
vulnerable to adversarial perturbations, and the network learns to be more
conservative in predicting them. The plots can be found in
Figure~\ref{fig:mnist_softmax}.

\begin{figure}[!ht]
\begin{center}
\begin{tabular}{cc}
\includegraphics[scale=0.7]{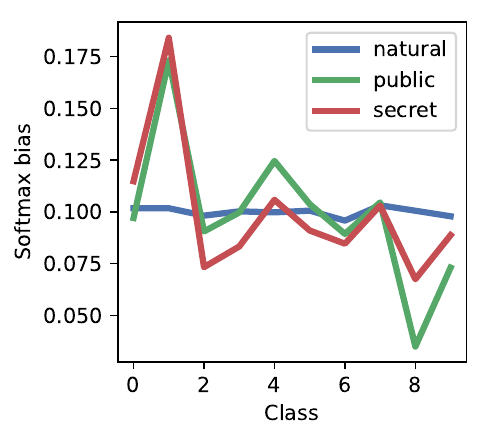} &
\includegraphics[scale=0.7]{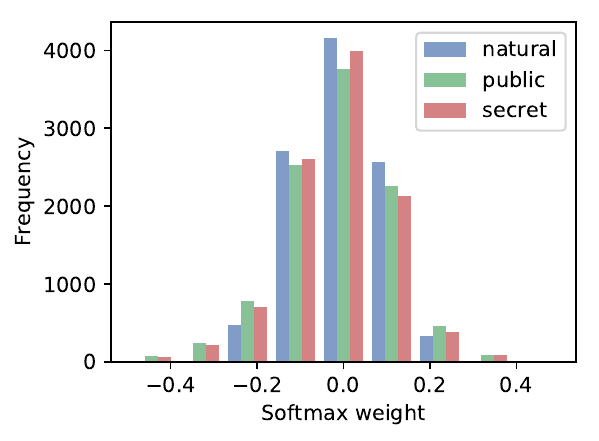}\\
(a) Softmax biases for each class & (b) Distribution of softmax weights
\end{tabular}
\end{center}
\caption{Softmax layer examination. For each network we create a histogram of
the layer's weights and plot the per-class bias. We observe that while weights
are similar (slightly more concentrated for the standard one) the biases are far
from uniform and with a similar pattern for the two adversarially trained
networks.}
\label{fig:mnist_softmax}
\end{figure}

All of the ``tricks'' described so far seem intuitive to a human and would seem
reasonable directions when trying to increase the adversarial robustness of a
classifier. We emphasize the none of these modifications were hard-coded in any
way and they were all \emph{learned solely through adversarial training}. We
attempted to manually introduce these modifications ourselves, aiming to achieve
adversarial robustness without adversarial training, but with no success. A
simple PGD adversary could fool the resulting models on all the test set
examples.

\section{Supplementary Figures}
\label{sec:omitted}

\begin{figure}[htb]
\begin{center}
{\setlength\tabcolsep{-.0cm}
\begin{tabular}{c c c c c}
& & MNIST & & \\
\includegraphics[height=.142\textwidth]{figures/mnist_prog_nat_1.pdf} &
\includegraphics[height=.142\textwidth]{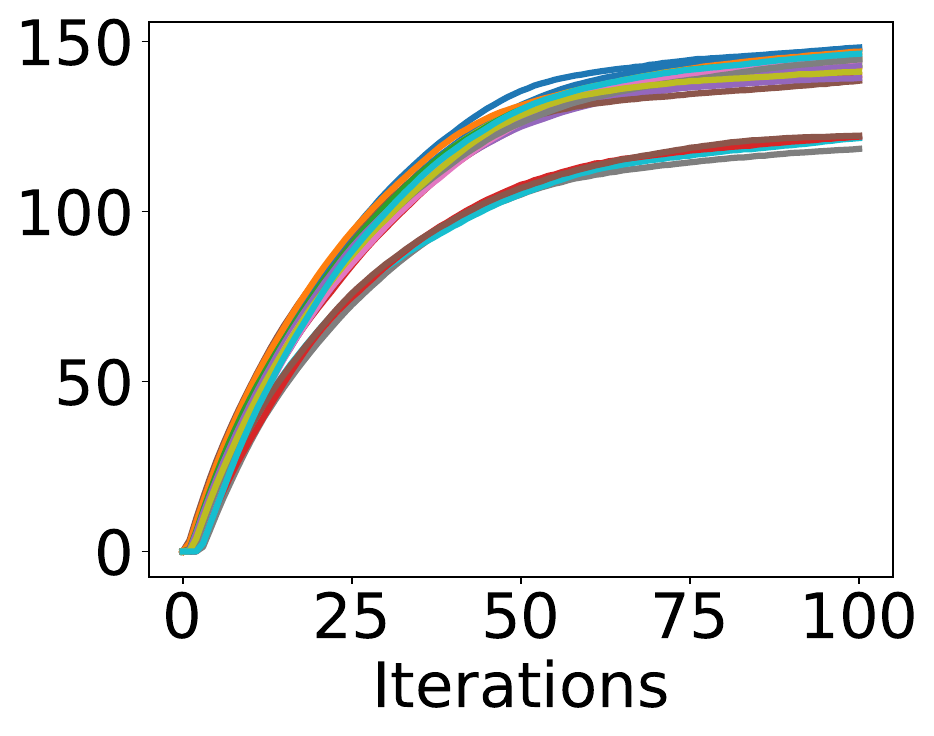} &
\includegraphics[height=.142\textwidth]{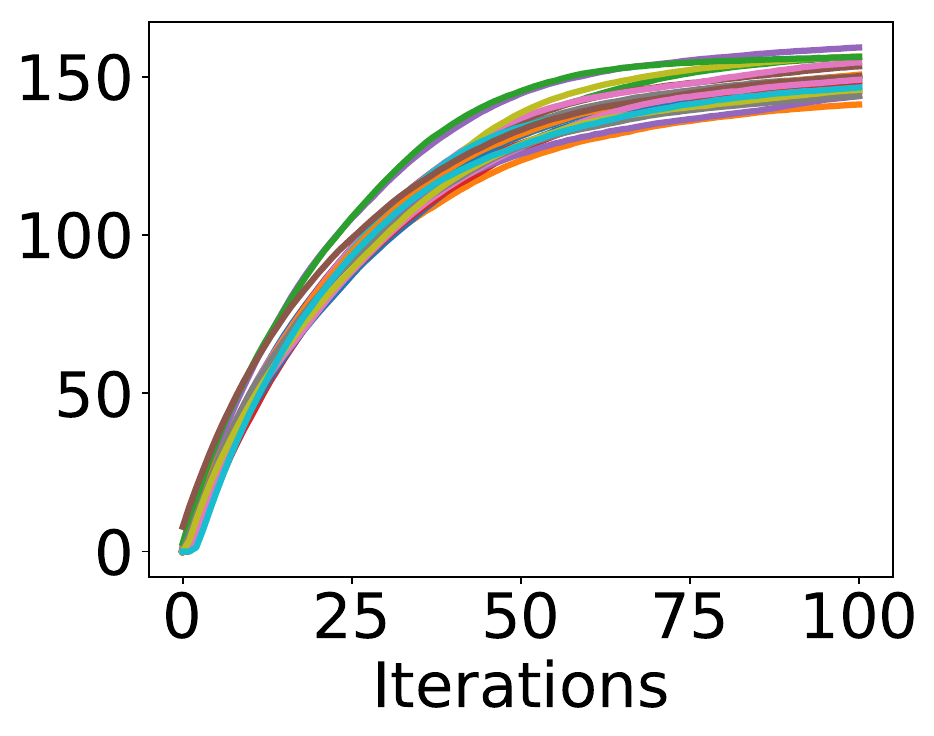} &
\includegraphics[height=.142\textwidth]{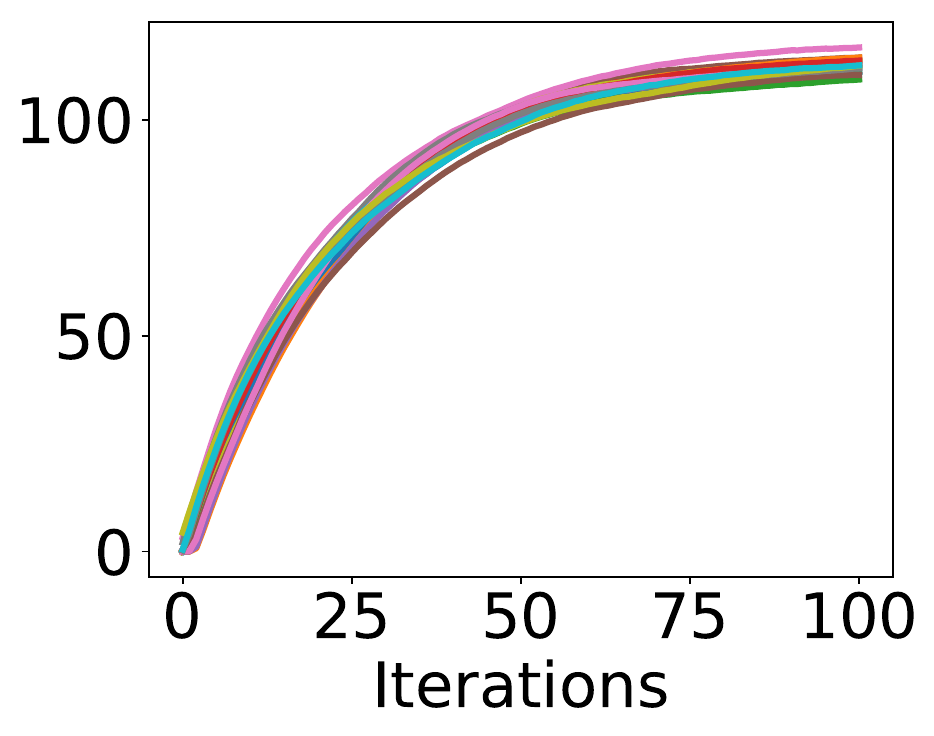} &
\includegraphics[height=.142\textwidth]{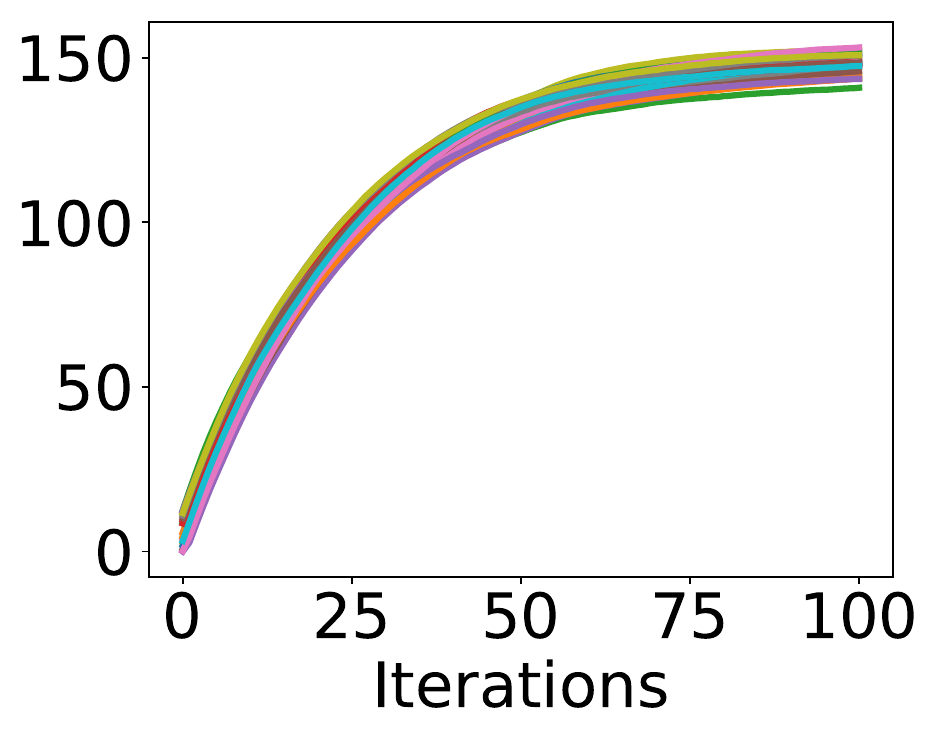} \\
\includegraphics[height=.142\textwidth]{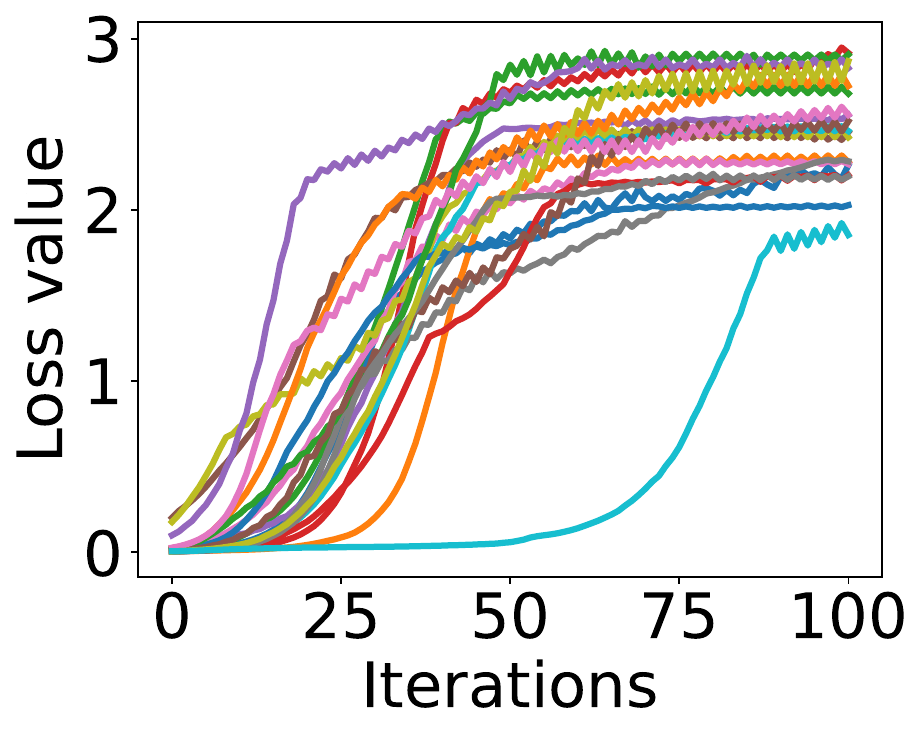} &
\includegraphics[height=.142\textwidth]{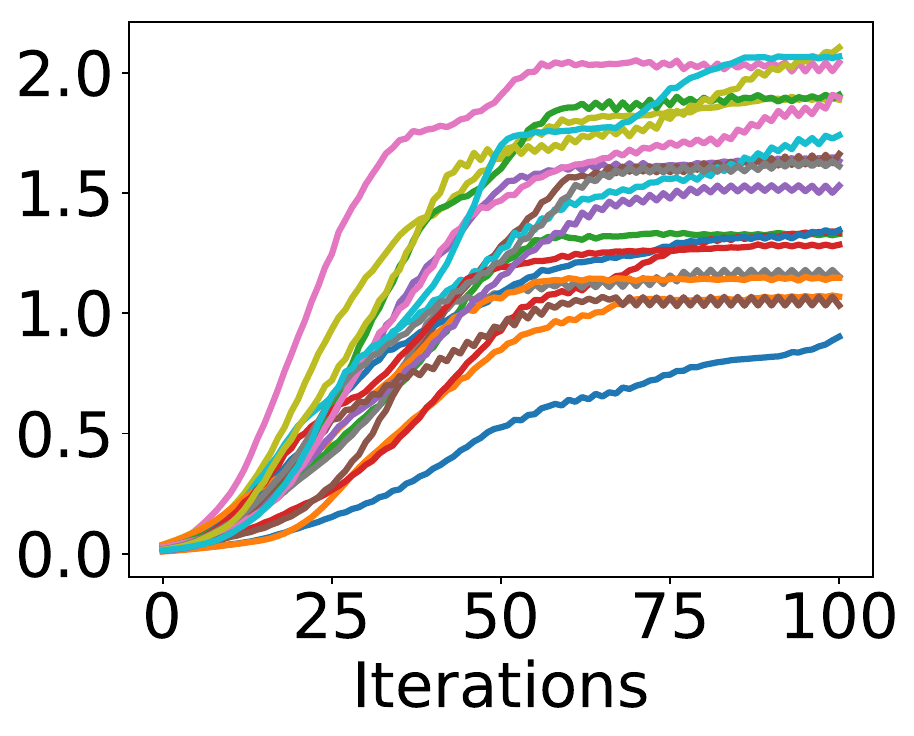} &
\includegraphics[height=.142\textwidth]{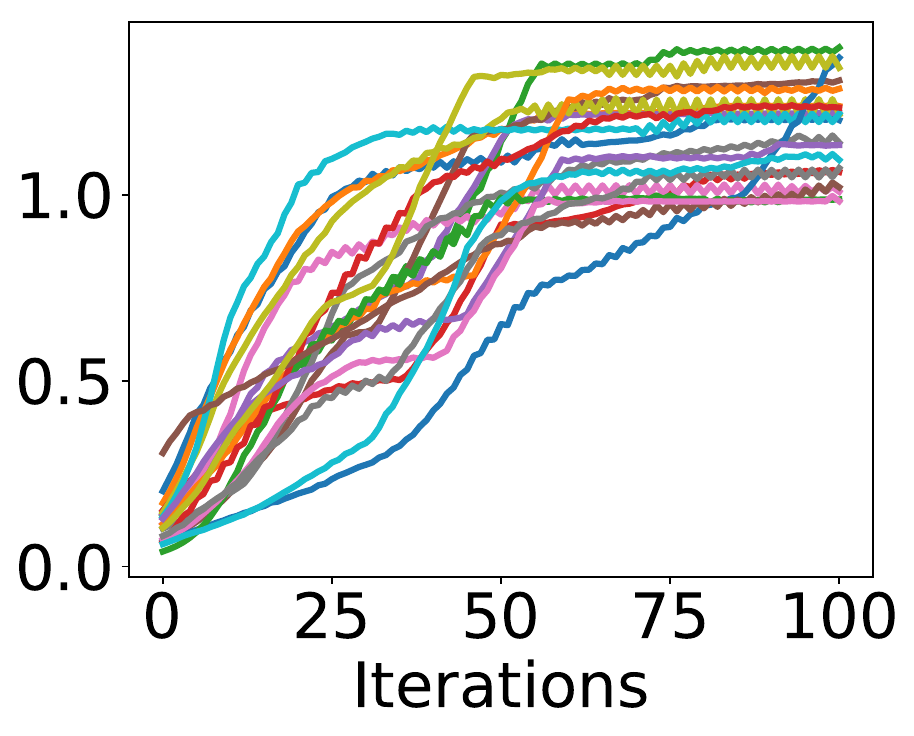} &
\includegraphics[height=.142\textwidth]{figures/mnist_prog_adv_4.pdf} &
\includegraphics[height=.142\textwidth]{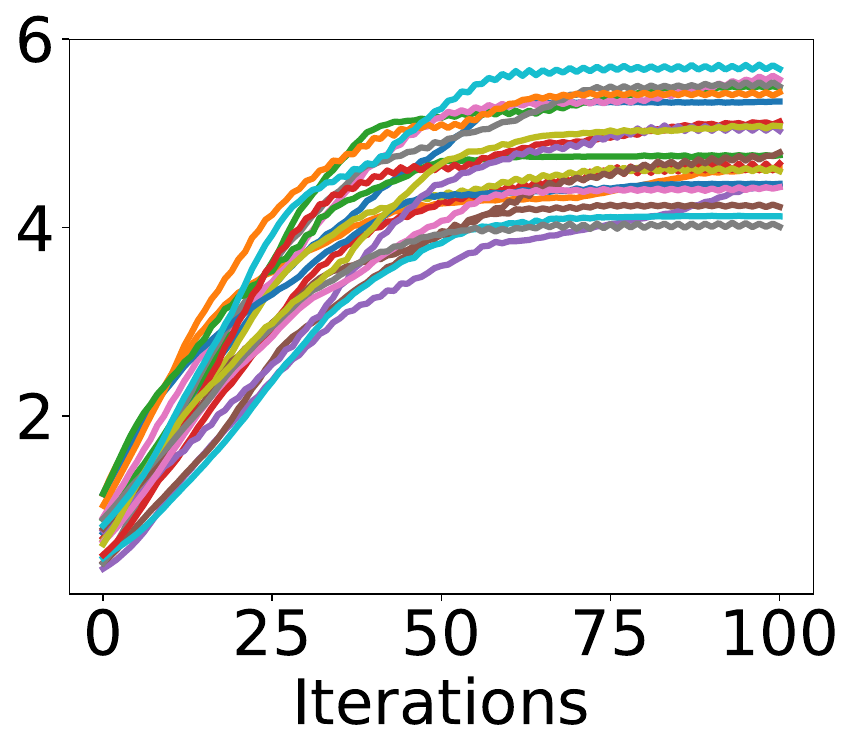} \\
& & CIFAR10 & & \\
\includegraphics[height=.142\textwidth]{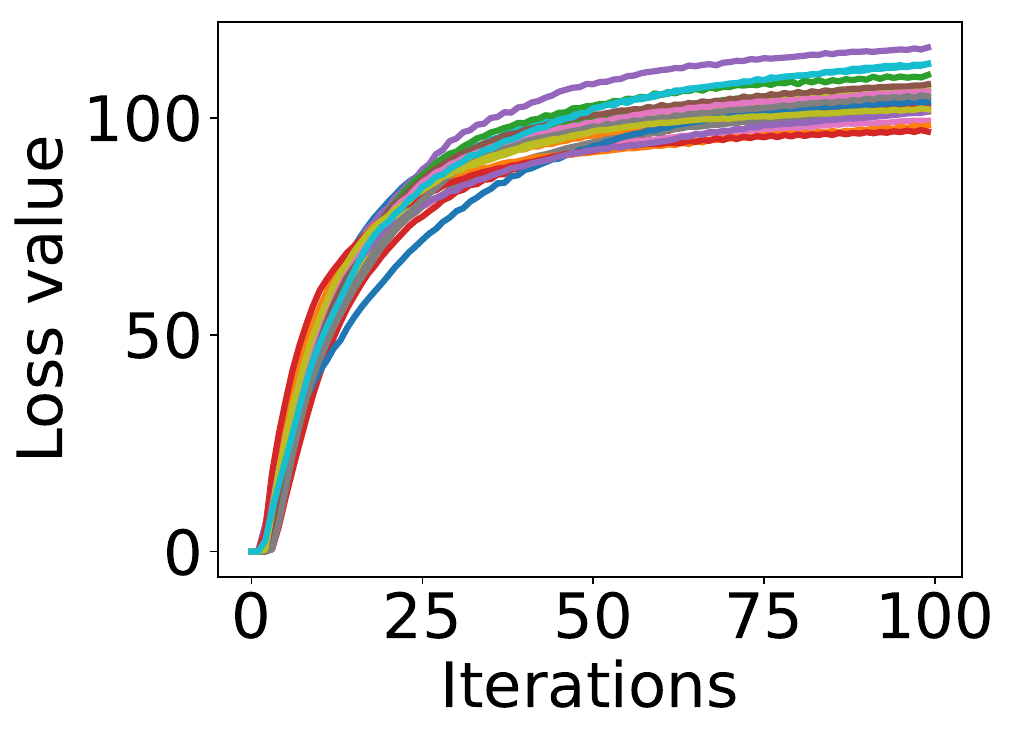} &
\includegraphics[height=.142\textwidth]{figures/amakelov/cifar10_vanilla/loss_progress_1.pdf} &
\includegraphics[height=.142\textwidth]{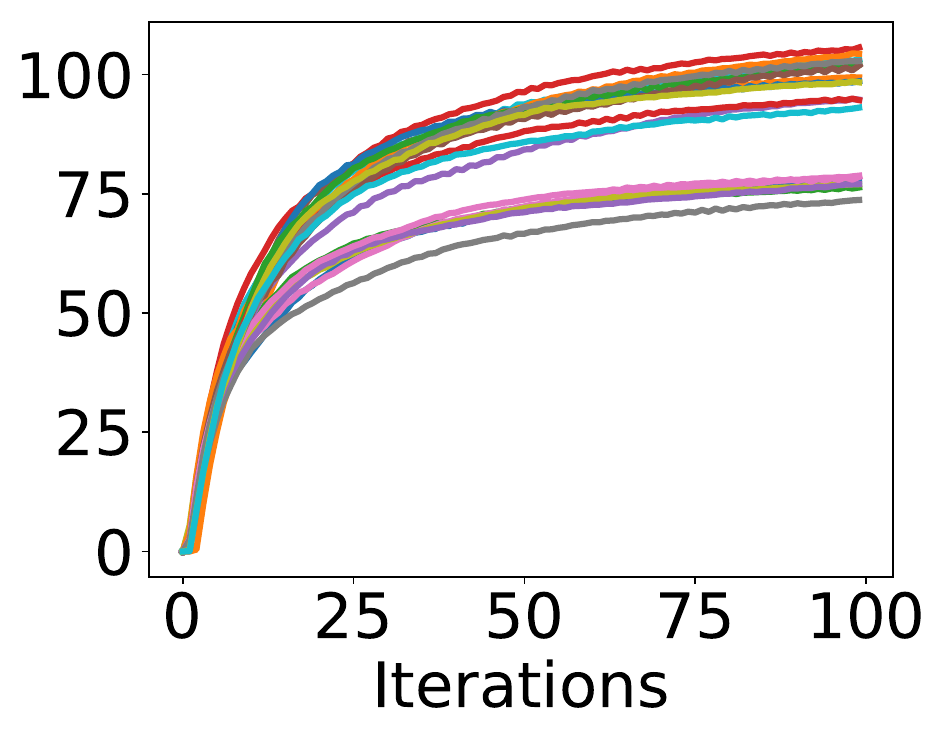} &
\includegraphics[height=.142\textwidth]{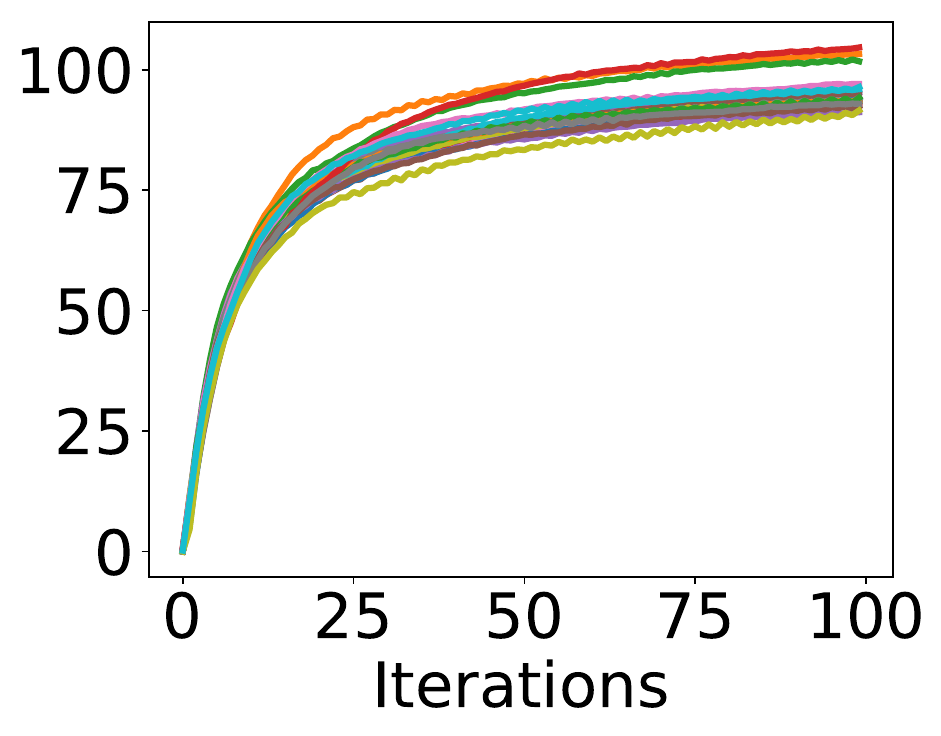} &
\includegraphics[height=.142\textwidth]{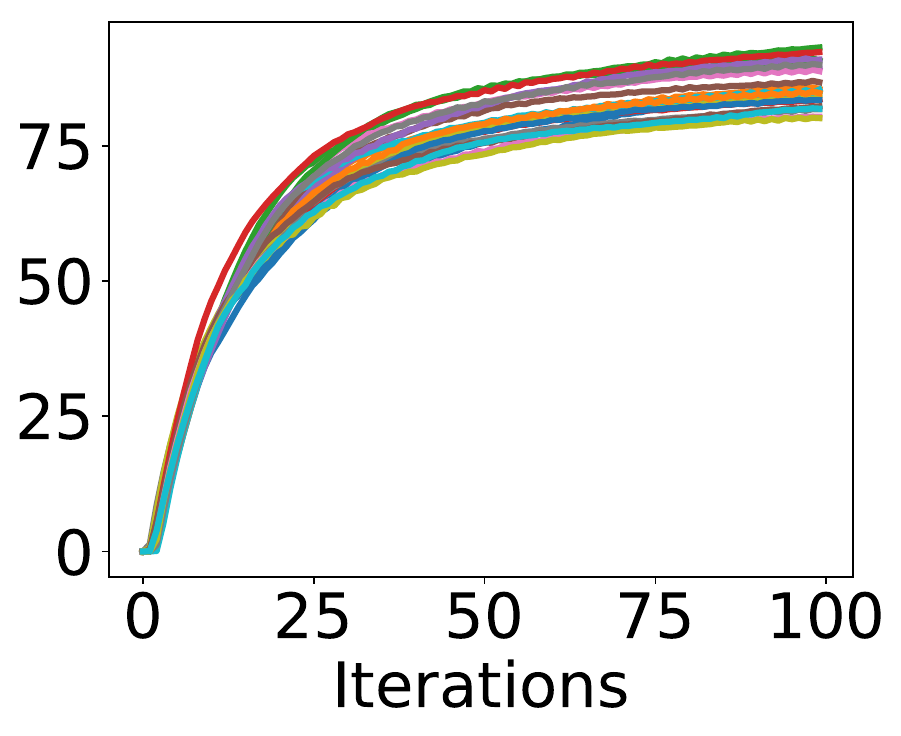}\\
\includegraphics[height=.142\textwidth]{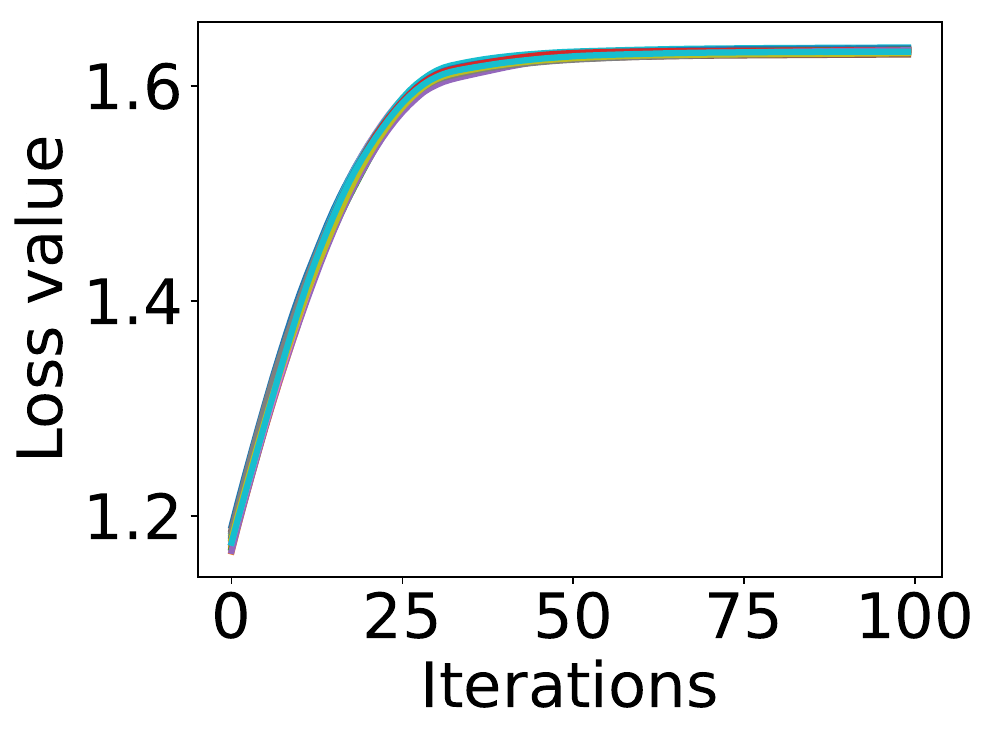} &
\includegraphics[height=.142\textwidth]{figures/amakelov/cifar10_wide_fgsm_k_trained/loss_progress_1.pdf} &
\includegraphics[height=.142\textwidth]{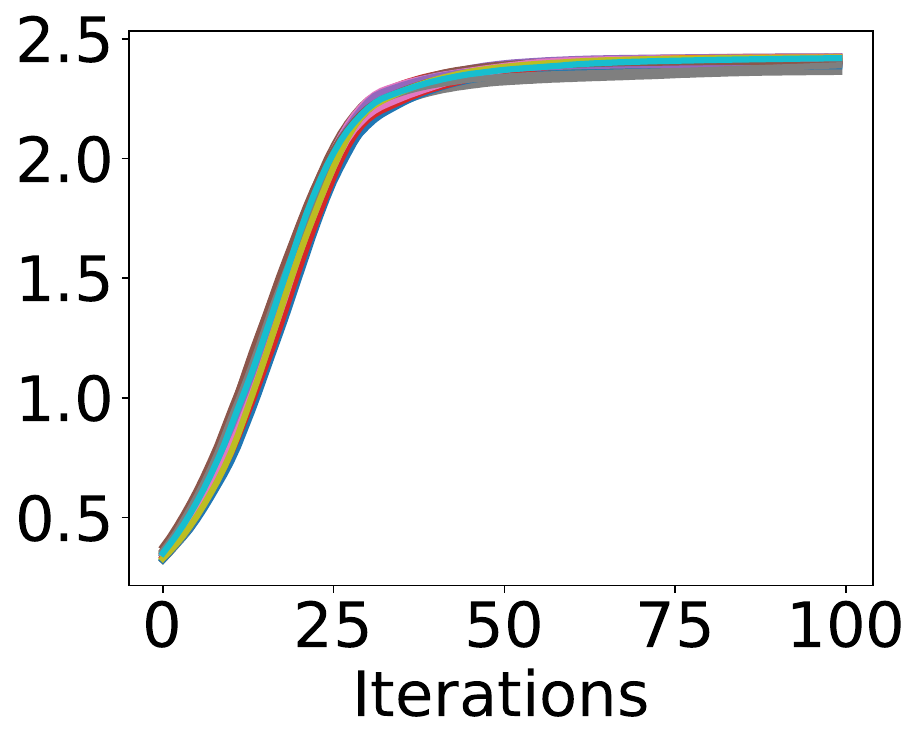} &
\includegraphics[height=.142\textwidth]{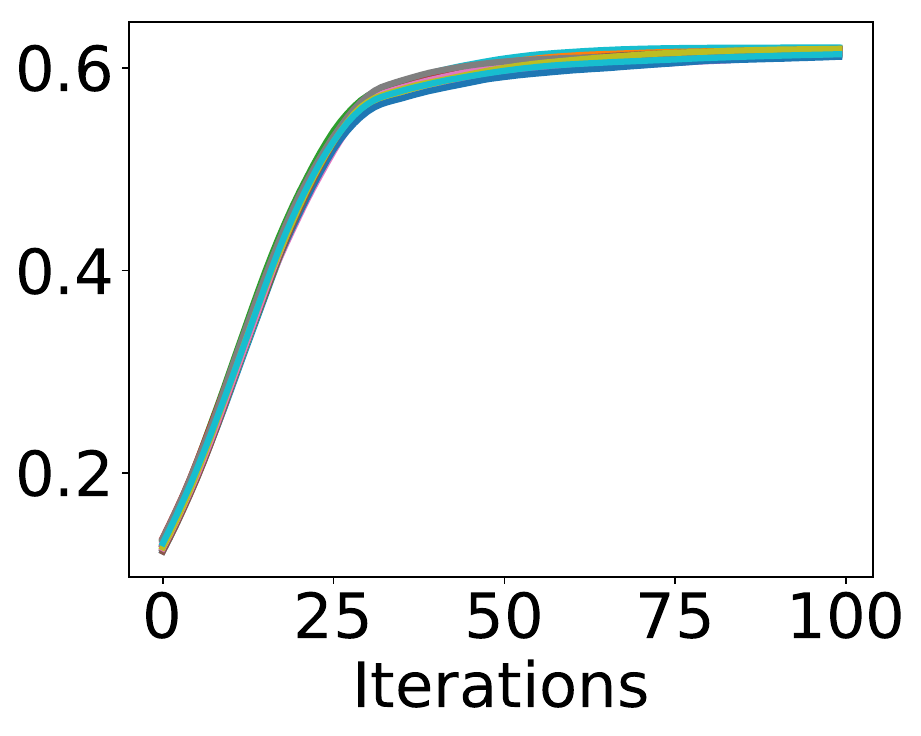} &
\includegraphics[height=.142\textwidth]{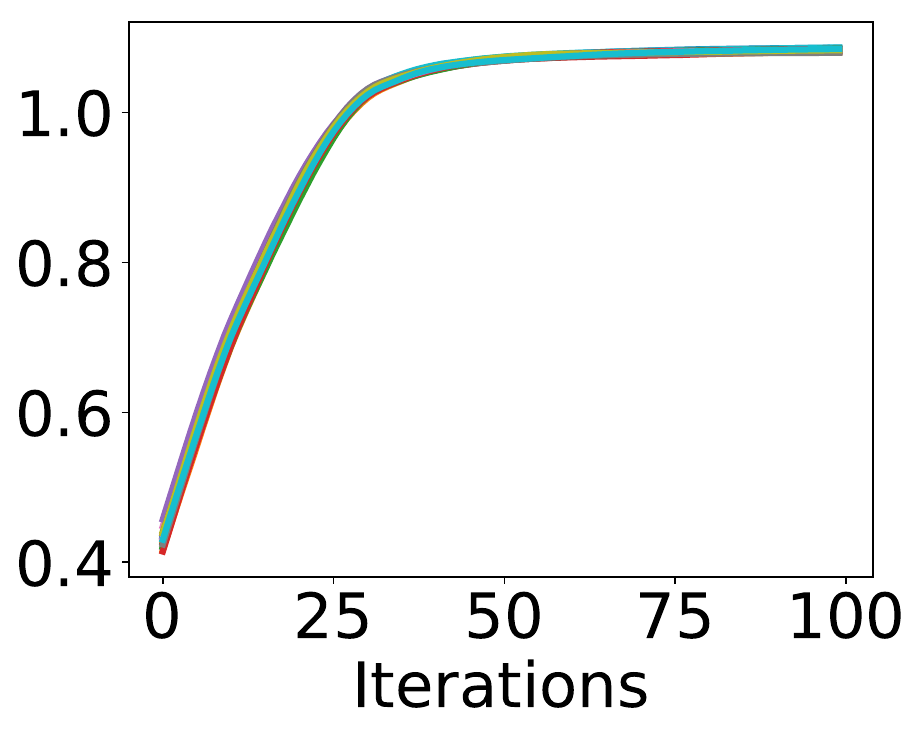}
\end{tabular}}
\end{center}
\caption{Loss function value over PGD iterations for 20 random restarts on
random examples. The 1st and 3rd rows correspond to standard networks, while the 2nd and 4th
to adversarially trained ones.}
\label{fig:mnist_progress_appendix}
\end{figure}

\begin{figure}[htb]
\begin{center}
{\setlength\tabcolsep{.2cm}
\begin{tabular}{c c c c c}
\includegraphics[height=.3\textwidth]{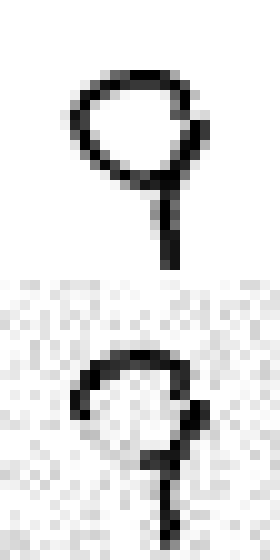} &
\includegraphics[height=.3\textwidth]{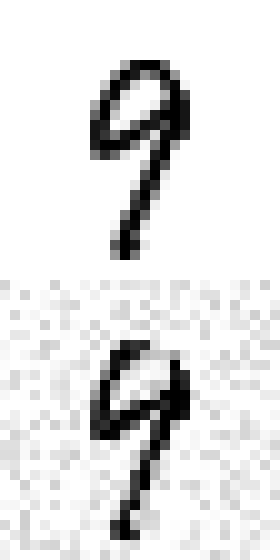} &
\includegraphics[height=.3\textwidth]{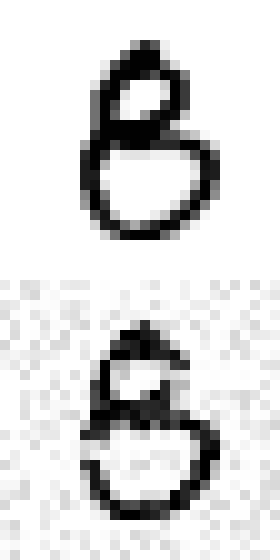} &
\includegraphics[height=.3\textwidth]{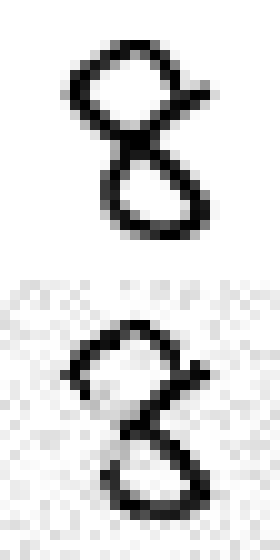} &
\includegraphics[height=.3\textwidth]{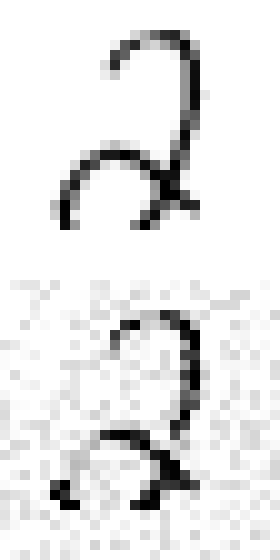} \\
Natural: 9 & Natural: 9 & Natural: 8 & Natural: 8 & Natural: 2 \\
Adversarial: 7 & Adversarial: 4 & Adversarial: 5 & Adversarial: 3 & Adversarial: 3 \\
\end{tabular}}
\end{center}
\caption{Sample adversarial examples with $\ell_2$ norm bounded by $4$. The
perturbations are significant enough to cause misclassification by humans too.}
\label{fig:l2_mnist_examples}
\end{figure}

\end{document}